\title{%PD Decomposition:
Structured Sparse Transition Matrices\\to Enable State Tracking in State-Space Models}
\author{
Aleksandar Terzi\'c$^{1,2}$\thanks{Equal contribution. %Aleksandar Terzić took the project lead and carried out the experiments. Nicolas Menet introduced the structured sparse parametrization of transition matrices and developed the theory.
}\\
{\tt\small aleksandar.terzic1@ibm.com}\\
\And
Nicolas Menet$^{1,2}$\footnotemark[1]\\
{\tt\small nicolas.menet@ibm.com}\\
\AND
Michael Hersche$^{1}$\\
{\tt\small michael.hersche@ibm.com}\\
\And
Thomas Hofmann$^{2}$\\
{\tt\small thomas.hofmann@inf.ethz.ch}\\
\And
Abbas Rahimi$^{1}$\\
{\tt\small abr@zurich.ibm.com}
\And
{
\normalfont $^{1}$IBM Research -- Zurich, $^{2}$Department of Computer Science, ETH Zürich}
}
\newif\ifdetailed
\newtheorem{defn}{Definition}
\newtheorem{prop}{Proposition}
\newtheorem{lem}{Lemma}
\newtheorem{example}{Example}
\newtheorem{theorem}{Theorem}
\newtheorem{property}{Property}
\newenvironment{manualprop}[1]{%
  \IfBlankTF{#1}
    {}
    {}%
  \manualpropinner
}{\endmanualpropinner}
\newenvironment{manualproper}[1]{%
  \IfBlankTF{#1}
    {}
    {}%
  \manualproperinner
}{\endmanualproperinner}
\NewDocumentCommand{\incplt}{O{\columnwidth}m}{%
  \begin{center}
    \adjustbox{width=#1}{\import{./graphics/}{#2.pgf}}
 \end{center}
}
\pgfplotsset{compat=1.18}
\definecolor{first}{RGB}{34,139,34}
\definecolor{second}{RGB}{0,0,255}   % blue
\definecolor{third}{RGB}{148,0,211}  % violet
\titlespacing*{\section}{0pt}{1ex plus .5ex minus .2ex}{0.8ex}
\titlespacing*{\subsection}{0pt}{0.8ex plus .3ex minus .2ex}{0.5ex}
\newcommand{\name}{PD-SSM\xspace}
\begin{document}

\maketitle

\begin{abstract}

%%%%%%%%%%%%%%%%%%%%%%%%%
%%%%%% Introduction %%%%%
%%%%%%%%%%%%%%%%%%%%%%%%%
%State-space models (SSMs) offer a scalable alternative to the Transformer.
%
%To ensure high computational efficiency, most modern state-space models (SSMs) rely on diagonal transition matrices,
%
%While sufficient for linear time-invariant SSMs, 
%However, diagonal transition matrices
%which poses significant restrictions on their state-tracking abilities. %of time-variant SSMs.
Modern state-space models (SSMs) often utilize structured transition matrices which enable efficient computation but pose restrictions on the model's expressivity, as measured in terms of the ability to emulate finite-state automata (FSA).
While unstructured transition matrices are optimal in terms of expressivity, they come at a prohibitively high compute and memory cost, even for moderate state sizes.
%%%%%%%%%%%%%%%%%%%%%%%%
%%%%%% Method %%%%%%%%%%
%%%%%%%%%%%%%%%%%%%%%%%%
We propose a structured sparse parametrization of transition matrices in SSMs that enables FSA state tracking with provably optimal state size and depth, while keeping the computational cost of the recurrence comparable to that of diagonal SSMs.
Our method, \emph{PD-SSM}, parametrizes the transition matrix as the product of a column one-hot matrix ($P$) and a complex-valued diagonal matrix ($D$). 
As a result, the computational cost of parallel scans scales linearly with the state size.
%
%%%%%%%%%%%%%%%%%%%%%
%%%%%% Results %%%%%%
%%%%%%%%%%%%%%%%%%%%%
Theoretically, the model is BIBO-stable and can emulate any $N$-state FSA with one layer of dimension $N$ and a linear readout of size $N \times N$, significantly improving on all current structured SSM guarantees.
Experimentally, the model significantly outperforms a wide collection of modern SSM variants on various FSA state tracking tasks.
On multivariate time-series classification, it outperforms neural controlled differential equations, a paradigm explicitly built for time-series analysis.
Finally, we integrate PD-SSM into a hybrid Transformer-SSM architecture and demonstrate that the model can effectively track the states of a complex FSA in which transitions are encoded into sets of variable-length English sentences.
The code is available at \href{https://github.com/IBM/expressive-sparse-state-space-model}{https://github.com/IBM/expressive-sparse-state-space-model}.
%
%Empirically, our model learns to track the states of FSAs of various complexities, even if the state transitions are embedded within variable-length natural language sequences, by integrating it into a hybrid Transformer-SSM architecture. 
%

\end{abstract}

% ===== INCLUDE SECTION FILES =====
\section{Introduction}

The Transformer \citep{vaswani_attention_2017} marked a paradigm shift in machine learning, providing a unified yet versatile architecture with strong scalability. 
%~\citep{brown2020language, bubeck_sparks_2023, hoffmann2022training}.
%
However, experimental and theoretical results indicate that Transformers struggle in algorithmic state tracking tasks such as finite-state automaton (FSA) emulation~\citep{hahn_2020_theoretical, bhattamishra_ability_2020, merrill_parallelism_2023, deletang_neural_2023, liu_transformers_2023, strobl_2024_survey}. 
%, a core feature of language models that generalize to out-of-distribution tasks. 
Moreover, the inherent quadratic cost of the attention mechanism hinders the training on very long sequences, and its linearly growing key/value cache can lead to memory issues during inference.
%
% In particular for small-scale models where the cost of attention dominates
%The modern world was given, by chance or predetermination, the privilege of being a witness to the rise of large language models. LLMs are most often based on the neural network coined \emph{the Transformer}~\cite{vaswani_attention_2017}.
%
%This is, however, not the only neural network which we may use as the backbone of an LLM.
%
%The Transformer's core competitors, or rather compatriots, are \emph{State-Space Models} (SSMs), the topic of this work.
% \[
% \mathbf{A(u_t)} = 
% \underbrace{\begin{bmatrix}
% 1 & 0 & 0 \\
% 0 & 0 & 1\\
% 0 & 1 & 0
% \end{bmatrix}}_{\mathbf{P(u_t)}}
% \underbrace{
% \begin{bmatrix}
% z_1(u_t) & 0 & 0 \\
% 0 & z_2(u_t) & 0\\
% 0 & 0 & z_3(u_t)
% \end{bmatrix}}_{\mathbf{D(u_t)}}=
% \begin{bmatrix}
% z_1(u_t) & 0 & 0 \\
% 0 & 0 & z_3(u_t)\\
% 0 & z_2(u_t) & 0
% \end{bmatrix}
% \]

In contrast, \emph{state-space models} (SSMs)~\citep{gu_efficiently_2022, gupta_diagonal_2022, fu_hungry_2023, smith_simplified_2023, orvieto_resurrecting_2023, gu_mamba_2023} offer a scalable alternative to Transformers with complementary properties: they generalize better to longer sequences, the number of total operations scales linearly with the sequence length, and memory is constant during inference.
%State-Space Models offer certain benefits over the Transformer. They can exploit modern SIMD hardware with a computational cost that inherently scales linearly with the sequence length, compared to the quadratic scaling of the Transformer~\cite{smith_simplified_2023}.
%
%While asymptotic cost analyses need not always be representative of actual computational effort, in this case there exists experimental evidence that speaks in favor of SSMs~\cite{gu_mamba_2023}. 
%
%A complimentary benefit comes from the fact that SSMs compress the sequence that they process into a vector of fixed size. This detaches the cost of sampling from the probability distribution \( P(x_t|x_{\leq t}) \)  from the sequence length. \textcolor{orange}{In the Transformer, the generation cost grows linearly with the length of the history.}
%
%\textcolor{red}{This paragraph will go somewhere else, probably related work.} 
% SSMs do, however, come with their own set of disadvantages. The compressed representation of the sequence history in a single state vector due to lacking an attention mechanism renders retrieval tasks such as copying more difficult~\citep{jelassi_repeat_2024}. As a result, modern large language models (LLMs) are increasingly adopting a hybrid architecture with both Transformer layers and SSM layers~\citep{de_griffin_2024}.
Modern large language models (LLMs) are increasingly adopting a hybrid architecture that integrates both Transformer and SSM layers~\citep{yang2024parallelizing, ren_2024_samba, de_griffin_2024, waleffe_2024_empirical, lenz_2025_jamba, wu2025transxssm}. 
%
%This approach allows for applying quadratic attention on small chunks of the sequence, while SSM layers provide access to more distant history. In terms of their algorithmic expressivity, the two types of layers have complementary properties~\cite{jelassi_repeat_2024, gu_mamba_2023}.
%
%This approach leverages SSM's strengths in state-tracking, generalization, and linear scaling, while also benefiting from Transformer's long-range coverage, which is essential for retrieval tasks such as copying~\citep{jelassi_repeat_2024}.
%
To achieve computational efficiency competitive with that of Transformers, modern SSMs often utilize diagonal transition matrices. Empirically, in time-invariant SSMs, this strategy has proven effective in a range of applications~\citep{gupta_diagonal_2022, gu_parameterization_2022, orvieto_resurrecting_2023}, while time-varying diagonal SSMs can achieve language modeling performance comparable to that of the Transformer~\citep{gu_mamba_2023}.
%
%Theoretically, time-invariant (fixed) diagonal transition matrices suffice to guarantee universal approximation to any regular causal sequence-to-sequence map~\citep{orvieto_universality_2024}.
%, which corresponds to XYZ language.
%
%Additionally, by making the state-transition matrices input-dependent (i.e., time-variant), diagonal SSMs were shown to achieve language modeling performance comparable to that of the Transformer~\citep{gu_mamba_2023}.
%

Despite their promising performance in language modeling, diagonal SSMs are restricted in the type of FSA they can emulate~\citep{merrill_illusion_2024}. 
This stands in stark contrast with the nonlinear RNN, which can quickly learn such algorithmic abstractions~\citep{deletang_neural_2023}.
%
%In order to enable state-tracking over arbitrary FSAs using a single SSM layer,~\cite{terzic2025sdssm} proposes a selective dense SSM (SD-SSM) that relaxes the transition matrix from diagonal to unstructured dense. 
%
To enable FSA \emph{state tracking}, previous approaches propose using transition matrices of various structures such as fully unstructured (i.e., dense) ~\citep{terzic2025sdssm}, or (semi-)structured matrices such as block-diagonal~\citep{fan_advancing_2024}, diagonal plus low-rank (DPLR)~\citep{schlag2021linear, yang2024parallelizing, walker_2025_structured, peng_2025_rwkv} or products of DPLR matrices~\cite{siems_2025_deltaproduct}.
While any $N$-state FSA can be represented by an unstructured SSM with a single layer, state dimensionality $N$, and a readout size $N \times N$, unstructured SSMs scale unfavorably and thus prohibit large-scale training. Current (semi-)structured approaches are more efficient, but do not allow for such compact encodings.
%
%While SD-SSM effectively learns to emulate arbitrarily complex FSAs and generalizes to sequences over $10\times$ longer than those seen during training, its unstructured transition matrices incur a significant computational and memory cost compared to diagonal matrices and therefore hinder training at scale. 
%
%A larger class UVW of languages can be modelled by adopting state-space models with context-dependent (i.e., time-variant) diagonal state transition matrices. Experimentally, this increase in expressivity has been shown to lead to stronger capabilities for natural language modelling~\citep{gu_mamba_2023}.
%
%Full state-tracking, however, would require the ability to model regular languages, which is equivalent to tracking arbitrary finite-state automata.
%However, we seem to currently find ourselves in the era of \emph{time-variant} SSMs~\cite{gu_mamba_2023}, and as recent studies have shown, utilizing diagonal transition matrices prohibits such models from representing a wide range of filters as well as algorithms~\cite{orvieto_universality_2024, merrill_illusion_2024}. 
%
%\textcolor{red}{... A sentence setting the focus on FSAs ...}
%
With this in mind, the goal of this work can be summarized as follows:

\begin{center}
\textit{We aim to enable single-layer time-varying SSMs with state size $N$ and readout size $N \times N$ to represent any FSA with $N$ states while keeping the cost comparable to that of diagonal SSMs.}
\end{center}

To this end, we propose a novel structured sparse matrix parametrization of SSM transition matrices, denoted as \emph{PD} parametrization, defined as a product of a binary column-one-hot matrix ($P$) and a complex-valued diagonal matrix ($D$).
A \emph{PD} parametrization of transition matrices is preserved under multiplication, meaning that the product of \emph{PD} matrices is still a \emph{PD} matrix. As such, a chained product of $L$ \emph{PD} matrices can be computed using parallel scans~\citep{blelloch_prex_1990, martin_parallelizing_2018, smith_simplified_2023} in $\Theta(L N)$ concurrent operations and $\Theta(L N)$ memory, matching the complexity and concurrency of a chain of diagonal matrix products.
%
%That being said, our approach does not match the efficacy of diagonal SSMs in practice, but rather offers a trade-off between the expressiveness of dense SSMs and the efficiency of diagonal ones.
%
In terms of expressivity, the \emph{PD} matrices encompass the commonly used diagonal transition matrices, but can also represent transition functions of arbitrary $N$-state automata. %Indeed, PD matrices are the natural relaxation of generalized permutation matrices, the group normalizer of diagonal matrices, to monoids. Thus they uniquely allow collection and simplification of diagonal terms through conjugation, an essential property that enables fast chained matrix multiplication via parallel scans.
Our contributions are as follows:
\begin{itemize}
    \item We propose \name, a time-varying SSM that can emulate any $N$-state FSA with a single layer, state dimension $N$, and a linear readout of size $N \times N$. Theoretically, we prove that \name{} is stable and that it achieves universal FSA emulation with (almost) the least possible state size.
    
    \item %We provide a custom implementation of the backward pass that is over $15\times$ faster than a na\"ive implementation. 
    Empirically, we demonstrate that \name learns to track the states of complex automata by exhibiting state-of-the-art length generalization. We furthermore test our model on long-range multivariate time-series classification tasks and observe that it reaches state-of-the-art performance, even surpassing neural controlled differential equations. 
    %Moreover, evaluation on long-range arena (LRA) indicates a new state-of-the-art average accuracy among time-variant SSMs, substantiating the importance of state tracking capabilities for diverse real-world tasks.
    
    \item We provide a novel benchmark %for \emph{state tracking in natural language}, 
    in which each FSA transition is redundantly encoded by meaningful English sentences. Adopting a hybrid Transformer-SSM architecture, we show that integrating \name enables the model to emulate the underlying FSA and to generalize to longer sequences, in contrast to diagonal SSMs which fail on in-domain sequences. \looseness -1
\end{itemize}

%Most modern approaches to state-space models use diagonal transition matrices, but as many have already observed, this is a significant restriction on their expressiveness. Diagonal matrices certainly cannot represent very many interesting operators. We can increase the expressiveness substantially while keeping the same asymptotic cost as diagonal state-space models (although in practice, we are still slower). We do this by sampling highly sparse input-dependent transition matrices.
\section{Background}

\subsection{State-Space Models}\label{sec:background_ssms}

By \emph{state-space models} (SSMs), we denote neural networks that utilize the recurrence equations 
\begin{align} \label{eq_slssm}
x_{t} &= A(u_t) x_{t-1} + B (u_t) u_t \nonumber\\
y_{t} &= C(u_t)x_{t} + D(u_t)u_t \\\nonumber
o_{t} &= \psi(y_t)\nonumber
\end{align}
with $u_t \in \mathbb R^D$ the input to the system, $x_t \in \mathbb C^N$ its (hidden) state, $y_t \in \mathbb C^D$ the complex-valued output, and $o_t \in \mathbb R^D$ the real-valued output. $A(u_t) \in \mathbb{C}^{N \times N}$ is the state transition matrix, $B(u_t) \in \mathbb{C}^{D\times N}$ and $C(u_t) \in \mathbb{C}^{D\times N}$ map from embeddings to states and vice versa, and $D(u_t) \in \mathbb{C}^{D\times D}$ is a memoryless skip connection. Finally, $\psi:\mathbb C^D \to \mathbb R^D$ extracts a real-valued embedding from a complex-valued embedding, often implemented as $\psi(y_t)=Re\{y_t\}$~\citep{gu_parameterization_2022, gupta_diagonal_2022, orvieto_resurrecting_2023}.
%Typically, $b_t(\cdot), c(\cdot)$ and $d(\cdot)$ are affine functions. 
The time-variance of the system stems from the time dependence of $A(u_t), B(u_t), C(u_t), D(u_t)$. In this work we only consider time dependence in the $A(u_t)$ matrix, this being the crucial differentiator between time-varying and time-invariant SSMs~\citep{gu_mamba_2023, merrill_illusion_2024}.  We thus fix $B(u_t)=B, C(u_t)=C, D(u_t)=D$.
\setcounter{footnote}{0}

If the system is time-invariant, i.e., $A(u_t)=A$, then its dynamics can be equivalently represented by a diagonal transition matrix up to an arbitrarily small perturbation of the entries of $A$~\citep{orvieto_resurrecting_2023, axler_linear_2024}.\footnote{$\forall \varepsilon > 0 , A \in \mathbb{R}^{N\times N} \ \exists A_\varepsilon \in \mathbb{R}^{N \times N} \text{ s.t. } A_\varepsilon=W\Lambda W^{-1} \text{ with diagonal }\Lambda \in \mathbb{C}^{N \times N} \text{ and } \|A - A_\varepsilon\|_F < \varepsilon.$}
%This is a consequence of the fact that \textcolor{red}{diagonalizable matrices are dense in the set of square matrices~\cite{axler_linear_2024}.} 
%
Indeed, writing $A=W\Lambda W^{-1}$ results, upon change of basis $\tilde{x}_t:=W^{-1}x_t$, in an equivalent state evolution $\tilde{x}_t=\Lambda\tilde{x}_t+W^{-1}Bu_t$ but with a diagonal state transition matrix $\Lambda$. % results in the recurrence $x_t=W\Lambda W^{-1}x_{t-1}+Bu_t$. We can define $\tilde{x}_t:=W^{-1}x_t$, the evolution of which obeys $\tilde{x}_t=\Lambda\tilde{x}_t+W^{-1}Bu_t$ with a diagonal $\Lambda$.
%
%Such diagonalization might not be stable in the sense of the norm of $W$, but it suffices for the conceptual argument~\cite{gu_efficiently_2022}.

In contrast, if $A(u_t)$ is a function of time, then the system admits a diagonal representation if and only if all $A(u_t)$ are diagonalizable \emph{under the same change of basis}. That is, there must exist a single $W\in\mathbb{C}^{N\times N}$ such that for all $t$, $A(u_t)=W\Lambda_tW^{-1}$. This is a much more restrictive condition. It can in fact only be fulfilled if the matrix product commutes for all state transition matrices occurring over time, i.e., $\forall i\neq j,  A(u_i)A(u_j)=A(u_j)A(u_i)$~\citep{axler_linear_2024}.
%, explaining the improved length generalization of diagonal SSMs on commutative automata observed by \cite{terzic2025sdssm}.
%
This fact hints at the restrictiveness of diagonal transition matrices, which is expanded upon in an alternative framework in Section~\ref{sec:background_limitations}

\begin{table}[t]
\centering
\resizebox{\textwidth}{!}{% scale to fit page width
\begin{tabular}{rcccc}
\toprule
Matrix Structure          & Example Models  &   Solvable                &            Non-Solvable           & Cost \\
%& Model & Automata & Automata   &  Cost \\
\cmidrule(r){1-1} \cmidrule(r){2-4} \cmidrule(r){4-5}

$\mathbb{R}$ Diagonal &   Mamba, S6, S7, GLA, mLSTM   & \textcolor{Black}{\ding{55}}               &            \textcolor{Black}{\ding{55}}   &    \textcolor{Black}{$\Theta(LN)$}       \\ \cmidrule(r){1-1} \cmidrule(r){2-4} \cmidrule(r){4-5}

$\mathbb{C}$ Diagonal &  PD-SSM with $P=\mathbb{I}_N$ (Ours)  &  \textcolor{Black}{\checkmark}*                &    \textcolor{Black}{\ding{55}}       & \textcolor{Black}{$\Theta(LN)$}             \\ \cmidrule(r){1-1} \cmidrule(r){2-4} \cmidrule(r){4-5}

DPLR &   (Gated) DeltaNet, DeltaProduct, RWKV-7    &  \textcolor{Black}{\checkmark}*                &    \textcolor{Black}{\checkmark}*       & \textcolor{Black}{$\Theta(L^2N)$}             \\ \cmidrule(r){1-1} \cmidrule(r){2-4} \cmidrule(r){4-5}

$\mathbb{R}$ Dense    &   SD-SSM &     \textcolor{Black}{\checkmark}             &          \textcolor{Black}{\checkmark}                  &   \textcolor{Black}{$\Theta(LN^3)$}       \\ \cmidrule(r){1-1} \cmidrule(r){2-4} \cmidrule(r){4-5}

\textbf{Structured Sparse}     &  \textbf{PD-SSM (Ours)} &   \textcolor{Black}{\checkmark}               &      \textcolor{Black}{\checkmark}   & \textcolor{Black}{$\Theta(LN)$}            \\ 
\bottomrule    
\end{tabular}
}
\vspace{0.5em}
\caption{The structure of the transition matrix $A(u_t)$ in time-varying SSMs determines the class of FSA they can emulate. We consider two classes of automata, partitioned based on the solvability of their transformation group. \textcolor{Black}{\checkmark} indicates that the structure enables the emulation of any $N$-state FSA from the partition using one layer, state dimension $N$ and a linear readout of dimension $N\times N$. \textcolor{Black}{\checkmark}* indicates that the matrix structure can emulate any automaton from the partition, but at potentially high dimension or depth, and in case of RWKV-7 exponentially large linear layers~\cite{siems_2025_deltaproduct}. The cost only considers the complexity of the parallel computation of~$x_t$. Modern DPLR methods trade off FLOPs for increased parallelism and better memory management. The recurrent complexity is $\Theta(LN^2)$, but modern methods utilize algorithms that scale quadratically in $L$~\citep{yang2024parallelizing}.
}
\label{tab:ssm_fsa_tracking_expressivity}
\end{table}

\subsection{Modeling Finite-State Automata with SSMs}\label{sec:background_mapping}

%We have already hinted at the reason diagonal SSMs are insufficient to model arbitrary time-variant systems. To arrive at a more complete picture, we take the lens of deterministic finite-state automata (FSAs).
In this work, we measure the expressivity of an SSM by considering the type of deterministic finite-state automaton (FSA) it can emulate.
A deterministic FSA is an abstract model of computation defined as a 5-tuple $(Q, \Sigma, \delta, q_{\text{init}}, F)$ where $Q$ is a finite set of states, $\Sigma$ is a finite input alphabet, $\delta : Q \times \Sigma \rightarrow Q$ is the state transition function, $q_{\text{init}} \in Q$ is a fixed initial state, and $F \subseteq Q$ is the set of accepting states. 
%
%In this work, we are not interested in the set $F$, and $q_{\text{init}}$ can be any state. This leads us to the definition of a semi-automaton, which is a 3-tuple $(Q, \Sigma, \delta)$ with $Q$, $\Sigma$, and $\delta$ defined as above. 
%
%

Any (deterministic) FSA can be mapped to a time-variant SSM as follows.
%
% To see this, we show a mapping procedure that is conceptually equivalent to those of ~\cite{merrill_illusion_2024, liu_transformers_2023}. 
%
Encode each $q \in Q$ using $enc: Q \rightarrow\mathbb{R}^{|Q|}$ such that the encodings of different states are orthogonal. Note that orthogonality is a sufficient but not necessary condition for mapping an FSA to a selective SSM, as certain automata such as modular counters allow alternative, more compact, mappings to SSMs.
Given such an orthogonal encoding of states, we can map the state transition function $\delta : Q \times \Sigma \rightarrow Q$ to the state transition matrices $A(u_t)$ % from Eq.~\eqref{eq_slssm} %. 
%
%In this mapping, each symbol in the input alphabet $\sigma \in \Sigma$ has an associated transition matrix $A(\sigma)$ defined by the transition function $\delta(\cdot, \sigma): Q \rightarrow Q$.
%
%The mapping between this function and $A(\sigma)$ is defined 
via $A(\sigma)={\scriptstyle\sum\nolimits}_{q \in Q}\ enc(\delta(q, \sigma))\cdot enc(q)^T$,
%
%We now have all the ingredients needed to map any FSA to Eq.~\eqref{eq_slssm}. 
%
%This is achieved by setting $x_0 = q_{init}$, identifying the inputs $u_t$ as elements of the alphabet $\Sigma$ and thus setting $A(u_t) = A(\sigma)$ as above, and setting $B_t=0$. By induction, one can see that $x_t = enc(q_t)$ with $q_t$ achieved by $t$-fold repeated application of the transition function $\delta$ onto $q_{init}$ given a sequence of inputs $(\sigma_1 , ..., \sigma_t)$. 
set $B =0$, $C=\mathbb{I}$, $D = 0$, and $\psi = id$. Upon identification of $q$ with $enc(q)$ and $q_{\text{init}}$ with $x_0$, the SSM matches the FSA exactly.

\subsection{Limitations of Time-Variant SSMs for FSA State Tracking}\label{sec:background_limitations}

%All of the terminology used here is defined in appendix A.
%
To demonstrate the limitations of various SSMs structures for state tracking in automata, it suffices to consider automata with group structure in their state transitions. The transformation group~\citep{straubing_book} of such an automaton is then the algebraic group of invertible state-to-state mappings with function composition as a binary operation\footnote{A more complete background on algebra and results on SSM expressivity is provided in Appendix~\ref{app:background}.}.
%
%We classify such automata based on discriminative properties of their transformation groups, namely their commutativity and solvability. 
%
For conciseness, we equate such automata with their transformation group, thus saying, for instance, \emph{solvable automaton.}

Through a circuit complexity argument, \cite{merrill_illusion_2024} showed that non-solvable automata cannot be emulated using bounded-depth logarithmic precision diagonal time-variant SSMs, providing an upper bound on their expressivity. Through an explicit construction~\cite{sarrof2024expressivecapacitystatespace} demonstrate that all solvable automata can be emulated using finite-precision complex-valued diagonal time-variant SSMs, although the depth of the SSM stack exhibits a non-trivial dependence on the complexity of the FSA. 
Concretely, the SSM stack depth is proportional to the Krohn-Rhodes complexity of the transformation group~\cite{margolis_2024_decidability}.
A more recent family of time-variant SSM models utilizes diagonal plus low-rank (DPLR) transition matrices~\cite{grazzi2024unlocking,peng_2025_rwkv, siems_2025_deltaproduct}. While such matrices enable the emulation of any FSA, the necessary model depth is either a function of the automaton's structure, or the model requires exponentially large linear layers~\cite{siems_2025_deltaproduct}.
In contrast, unstructured SSMs~\cite{terzic2025sdssm, walker_2025_structured} enable the emulation of any $N$-state FSA using a single layer, a state size of $N$ and a readout size $N \times N$. However, they scale poorly with the state size. 
For an overview of the results, see Table~\ref{tab:ssm_fsa_tracking_expressivity}.

\subsection{Associative Scan for Fully Parallelizable Recurrence in SSMs}\label{sec:background_pscan}

We use parallel scans to concurrently compute the states of an SSM~\citep{blelloch_prex_1990, martin_parallelizing_2018,  smith_simplified_2023}.
Suppose a binary associative operator $\bullet$, i.e., $(a \bullet b) \bullet c = a \bullet (b \bullet c)$,
and a sequence of $L$ elements $[a_1, a_2, \ldots, a_L]$.
The scan operation (sometimes referred to as \emph{all-prefix-sum}) then returns the sequence
$
[a_1, (a_1 \bullet a_2), \ldots, (a_1 \bullet a_2 \bullet \cdots \bullet a_L)].
$
Note that composition of the linear recurrence of a (time-variant) SSM $x_t = A_t x_{t-1} + b_t$ takes precisely the form of a parallel scan $(A_{t+1}, b_{t+1}) \bullet (A_t, b_t) \mapsto (A_{t+1} A_t, A_{t+1} b_t + b_{t+1})$ with prefix elements $a_t = (A_t, b_t)$. According to~\citeauthor{blelloch_prex_1990} (\citeyear{blelloch_prex_1990}, Section~1.4), associative scans can be computed in $\Theta(T_\odot\, L)$ total operations (i.e., in work-efficient manner) with $\Theta(\log L)$ sequential steps, where $T_\odot$ represents the cost of matrix-matrix multiplication. For dense matrices $A_t \in \mathbb{R}^{N \times N}$,
$T_\odot = O(N^3)$ and thus the associative scan quickly becomes prohibitively expensive
in deep learning settings. In contrast, if $A_t$ are diagonal matrices, $T_\odot = O(N)$. In terms of memory, parallel scans with dense matrices use $\Theta(N^2 L)$ memory, whereas diagonal matrices lead to a $\Theta(N L)$ memory consumption.
\section{PD Parametrization for Efficient and Expressive Transition Matrices}

%The model uses a parametrization of transition matrices into a diagonal and a column-stochastic factor. During training and processing of offline sequences, the computational complexity of one step of the parallel scan is $O(D)$, as opposed to $O(D^3)$ with dense transition matrices. This is the result of the fact that the per-input transition matrices have \textbf{one-hot columns}. We state the following without proof:
%\begin{prop}[Matrix multiplication of one-hot column matrices is $O(N)$]
%\end{prop}
%

Prior research indicates that the expressiveness of time-varying SSMs can be significantly enhanced by relaxing the structural constraints of the transition matrix $A(u_t)$ to allow for non-diagonal structures. 
%\newpage
\citep{merrill_illusion_2024, cirone_theoretical_2024,terzic2025sdssm}. 
As noted by \cite{terzic2025sdssm}, a relaxation to arbitrary matrices prohibitively increases the compute cost of parallel scans from $\Theta(LN)$ to $\Theta(LN^3)$.
To circumvent these limitations, we propose to parameterize the transition matrices as the product $A(u_t)=P(u_t)D(u_t)$, where $P(u_t)$ is a binary matrix in which each column has a single non-zero element, and $D(u_t)$ is a complex-valued diagonal matrix. 

Figure~\ref{fig:model-sketch} presents our architecture for generating transition matrices with PD parametrization.
The diagonal matrices are generated by two nonlinear feed-forward neural networks, each with a single hidden layer employing the GeLU nonlinearity $\sigma_{\text{gelu}}$ and a saturating nonlinearity in the form of the sigmoid $\sigma$. The two networks define a magnitude generator $|D(u_t)|$ and a phase generator $\phi(D(u_t))$:
\begin{align*}
    |D(u_t)| &= \sigma(W^{M}_o (\sigma_{\text{gelu}}(W_{i}^{M}u_t + b_i^M) +b_o^M)) \in (0,1)^N\\
    \phi(D(u_t)) &= 2 \pi \sigma(W^{P}_o (\sigma_{\text{gelu}}(W_{i}^{P}u_t + b_i^P) + b_o^P)) \in (0,2\pi)^N
\end{align*}
% \begin{figure}[h]
%     \centering
%     \begin{align} 
%     & |D(u_t)| =\sigma_{\text{sigmoid}}(W^{M}_o (\sigma_{\text{gelu}}(W_{i}^{M}u_t) ) \\
%     & \phi(D(u_t)) =2 \pi \sigma_{\text{tanh}}(W^{P}_o (\sigma_{\text{gelu}}(W_{i}^{P}u_t))
%     \end{align}
%     \caption{The phases and magnitudes of the input-dependent transition matrices are generated by the two feed-forward networks defined above. For clarity, bias terms are omitted.}
%     \label{fig:my_label}
% \end{figure}
%

Concerning the column one-hot matrix $P$, the input $u_t$ generates the weights $s(u_t)$, which are used to soft-select among a set of trainable transition matrices, written as the dictionary $\{M_i \in \mathbb{R}^{N\times N}\}_{i\in[K]}$ with $K$ being a hyperparameter. 
Sparsity of $P$ is achieved by applying a column-wise hardmax after soft-selection of the state transition. More formally:
\begin{align*} 
        s(u_t)&=\text{softmax}(Su_t)\in\Delta^{K-1}  \\
        M(u_t)&=\sum_{k=1}^K s_k(u_t)M_k\in\mathbb{R}^{N\times N} \\
        P_{:,j}(u_t)&=\text{hardmax}(M_{:,j}(u_t))\in\{0,1\}^N \text{ where } \text{hardmax(x)}_{i} := \delta_{i, \arg\max_j x_j}
\end{align*}

The parametrization is motivated by~\cite{terzic2025sdssm}, which has shown that models utilizing normalized~\cite{fan_advancing_2024} convex combinations of dense transition matrices can achieve perfect length generalization on FSA emulation tasks and are parameter-efficient compared to alternative proposals such as e.g.~\citep{hasani2023liquid, fan_advancing_2024, merrill_illusion_2024}.

\begin{figure}[t]
    \centering
    \includegraphics[width=0.95\linewidth]{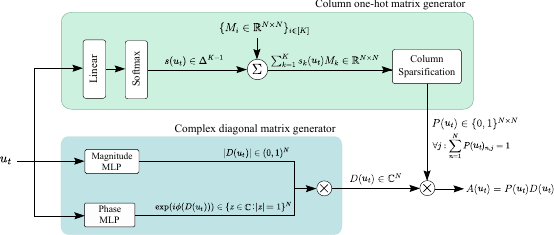}
    \caption{The PD parametrization can be integrated into any selective SSM by adopting the shown architecture for generation of structured sparse state transition matrices $A(u_t) = P(u_t) D(u_t)$.} 
    %Our PD parametrization factorizes transition matrices into $A(u_t)=P(u_t)D(u_t)$, a column one-hot matrix $P(u_t)$ and a complex diagonal matrix $D(u_t)$, such that parallel scans can be computed at low computational cost. The transition matrix . generates sparse complex-valued transition matrices, enabling the emulation of all FSAs }
    \label{fig:model-sketch}
\end{figure}
The $P$ and $D$ factors of our parametrization have complementary strengths for encoding automata into time-varying SSMs. The $P$ matrix enables emulating any FSA, but the required dimension scales linearly with the number of states. For cyclic automata, which form a central building block of all solvable automata~\citep{krohn_algebraic_1965,sarrof2024expressivecapacitystatespace}, complex diagonal matrices provide a more compact encoding compared to column one-hot matrices. Visual intuition is provided in Figure~\ref{fig:automaton_transition_figure}. 
Additionally, as proven in the following section, the diagonal matrices provide a guarantee for the system's BIBO stability as the magnitude of each entry lies in $(0,1)$.
%, a consequence of the sigmoid parametrization of the magnitude generator network.

\begin{figure}[h]
    \centering
    \begin{subfigure}[b]{0.45\textwidth}
        \centering
        % Left subfigure (replace with your other image or plot)
        \includegraphics[width=\textwidth]{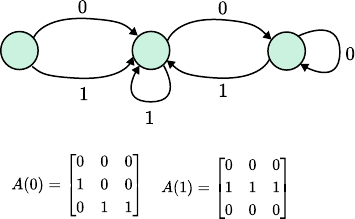}
        \caption{A non-cyclic FSA and its two corresponding column one-hot transition matrices.}
    \end{subfigure}
    \hfill
    \begin{subfigure}[b]{0.4\textwidth}
        \centering
        % Right subfigure (your high-res plot)
        \includegraphics[width=\textwidth]{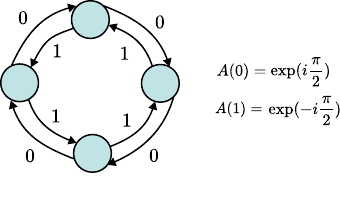}
        \caption{A cyclic FSA whose behavior can be emulated with a single complex number.}
    \end{subfigure}
    \caption{Any $N$-state FSA can be encoded using sparse binary $N\times N$ transition matrices %$P(u_t)$
    (a), but modular counters admit a more compact representation based on diagonal transition matrices %$D(u_t)$ 
    (b).}
    \label{fig:automaton_transition_figure}
\end{figure}

%At initialization, the diagonal component of the PD parametrization is biased towards implementing a real-valued long-range time-invariant filter. This is achieved by initializing $W_o^{P,M}$ with very small values, $b_o^M$ with large positive values, and $b_o^P$ with large negative values. Likewise, the $P$ matrices are biased towards the identity by initializing all $M_k$ to a noise-perturbed identity matrix. The exact initialization is provided in \textcolor{red}{Appendix (B)}. %with high probability through a union bound over Gaussian tails. Appendix~B? describes additional details.

% Evidence suggests that the expressiveness of time-varying SSMs can be greatly increased by relaxing the structure of the transition matrix to non-diagonal forms~\cite{merrill_illusion_2024, cirone_theoretical_2024}. 
% %
% It is, however, difficult to do so without exploding the computational cost of the model. 
% %
% We propose an approach based on decomposing the transition matrices as $A_t=P_tD_t$, where $P_t$ is an input-dependent matrix in which each column has a single non-zero element, and $D_t$ is a complex-valued diagonal matrix.
% %
% \textcolor{red}{As we already had one argument for diagonal matrices (simultaneous diagonalizability), we should also make it clear that, from a functional perspective (ignoring algebra and rather working in a calculus-like system), a complex diagonal system is cyclic, even with the B matrix. No ambiguity. I think :) 
% \\}

\subsection{Surrogate Gradients}

Strictly speaking, $\tfrac{\partial P(u)}{\partial M(u)}$ is a generalized function—it vanishes almost everywhere, except at isolated points where it exhibits Dirac delta-like singularities. To smooth these singularities over sets of non-zero measure, we approximate the hardmax with softmax during the backward pass.
\begin{equation*}
    \frac{\partial P}{\partial M} = \frac{\partial \text{hardmax} (M)}{\partial M} \approx \frac{\partial \text{softmax}(M)}{\partial M} \\
    %\frac{\partial ^+L}{\partial P_t} =& \frac{d^+L}{dA}|_{A=P_t}
\end{equation*}
This is reminiscent of the \emph{slope-annealed straight-through estimator}~\citep{bengio_stochastic_2013, paulus_rao_2021}, yet we use stochasticity neither in the forward nor in the backward pass (ablations with stochastic categorical sampling in $P$ are reported in Appendix~\ref{app:results}).
Note that during the forward pass we do not relax the hardmax, since doing so would break the sparsity essential for an efficient parallel scan.
With tempered softmax in the low-temperature limit, the above expression becomes exact.

\subsection{Algebraic Structure of the PD Parametrization}

In this section, we formalize the set of transition matrices $\mathbb H^{N \times N}$ that are decomposable into a binary one-hot column matrix $P$ and a complex diagonal matrix $D$. Note that our $PD$ matrices exhibit such a column one-hot structure. Hence, all of the subsequent statements made for the set $\mathbb{H}$ immediately also hold for our matrix parametrization. All proofs are in Appendix~\ref{app:proofs}.

\begin{defn}[One-hot Column Matrices]
    Let $\mathbb H^{M \times N} := \{A \in \mathbb C^{M \times N} :  \forall i \ \|A_{:, i}\|_0 = 1\}$ where $\|x \|_0$ denotes the $\ell^0$-"norm" counting the number of non-zero entries in $x$.
\end{defn}

Under matrix multiplication as its binary operation, $\mathbb H^{N \times N}$ forms a monoid.
\begin{property}[Algebraic Structure of One-hot Column Matrices]\label{property:algebraic_structure}
    $\mathbb H^{N \times N}$ is a monoid under matrix multiplication, i.e., it is closed under associative matrix multiplication and contains the identity.
\end{property}
Closure under multiplication is essential for efficient chained matrix multiplication via parallel scans, because matrix multiplication in $\mathbb H^{N \times N}$ can be implemented in $\Theta(N)$ instead of the usual $\Theta(N^3)$, as formalized by Property~\ref{property:computational_efficiency} and visualized in Figure~\ref{fig:PD_matmul}.
\begin{property}[Computational Efficiency of Matrix Multiplication in $\mathbb H^{N \times N}$]\label{property:computational_efficiency}
    Let $A,B\in\mathbb{H}^{N\times N}$. Then $C=AB \in \mathbb{H}^{N\times N}$ can be computed in $\Theta(N)$ arithmetic operations.
\end{property}
%
% An implementation of efficient matrix-matrix multiplication in $\mathbb{H}^{N\times N}$ is visualized in Figure~\ref{fig:PD_matmul}.
\begin{figure}[h]
    \centering
    \includegraphics[width=0.95\linewidth]{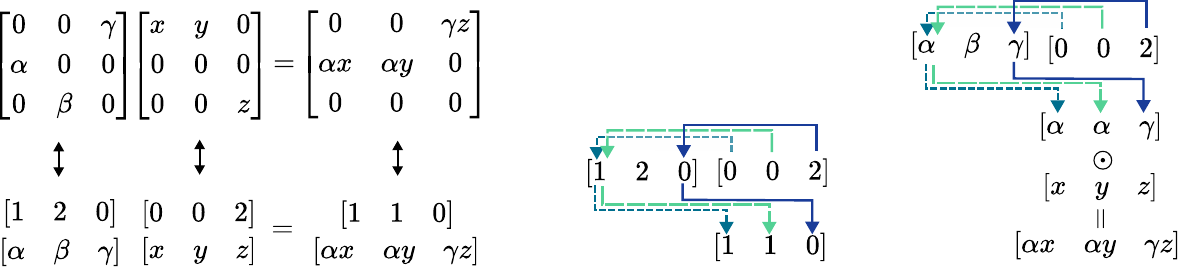}
    \caption{%Our PD parametrization can be swiftly integrated into any selective SSM by adopting our sparse generation of the state transition matrix $A(u_t)$. 
    %A visualization of the $\Theta(N)$ matrix product algorithm in $\mathbb{H}^{N \times N}$.
    %
    \emph{Left:} The sparse matrices in $\mathbb{H}^{N \times N}$ can be efficiently represented by separately storing the indices of the active elements and their values. \emph{Center:} The indices of the matrix product are computed with a gather-scatter operation. \emph{Right:} The nonzero entries of the matrix product are computed using gather-scatter followed by element-wise multiplication.}
    \label{fig:PD_matmul}
\end{figure}

\subsection{Stability and Expressivity of the PD Parametrization}

If the PD parametrization is adopted for state transition matrices and if one ensures matrix entries inside the complex unit circle, then the state space model becomes provably bounded-input, bounded-output (BIBO) stable. Note that our parametrization of $|D(u_t)|$ with a sigmoid-based MLP ensures all conditions for BIBO stability are met.

% \begin{prop}[System Stability under PD-Parametrization]
%     Let $\varepsilon \in (0,1]$ and consider the state transition $x_t = A_t x_{t-1}+b_t$ with $A_t \in \mathbb H^{N \times N} : \|A_t\|_\infty \leq 1- \varepsilon$. Let further $\|x_{0}\|_2 \leq B$ and $\|b_t\|_2 \leq B$ for $B \in \mathbb R_+$. Then it holds that $\forall t \ \ \|x_t\|_2 \leq \sqrt{N}B / \varepsilon $.
% \end{prop}

\begin{prop}[System Stability under PD-Parametrization] Let $\varepsilon \in (0,1]$ and consider the state transition $x_t = A_t x_{t-1}+b_t$ with $A_t \in \mathbb H^{N \times N} : \|A_t\|_\infty \leq 1- \varepsilon$. Let further $\|x_{0}\|_2 \leq B$ and $\|b_t\|_2 \leq B$ for $B \in \mathbb R_+$. Then it holds that
    \begin{equation}
        \|x_t\|_2 \leq \sqrt{N}B / \varepsilon \quad \forall t.
    \end{equation}
\end{prop}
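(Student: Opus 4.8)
The plan is to avoid working directly in the Euclidean norm — where the hypothesis gives no control over $\|A_t\|_2$ — and instead to carry out the contraction estimate in the $\ell^\infty$ norm, which is precisely the norm whose induced operator norm appears in the assumption $\|A_t\|_\infty \le 1-\varepsilon$. The factor $\sqrt{N}$ in the target bound is then exactly the price of converting an $\ell^\infty$ estimate back to $\ell^2$ at the very end, which is a strong hint that this is the intended route.

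First I would record the one-step estimate. Since $\|\cdot\|_\infty$ on matrices is the operator norm induced by the $\ell^\infty$ vector norm (the maximum absolute row sum), it is submultiplicative, so applying the triangle inequality to $x_t = A_t x_{t-1} + b_t$ gives
\[
\|x_t\|_\infty \le \|A_t\|_\infty \|x_{t-1}\|_\infty + \|b_t\|_\infty \le (1-\varepsilon)\|x_{t-1}\|_\infty + \|b_t\|_\infty.
\]
I would then trade the given $\ell^2$ data for $\ell^\infty$ data using the elementary inequality $\|v\|_\infty \le \|v\|_2$, which yields $\|x_0\|_\infty \le B$ and $\|b_t\|_\infty \le B$ for all $t$.

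With these in hand, the bound $\|x_t\|_\infty \le B/\varepsilon$ follows by induction on $t$. The base case holds because $\|x_0\|_\infty \le B \le B/\varepsilon$, using $\varepsilon \in (0,1]$. For the inductive step, assuming $\|x_{t-1}\|_\infty \le B/\varepsilon$, the one-step estimate gives $\|x_t\|_\infty \le (1-\varepsilon)(B/\varepsilon) + B = B/\varepsilon$, so the bound is preserved; here $B/\varepsilon$ is simply the fixed point of the affine recursion $m \mapsto (1-\varepsilon)m + B$. Finally I would convert back via $\|x_t\|_2 \le \sqrt{N}\,\|x_t\|_\infty \le \sqrt{N}\,B/\varepsilon$, which is exactly the claim.

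I do not expect a serious obstacle: the only genuine decision is the choice of norm, and once one commits to $\ell^\infty$ the argument reduces to a two-line geometric-series / fixed-point induction. It is worth noting that the column-one-hot (i.e.\ $\mathbb{H}^{N\times N}$) structure of $A_t$ is not actually used beyond the hypothesis $\|A_t\|_\infty \le 1-\varepsilon$; the stability bound holds for any matrices satisfying that operator-norm contraction, and the $\mathbb{H}$ structure matters only insofar as it is what the PD parametrization produces, with the sigmoid-bounded magnitudes $|D(u_t)|$ being what is meant to enforce the $1-\varepsilon$ bound.
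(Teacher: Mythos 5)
Your induction is internally sound, but it rests on reading $\|A_t\|_\infty$ as the operator norm induced by the vector $\ell^\infty$ norm (maximum absolute row sum), and that is not the hypothesis the paper is working with. In the paper's proof, $\|\cdot\|_\infty$ is the entrywise maximum: this is visible both in the step $\|\prod_j A_j\|_F \le \sqrt{N}\,\|\prod_j A_j\|_\infty$, which for a column one-hot matrix with $N$ nonzero entries is precisely the Frobenius-versus-max-entry bound, and in the remark that the sigmoid parametrization of $|D(u_t)|$ ``ensures all conditions for BIBO stability are met'' --- the sigmoid controls only the magnitudes of the individual entries of $A_t = P_tD_t$, not its row sums. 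For a matrix in $\mathbb H^{N\times N}$ the two norms genuinely differ whenever the map $[N]\to[N]$ encoded by $P$ is non-injective: for example $A=\bigl(\begin{smallmatrix} 1/2 & 1/2\\ 0&0\end{smallmatrix}\bigr)$ has maximum entry $1/2$ but row sum $1$, and with $x=(1,1)^\top$ your one-step estimate $\|Ax\|_\infty \le (1-\varepsilon)\|x\|_\infty$ fails. So under the intended hypothesis your first inequality does not hold, and your closing remark that the column one-hot structure is not used beyond the norm bound is exactly backwards: that structure is what makes a bound on the \emph{entries} propagate multiplicatively through products of the $A_j$, which is the heart of the paper's argument (products of column one-hot matrices are column one-hot with entries that are products of entries, hence $\|\prod_{j=k}^{i}A_j\|_2 \le \|\prod_{j=k}^{i}A_j\|_F \le \sqrt N (1-\varepsilon)^{k-i+1}$, and the unrolled recursion is then bounded by a geometric series).

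Your strategy can be repaired with a one-line change: run the same fixed-point induction in $\ell^1$ rather than $\ell^\infty$. For a column one-hot matrix the induced $\ell^1$ operator norm is the maximum column sum, which \emph{is} the maximum entry, so $\|A_t x\|_1 \le (1-\varepsilon)\|x\|_1$ does follow from the intended hypothesis; then $\|x_t\|_1 \le (1-\varepsilon)\|x_{t-1}\|_1 + \|b_t\|_1$ together with $\|x_0\|_1, \|b_t\|_1 \le \sqrt N B$ (Cauchy--Schwarz) gives $\|x_t\|_1 \le \sqrt N B/\varepsilon$ by your fixed-point argument, and hence $\|x_t\|_2 \le \|x_t\|_1 \le \sqrt N B /\varepsilon$. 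This uses the $\mathbb H$ structure exactly once, in the norm identification, and is shorter than the paper's unrolling-plus-geometric-series proof.
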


%Moreover, the PD parametrization is highly expressive and allows exact representation of any FSA. This stands in stark contrast to diagonal SSMs, which cannot represent non-solvable FSAs and struggle empirically with non-commutative FSAs.

As a direct consequence of the FSA to SSM mapping described in Section~\ref{sec:background_mapping}, Proposition~\ref{prop:expressivity} holds: 

\begin{prop}[Expressivity of PD Parametrization]\label{prop:expressivity}
Any FSA with $N$ states can be exactly represented by a single-layer \name{} with a state size $N$ and linear readout of size $N \times N$.
\end{prop}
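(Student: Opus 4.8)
The plan is to show that the explicit FSA-to-SSM mapping described in Section~\ref{sec:background_mapping} always produces transition matrices that live in $\mathbb{H}^{N \times N}$, so that the construction is realizable within the PD parametrization with state size $N = |Q|$. Since Section~\ref{sec:background_mapping} already exhibits a mapping of any deterministic FSA into a time-variant SSM with $B=0$, $C=I$, $D=0$, $\psi=\mathrm{id}$, the only thing left to verify is that each $A(\sigma)$ is a valid PD matrix, i.e. a column one-hot matrix (possibly times a diagonal), and that the resulting recurrence reproduces the automaton's state trajectory exactly.

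\medskip

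First I would fix the orthonormal one-hot encoding $enc : Q \to \mathbb{R}^{N}$ sending the $j$-th state to the standard basis vector $e_j$, so that $enc(q)^T enc(q') = \delta_{q,q'}$. Then for each symbol $\sigma \in \Sigma$ the matrix $A(\sigma) = \sum_{q \in Q} enc(\delta(q,\sigma))\, enc(q)^T$ acts on basis vectors by $A(\sigma)\, e_q = e_{\delta(q,\sigma)}$. The key observation is that because $\delta(\cdot,\sigma)$ is a \emph{function} on $Q$, every column of $A(\sigma)$ has exactly one nonzero entry (equal to $1$), so $A(\sigma) \in \mathbb{H}^{N \times N}$; this is precisely the column one-hot factor $P(\sigma)$ with trivial diagonal $D(\sigma) = I$. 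Next I would argue by induction on $t$ that if $x_0 = enc(q_{\text{init}})$ and the inputs encode the symbols $\sigma_1,\dots,\sigma_t$, then $x_t = enc(q_t)$ where $q_t = \delta(q_{t-1},\sigma_t)$ is the true FSA state after reading the prefix; the base case is the initialization and the step follows immediately from $A(\sigma_t)\, enc(q_{t-1}) = enc(\delta(q_{t-1},\sigma_t))$. Finally, the readout $C = I$ with $\psi = \mathrm{id}$ returns $x_t$ itself, so one recovers the exact state encoding, and acceptance can be read off by checking membership of the decoded state in $F$ via the $N \times N$ linear readout.

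\medskip

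\textbf{The main obstacle} is mostly conceptual rather than technical: one must make precise the correspondence between the SSM's input $u_t$ (a vector in $\mathbb{R}^D$) and the abstract alphabet symbol $\sigma_t$, and confirm that the single-layer PD-SSM architecture of Figure~\ref{fig:model-sketch} can in principle realize the symbol-indexed family $\{A(\sigma)\}_{\sigma \in \Sigma}$ through its selection mechanism. Since the dictionary $\{M_k\}_{k \in [K]}$ together with the column-wise hardmax can produce any column one-hot matrix, choosing $K = |\Sigma|$ and letting the selection weights $s(u_t)$ pick out the matrix corresponding to the current symbol suffices; the diagonal factor is set to the all-ones vector. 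I would therefore devote the bulk of the writeup to cleanly stating this input-to-symbol identification and then let the induction close the argument, since the closure and efficiency of $\mathbb{H}^{N \times N}$ established in Property~\ref{property:algebraic_structure} guarantee the recurrence remains within the parametrized class at every step.
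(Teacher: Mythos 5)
Your proposal is correct and follows essentially the same route as the paper, which treats Proposition~\ref{prop:expressivity} as an immediate consequence of the FSA-to-SSM mapping in Section~\ref{sec:background_mapping}: orthogonal (one-hot) state encodings, $A(\sigma)=\sum_{q}enc(\delta(q,\sigma))\,enc(q)^T$, $B=0$, $C=I$, $\psi=\mathrm{id}$. The details you supply --- that each $A(\sigma)$ is column one-hot because $\delta(\cdot,\sigma)$ is a function, the induction on $t$, and the realization of $\{A(\sigma)\}_{\sigma\in\Sigma}$ via the dictionary with $K=|\Sigma|$ and trivial diagonal factor --- are exactly the steps the paper leaves implicit.
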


%Then $k_t$ is linearly independent over $\mathbb Q$~\citep{jaffe2007linearly}, i.e., no two differing input sequences $u_1, \ldots, u_n$ and $u_1, \ldots, u_m$ will result in the same state representation. Now, a simple identification of states with all the state representations of associated input sequences that lead to the state shows universal expressivity with arbitrary readout.

Not only can \name{}s represent any FSA, they do so with the (almost) smallest state size possible, i.e., \name{}s are maximally expressive for regular languages.

\begin{prop}[Optimality of PD Parametrization]\label{prop:optimality_pd_parametrization}
    For any $N$ there exists a finite-state automaton with $N$ states that cannot be emulated by any single-layer SSM with state size less than $N-1$ under unique state encodings.
\end{prop}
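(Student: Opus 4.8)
The plan is to exhibit a single witness automaton whose invertible transitions generate the full symmetric group $S_N$, and to show that any affine realization of these transitions forces the state encodings to affinely span an $(N-1)$-dimensional space. Concretely, I would take $Q=\{1,\dots,N\}$ with two input symbols whose transition maps are a transposition and an $N$-cycle, e.g.\ $\delta(\cdot,a)=(1\,2)$ and $\delta(\cdot,b)=(1\,2\,\cdots\,N)$; these generate $G:=S_N$, so the orbit of $q_{\mathrm{init}}$ is all of $Q$ and every state is reachable. A single-layer SSM with state size $n$ realizes, for each symbol $\sigma$ with embedding $u$, the affine map $f_\sigma(x)=A_\sigma x+b_\sigma$ on $\mathbb{C}^n$, where $A_\sigma:=A(u)$ and $b_\sigma:=B(u)u$. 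Under a unique (injective) encoding $\mathrm{enc}:Q\hookrightarrow\mathbb{C}^n$, faithful emulation from $q_{\mathrm{init}}$ together with transitivity of $G$ forces the per-symbol consistency $f_\sigma(\mathrm{enc}(q))=\mathrm{enc}(\delta(q,\sigma))$ for all $q$ and all $\sigma$, since every state is reached by some input string.

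Writing $v_q:=\mathrm{enc}(q)$, the first key step is to pass to centered coordinates. Because each $f_\sigma$ is affine and permutes $\{v_q\}_q$, it fixes their centroid $\bar v:=\tfrac1N\sum_q v_q$: indeed $f_\sigma(\bar v)=\tfrac1N\sum_q f_\sigma(v_q)=\tfrac1N\sum_q v_{\delta(q,\sigma)}=\bar v$. Setting $w_q:=v_q-\bar v$ and using that the linear part of $f_\sigma$ is $A_\sigma$, we obtain $A_\sigma\,w_q=w_{\delta(q,\sigma)}$ for every $q,\sigma$. Thus each generator acts linearly on $W:=\mathrm{span}_{\mathbb C}\{w_q\}_q\subseteq\mathbb{C}^n$ by permuting the $w_q$ exactly as $G$ permutes $Q$, and $\sum_q w_q=0$.

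The core step is then a representation-theoretic rank computation. Consider the space of linear relations $R:=\{c\in\mathbb{C}^N:\sum_q c_q w_q=0\}$. Applying any $\pi\in G$ to a relation yields another relation, since $\pi$ merely permutes the $w_q$, so $R$ is an $S_N$-submodule of the permutation module $\mathbb{C}^N\cong\mathbf 1\oplus\mathrm{std}$, where $\mathrm{std}$ is the $(N-1)$-dimensional standard representation, irreducible over $\mathbb{C}$. The all-ones vector lies in $R$ because $\sum_q w_q=0$, so $R\supseteq\mathbf 1$; and $R\neq\mathbb{C}^N$ since the $w_q$ are not all zero (the $v_q$ are distinct). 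Because $\mathbf 1$ and $\mathrm{std}$ are non-isomorphic irreducibles, the only submodules of $\mathbb{C}^N$ containing $\mathbf 1$ are $\mathbf 1$ and $\mathbb{C}^N$; hence $R=\mathbf 1$. Therefore $\dim W=N-\dim R=N-1$, and since $W\subseteq\mathbb{C}^n$ we conclude $n\ge N-1$, i.e.\ no single-layer SSM with state size below $N-1$ can emulate this automaton under unique encodings.

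The main obstacle I anticipate is this core step: justifying that the centered encodings are forced to carry the full standard representation rather than some lower-dimensional quotient. The clean route is precisely the submodule argument above, which hinges on the irreducibility of $\mathrm{std}$ over $\mathbb{C}$; this is also what distinguishes the chosen automaton from, say, a single cyclic counter, whose permutation module splits into one-dimensional characters and is hence realizable with state size $1$ (consistent with the diagonal-SSM construction). A secondary point requiring care is the reduction from "emulation" to the per-symbol affine identities $f_\sigma(v_q)=v_{\delta(q,\sigma)}$ holding for \emph{all} $q$, which relies on transitivity of $S_N$ to guarantee that every state is reachable from $q_{\mathrm{init}}$.
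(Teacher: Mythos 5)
Your proof is correct, but it takes a genuinely different route from the paper's. The paper chooses a \emph{non-permutation} witness automaton with $N$ input symbols (symbol $c$ advances only state $s_c$ to $s_{c+1 \bmod N}$ and fixes every other state), first reduces complex SSMs to real ones of twice the size via an auxiliary lemma, and then derives a contradiction by elementary linear algebra: assuming $d \le N-2$, two encodings are affinely dependent on the rest, which forces one transition matrix to act as the identity on the span of the encodings and ultimately forces two state encodings to coincide, violating injectivity. You instead take a permutation automaton generating $S_N$, observe that affine maps permuting the encodings fix their centroid and hence act linearly on the centered encodings, and use irreducibility of the standard representation over $\mathbb{C}$ to pin down the relation module as exactly the trivial summand, giving $\dim W = N-1$. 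What your route buys: it works directly over $\mathbb{C}$ with no real-reduction lemma, it yields an exact dimension count rather than only a contradiction, and it cleanly explains why cyclic automata escape the bound (their permutation module splits into one-dimensional characters). What the paper's route buys: it is entirely elementary, and its witness is not a group automaton, showing the lower bound does not hinge on invertibility of the transitions, let alone non-solvability. Both arguments rest on the same reading of ``emulation'' as state-by-state correspondence under an injective encoding and both need every state to be reachable; your transitivity remark makes this explicit, while the paper leaves it implicit.
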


The assumption of unique state encodings, meaning that each state is represented by a single, unique vector, is a practical necessity for readout. Indeed, without unique state encodings (and with arbitrary precision), even a single-layer real SSM with state size $1$ can represent any FSA, albeit in a format that requires exponentially large lookup tables to read out. Proposition~\ref{prop:readout} makes this statement exact. 

\begin{prop}[Arbitrary Precision and Readout]\label{prop:readout}
    Consider $x_{t+1} = x_t + b_t$ where $b_t = u_t \cdot k_t$ with $u_t \in \mathbb Q$ input encodings and $k_t = \sqrt{p_t} \in \mathbb R /\mathbb Q$ time encodings, where $p_t$ is the $t$-th prime. Then, any FSA can be encoded into this scalar-valued SSM under an appropriate lookup table as readout.
\end{prop}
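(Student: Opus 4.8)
The plan is to show that the scalar state $x_T$ produced after processing an input string of length $T$ is an \emph{injective} function of the input sequence, so that a (necessarily large) lookup table can recover the entire input and then simulate the FSA. First I would unroll the recurrence. Encoding the input symbols as \emph{nonzero} rationals $u_t = enc(\sigma_t)$ and taking the initial state $x_0 = 0$ without loss of generality (an additive constant does not affect injectivity), the state after $T$ steps is
\begin{equation*}
x_T = \sum_{t=1}^{T} u_t \sqrt{p_t},
\end{equation*}
i.e., a $\mathbb{Q}$-linear combination of the irrationals $\sqrt{p_1}, \ldots, \sqrt{p_T}$.

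The key step is the classical number-theoretic fact that the square roots of distinct primes are linearly independent over $\mathbb{Q}$: the only rational solution to $\sum_t c_t \sqrt{p_t} = 0$ is $c_t = 0$ for all $t$. I would justify this via the degree of the field extension $\mathbb{Q}(\sqrt{p_1}, \ldots, \sqrt{p_n})/\mathbb{Q}$, which equals $2^n$, with the $2^n$ squarefree products $\prod_{i \in S}\sqrt{p_i}$, $S \subseteq [n]$, forming a $\mathbb{Q}$-basis; in particular $\sqrt{p_1}, \ldots, \sqrt{p_n}$ lie in a basis and are therefore independent. Injectivity of the map $(u_1, \ldots, u_T) \mapsto x_T$ is then immediate. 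Crucially, because the encodings are nonzero, strings of \emph{different} lengths cannot collide either: equating two such sums forces equal coefficients on each common $\sqrt{p_t}$, and any surplus prime contributed by the longer string would need coefficient zero, contradicting the nonzero encoding. Hence $x_T$ uniquely determines both the length and the content of the input.

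Having established injectivity, I would finish by constructing the readout as the composition of (i) the inverse map $x_T \mapsto (\sigma_1, \ldots, \sigma_T)$, well-defined by the above, and (ii) the FSA's own dynamics, i.e., running $\delta$ from $q_{\text{init}}$ on the decoded string and reporting the resulting state (or its membership in $F$). Since $x_T$ determines the input uniquely, this composite is a genuine function of the single scalar $x_T$, realizable as a lookup table. That table is finite for any bounded input length but has up to $|\Sigma|^T$ entries, which is precisely the exponential blow-up the proposition is meant to exhibit as the cost of abandoning unique state encodings.

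The main obstacle is the linear-independence argument; everything else is bookkeeping. The cleanest route is the field-extension degree computation cited above, which can be invoked as standard. If a self-contained argument is preferred, an induction on the number of primes—showing $\sqrt{p_{n+1}} \notin \mathbb{Q}(\sqrt{p_1}, \ldots, \sqrt{p_n})$ by a conjugation/norm argument—also works and avoids explicitly invoking Galois theory.
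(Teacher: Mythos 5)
Your proposal is correct and follows essentially the same route as the paper: both arguments rest on the linear independence of $\sqrt{p_1},\ldots,\sqrt{p_n}$ over $\mathbb Q$ (which the paper simply cites, while you sketch the field-extension justification) to conclude that distinct input sequences yield distinct states, and then read out via a lookup table identifying each FSA state with the set of reachable scalar values. Your added observation that nonzero encodings are needed to separate sequences of different lengths is a small but worthwhile refinement of the paper's argument.
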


\section{Results}

%We empirically compare the state tracking capabilities of \name{} against various SSMs from the literature.
%ur experiments cover FSAs with symbolic inputs, FSAs encoded in natural language, as well as several benchmarks with long-range dependencies. 
%See Appendix~B for an overview of the full model, and Appendix~D for experimental details.

\subsection{Runtime Measurements}\label{sec:experiments_runtime}

We first measure how the runtime of a single-layer SSM scales as a function of the transition matrix structure as well as the hidden dimension on an NVIDIA A100-80GB GPU. We compare PD-SSM 
\noindent
\begin{minipage}{0.5\linewidth}
%\normalsize
     with the unstructured (dense) real-valued SD-SSM~\citep{terzic2025sdssm}, as well as a diagonal variant of PD-SSM that sets $A(u_t)=D(u_t)\in\mathbb{C}^{N_{c}}$.
    To equalize the effective state size, the state dimensionality $N_r$ of the real-valued dense SSM is twice that of the complex-valued models $N_c$.
    The embedding size is scaled as $D=N_{r}=2N_c$. 
    %Sequence length is fixed to $L=64$.
    % According to Proposition~\ref{property:computational_efficiency}, a parallel scan with \name{} scales in $\Theta(LN)$, same as complex diagonal SSMs. In contrast, scans with dense selective SSMs~\citep{terzic2025sdssm} scale in $\Theta(LN^3)$. However, the generation of the P matrix according to Figure~\ref{fig:model-sketch} introduces a runtime overhead of $\Theta(LN^2)$, and our scaling analysis ignored practicalities such as cache locality. A more complete picture is provided in Figure~\ref{fig:runtime_dimension}, which benchmarks the effective runtime of a single SSM layer on an NVIDIA A100 GPU.
    %
    %We compare efficient Jax implementations of our model, the unstructured SD-SSM\cite{terzic2025sdssm} and the diagonal complex-valued SSM achieved by omitting the $P$ matrix from our parametrization.
    %
    In Figure~\ref{fig:runtime_dimension} we see that PD-SSM scales significantly better than the dense SSM, with a $71 \times$ speed-up at $D=5632$.
    At this dimension, the diagonal SSM is $7\times$ faster than PD-SSM.
    The PD model's higher runtime mainly stems from the additional operations in the generation of P matrices. For more details and results under different settings, see Appendix~\ref{app:results}. \looseness -1
    
\end{minipage}%
\hfill
\begin{minipage}{0.48\linewidth}
    \centering
    \includegraphics[width=0.99\linewidth]{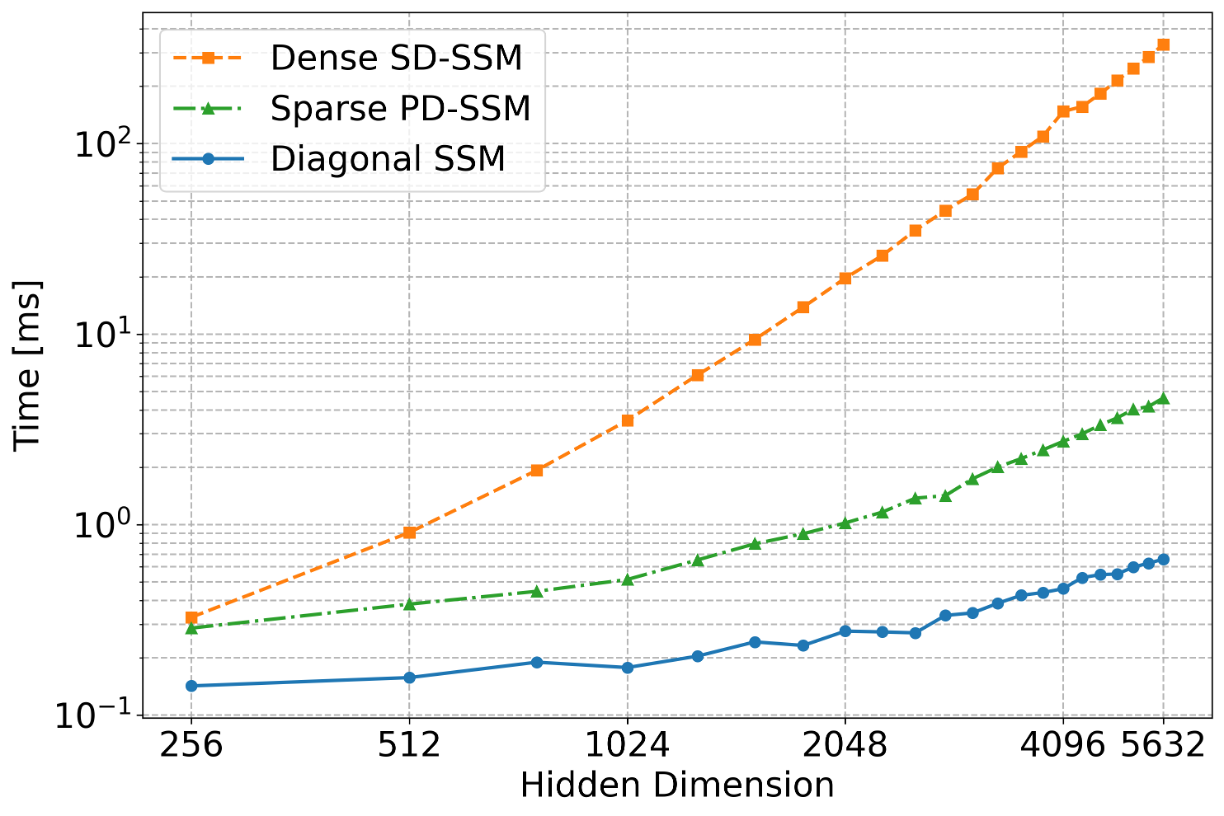}
    \captionof{figure}{Runtimes of single-layer SSMs with varying dimension and sequence length $64$.} %Diagonal SSMs are fastest, followed by \name{} and the often computationally prohibitive SD-SSMs.}
    \label{fig:runtime_dimension}
\end{minipage}%

\subsection{FSA Emulation}

We first evaluate our model on a set of \emph{automaton state tracking} tasks, originally introduced in~\cite{deletang_neural_2023}. The four tasks correspond to four finite-state automata of various complexities.
%, with the description of each task given in Appendix~D. 
The benchmark is centered at evaluating the \emph{length generalization} of the models, which serves as a proxy of the model having learned the correct algorithmic abstraction, avoiding fixed-length \emph{shortcut} solutions~\cite{liu_transformers_2023}. Concretely, the models are trained for 100,000 steps on randomly sampled sequences of inputs of length 3 to 40, and are evaluated on sequences of length 40--256. We extend the set of results from~\cite{walker_2025_structured} which evaluates each model under a single varying hyperparameter choice, state dimensionality of 128 or 512. We instead fix the dimensionality to 128, finding that it is sufficient for high performance. 
%
%The other hyperparameters are fixed for each model.
%and are listed in Appendix~D.  
Table~\ref{tab:fsa-table} reports the mean and standard deviation of the best validation accuracy of five randomly initialized models. The set of evaluated models consists of recurrent and parallelizable models, where we \textbf{bold} and \uline{underline} the \textbf{best} and \uline{second-best} parallelizable model, respectively. Excluding the Transformer, the parallelizable models can be interpreted as SSMs with various transition matrix structures, namely diagonal (\cite{gu_mamba_2023, walker_2025_structured} and our $\mathbb{C}$ diagonal model defined by setting $P(u_t)$ to identity), diagonal plus low-rank (\cite{schlag2021linear,yang2024parallelizing,grazzi2024unlocking}), product of diagonal plus low-rank~\citep{walker_2025_structured,peng_2025_rwkv,siems_2025_deltaproduct}, as well as alternative variants including block-diagonal, a Walsh-Hadamard matrix modulated by an input-dependent diagonal matrix, and a mixture of diagonal and dense matrices~\cite{walker_2025_structured}. As we can read from the table, our model performs best, with a significant margin over the second-best method.

\begin{table*}[ht]
\centering
\resizebox{\textwidth}{!}{
\begin{tabular}{lccccc}
\toprule
\multicolumn{1}{l|}{\textbf{Model}} &
\textbf{Cycle Nav.} &
\textbf{Even Pairs} &
\textbf{Mod Arith.} &
\textbf{Parity} &
\multicolumn{1}{|l}{\textbf{Average}} \\
\midrule
\multicolumn{6}{l}{\textbf{Recurrent}} \\
\midrule
LSTM & 100.0 ± 0.0 & 100.0 ± 0.0 & 99.9 ± 0.1 & 100.0 ± 0.0 & 100.0 ± 0.0 \\
sLSTM & 32.5 ± 0.4 & 100.0 ± 0.0 & 27.7 ± 0.6 & 100.0 ± 0.0 & 65.1 ± 0.2 \\
xLSTM[1:1] & 53.5 ± 5.6 & 99.0 ± 1.9 & 29.3 ± 1.6 & 100.0 ± 0.0 & 70.5 ± 1.5 \\
\midrule
\multicolumn{6}{l}{\textbf{Parallel}} \\
\midrule
Transformer & 24.4 ± 0.5 & 90.4 ± 10.4 & 23.6 ± 0.7 & 52.2 ± 0.4 & 47.7 ± 2.6 \\
Mamba & 48.4 ± 2.2 & \textbf{100.0 ± 0.0} & 33.1 ± 6.6 & 54.2 ± 2.1 & 58.9 ± 1.8 \\
D-SLiCE & 69.5 ± 6.3 & \textbf{100.0 ± 0.0} & 20.9 ± 0.1 & \textbf{100.0 ± 0.0} & 72.6 ± 1.6 \\
$\mathbf{\mathbb{C}}$ \textbf{Diag.} & 90.4 ± 4.3 & 82.4 ± 4.8 & 59.9 ± 27.0 & 61.8 ± 6.8 & 73.6 ± 7.1 \\
DeltaNet & 49.8 ± 4.7 & \textbf{100.0 ± 0.0} & 42.2 ± 4.8 & 57.8 ± 0.8 & 62.5 ± 1.7 \\
DeltaNet[-1,1] & 46.7 ± 6.1 & \textbf{100.0 ± 0.0} & 66.4 ± 8.8 & 97.7 ± 2.0 & 77.7 ± 2.7 \\
Gated DeltaNet & 53.8 ± 8.8 & \textbf{100.0 ± 0.0} & 42.8 ± 8.2 & 56.5 ± 1.9 & 63.3 ± 3.0 \\
Gated DeltaProduct[-1,1] & 46.3 ± 6.6 & \textbf{100.0 ± 0.0} & 78.4 ± 10.9 & 98.0 ± 1.4 & 80.7 ± 3.2 \\
RWKV-7 & 37.8 ± 5.0 & 88.1 ± 14.2 & 39.5 ± 6.1 & 51.1 ± 0.3 & 54.1 ± 4.1 \\
DPLR-SLiCE$_{d_h=57,r=4}$ & 81.1 ± 16.6 & \textbf{100.0 ± 0.0} & 68.3 ± 19.3 & 91.0 ± 18.0 & \uline{85.1 ± 7.8} \\
WH-SLiCE & 69.7 ± 8.8 & 93.1 ± 13.9 & 23.8 ± 1.1 & 71.4 ± 12.9 & 64.5 ± 5.2 \\
BD-SLiCE$_{d_h=128,b=4}$ & \textbf{99.8 ± 0.2} & 85.9 ± 11.3 & 54.0 ± 12.5 & 95.3 ± 3.9 & 83.8 ± 4.3 \\
D-DE-SLiCE$_{d_h=272,b=16}$ & 73.3 ± 29.4 & 84.8 ± 8.5 & \textbf{98.4 ± 0.7} & 83.8 ± 11.3 & \uline{85.1 ± 8.2} \\
\textbf{PD-SSM} & \uline{99.5 ± 0.7} & \uline{99.7 ± 0.3} & \uline{96.2 ± 3.4} & \uline{99.9 ± 0.1} & \textbf{98.8 ± 0.9} \\
\midrule
Random & 20.0 & 50.0 & 20.0 & 50.0 & 35.0 \\
\bottomrule
\end{tabular}
}
\caption{Average and standard deviation of validation accuracy across 5 seeds for a range of models on FSA emulation tasks. The baseline results are taken from~\citep{walker_2025_structured}.}
\label{tab:fsa-table}
\end{table*}

%State tracking is the capability of processing a sequence of updates to the state of the world (possibly in text) while keeping track of the world state at intermediate steps.
%

% \begin{figure}[h]
%     \centering
%     \includegraphics[width=1.0\linewidth]{figures/runtimes_lengths.pdf}
%     \caption{The transition matrix $A(u_i)$ factorizes into $A(u_i)=P(u_i)D(u_i)$, a column one-hot matrix $P(u_i)$ and a complex-valued diagonal matrix $D(u_i)$.}
%     \label{fig:model-sketch}
% \end{figure}

% \begin{table}[h]
% \centering
% \resizebox{0.85\textwidth}{!}{
% \begin{tabular}{rccccccc}
% \toprule
% Dataset          &      Mega    &         S4       & LRU & Mamba & S7 & Ours \\ \cmidrule(r){1-1} \cmidrule(r){2-2} \cmidrule(r){3-5} \cmidrule(r){6-8}

% ListOps  &     63.14      &   59.60    &   xx.xx   &   38.02 &   63.77   &   61.11  \\ 

% Text  &     90.43     &    86.82   &   xx.xx   &  82.98  &   87.22  &   88.06   \\ 

% Image    &     90.44   &    88.65     &   xx.xx   &  69.82 &   61.14  &   70.37  \\ 

% \bottomrule    
% \end{tabular}
% }
% \vspace{0.5em}
% \caption{LRA Results.}
% \label{tab:lra}
% \end{table}

We further analyse the method on two non-solvable groups, $A_5$ and $S_5$. For both groups, consisting of 60 and 120 permutations respectively, all of the group elements can be generated using only two permutations corresponding to two transition matrices as per the mapping in Secion~\ref{sec:background_mapping}. To increase the connectivity of the resulting automaton's states, we introduce additional randomly selected permutations.
We compare a PD-SSM with $K=32$ against two layers of the complex diagonal model with $A(u_t)=D(u_t)$, and one or two layers of a DPLR model, Gated DeltaProduct [-1,1] with $n_h=4$~\citep{siems_2025_deltaproduct}, with state dimension $128$. We perform a learning rate grid search and train the models for 100,000 steps with batch size 256. We train the models on sequences of length up to 40 and report the best validation accuracy on sequences of length up to 40-256 in Table~\ref{tab:nonsolvable}.
%Note that increasing the number of factors $n_h$ in the DeltaProduct model directly corresponds to increasing the length of the processed sequence by that same factor, as well as incurring a parameter cost linear in $n_h$. In contrast, increasing $K$ in \name{} increases the parameter cost linearly in $K$, but has a negligible effect on the runtime of the model.

% \begin{table*}[h!]
% \centering
% \resizebox{\textwidth}{!}{
% \begin{tabular}{lc|cccc|ccc}
% \toprule
% {Model} & 
% {Depth} & 
% \textbf{$(A_5, 2)$} & 
% \textbf{$(A_5,6)$} & 
% \textbf{$(A_5,8)$} & 
% \textbf{$(A_5,12)$} & 
% \textbf{$(S_5,4)$} &
% \textbf{$(S_5,8)$} &
% \textbf{$(S_5,32)$}\\
% \midrule
% $\mathbb{C}$ Diagonal & 2 & $15.5$ & \raisebox{0.3ex}{---} & \raisebox{0.3ex}{---} & \raisebox{0.3ex}{---} & \raisebox{0.3ex}{---} & \raisebox{0.3ex}{---} & \raisebox{0.3ex}{---} \\
% Gated DP [-1,1] & 1 & $97.9$ & $92.5$ & $91.8$ & $60.5$ & $88.7$ & ???  & $1.23$ \\
% Gated DP [-1,1]& 2 & \raisebox{0.3ex}{---} & \raisebox{0.3ex}{---} & \raisebox{0.3ex}{---} & ??? & ??? & ??? & \raisebox{0.3ex}{---}  \\
% \textbf{PD-SSM} & 1 & $100.0$ & $100.0$ & $100.0$ & $100.0$ & $100.0$ & ??? & $1.07$  \\
% \bottomrule
% \end{tabular}
% }
% \caption{Best validation accuracy on longer sequences over 3 random seeds. Each task is defined in terms of a non-solvable algebraic group ($A_5$ or $S_5$) with a possibly redundant set of generators, the number of which is the second entry in each tuple.}
% \label{tab:time_series}
% \end{table*}

\begin{table*}[h!]
\centering
\resizebox{1.0\textwidth}{!}{
\begin{tabular}{lc|cccc|ccc}
\toprule
{Model} & 
{Depth} & 
\textbf{$(A_5, 2)$} & 
\textbf{$(A_5,6)$} & 
\textbf{$(A_5,8)$} & 
\textbf{$(A_5,12)$} & 
\textbf{$(S_5,4)$} &
\textbf{$(S_5,8)$} &
\textbf{$(S_5,32)$}\\
\midrule
$\mathbb{C}$ Diagonal & 2 & $15.5$ & \raisebox{0.3ex}{---} & \raisebox{0.3ex}{---} & \raisebox{0.3ex}{---} & \raisebox{0.3ex}{---} & \raisebox{0.3ex}{---} & \raisebox{0.3ex}{---} \\
Gated DP [-1,1] & 1 & $97.9$ & $92.5$ & $91.8$ & $60.5$ & $88.7$ & $57.4$  & $1.23$ \\
Gated DP [-1,1]& 2 & \raisebox{0.3ex}{---} & \raisebox{0.3ex}{---} & \raisebox{0.3ex}{---} & $68.4$ & $84.7$ & $62.2$ & \raisebox{0.3ex}{---}  \\
\textbf{PD-SSM} & 1 & $100.0$ & $100.0$ & $100.0$ & $100.0$ & $100.0$ & 100.0 & $1.07$  \\
\bottomrule
\end{tabular}
}
\caption{Best validation accuracy (\%) on longer sequences across 3 random seeds. Each task is defined in terms of a non-solvable algebraic group ($A_5$ or $S_5$) generated with redundant set of permutations, the cardinality of which is the number $n$ in each tuple $(A_5 \text{ or }S_5,n)$.}
\label{tab:nonsolvable}
\end{table*}

As predicted by the theory, the complex diagonal model fails on the simplest task, $(A_5,2)$. Gated DeltaProduct with $n_h=4$ exhibits degrading accuracy with a higher number of permutations, which cannot be recovered by stacking two layers of the model. \name{} maintains full accuracy for all automata with a single layer, excepting $S_5$ with $32$ generating permutations.

\subsection{Multivariate Time-Series Classification}

We next provide an evaluation on multivariate time-series classification. We evaluate our model on a subset of the University of East Anglia (UEA) Multivariate Time-Series Classification Archive (UEA-MTSCA)~\citep{uea_time_series}, extending on the results from~\cite{walker_2024_logncde, rusch2025oscillatory}. We consider six tasks from the archive previously selected due to their long sequence lengths, which range from around 400 to over 17,000. Conforming to the evaluation methodology and data splits defined in~\cite{rusch2025oscillatory, walker_2024_logncde}, we report the average and standard deviation of the test accuracy across five random initializations, with the hyperparameter grid conforming to that of~\citep{rusch2025oscillatory} with the exception of the state size, which in our case is selected from $\{16,64,128\}$ as opposed to $\{16,64,256\}$ of the baselines. The set of evaluated models includes neural controlled differential equations~\citep{walker_2024_logncde}, LTI SSMs~\citep{smith_simplified_2023,orvieto_resurrecting_2023}, time-varying SSMs~\citep{gu_mamba_2023}, as well as an SSM paradigm based on a system of forced harmonic oscillators~\citep{rusch2025oscillatory}. The results are reported in Table~\ref{tab:time_series}. As we can see, our model maintains very high accuracy on this set of tasks, achieving an estimated mean accuracy that is within standard error of the state-of-the art SSM.

\begin{table*}[h!]
\centering
\resizebox{\textwidth}{!}{
\begin{tabular}{lcccccc|c}
\toprule
{\textbf{Model}} & 
\textbf{Worms} & 
\textbf{SCP1} & 
\textbf{SCP2} & 
\textbf{Ethanol} & 
\textbf{Heartbeat} &
\textbf{Motor} &
{\textbf{Average}} \\
\midrule
NRDE & 83.9 ± 7.3 & 80.9 ± 2.5 & 53.7 ± 6.9 & 25.3 ± 1.8 & 72.9 ± 4.8 & 47.0 ± 5.7 & 60.6 ± 2.15 \\
NCDE & 75.0 ± 3.9 & 79.8 ± 5.6 & 53.0 ± 2.8 & 29.9 ± 6.5 & 73.9 ± 2.6 & 49.5 ± 2.8 & 60.2 ± 1.76 \\
Log-NCDE & 85.6 ± 5.1 & 83.1 ± 2.8 & 53.7 ± 4.1 & \uline{34.4 ± 6.4} & 75.2 ± 4.6 & 53.7 ± 5.3 & 64.3 ± 1.99 \\
LRU & {87.8 ± 2.8} & 82.6 ± 3.4 & 51.2 ± 3.6 & 21.5 ± 2.1 & \uline{78.4 ± 6.7} & 48.4 ± 5.0 & 61.7 ± 1.72 \\
S5 & 81.1 ± 3.7 & \textbf{89.9 ± 4.6} & 50.5 ± 2.6 & 24.1 ± 4.3 & {77.7 ± 5.5} & 47.7 ± 5.5 & 61.8 ± 1.83 \\
S6 & 85.0 ± 16.1 & 82.8 ± 2.7 & 49.9 ± 9.4 & 26.4 ± 6.4 & 76.5 ± 8.3 & 51.3 ± 4.7 & 62.0 ± 3.68 \\
Mamba & 70.9 ± 15.8 & 80.7 ± 1.4 & 48.2 ± 3.9 & 27.9 ± 4.5 & 76.2 ± 3.8 & 47.7 ± 4.5 & 58.6 ± 2.99 \\
LinOSS-IMEX  & 80.0 ± 2.7 & {87.5 ± 4.0} & \textbf{58.9 ± 8.1} & 29.9 ± 1.0 & 75.5 ± 4.3 & \uline{57.9 ± 5.3} & {65.0 ± 1.95} \\
LinOSS-IM & \textbf{95.0 ± 4.4} & \uline{87.8 ± 2.6} & \uline{58.2 ± 6.9} & {29.9 ± 0.6} & 75.8 ± 3.7 & \textbf{60.0 ± 7.5} & \textbf{67.8 ± 2.00} \\
\textbf{PD-SSM} & \uline{90.0 ± 5.7} & 80.9 ± 2.0 & {56.1 ± 8.6} & \textbf{34.7 ± 4.0} & \textbf{80.0 ± 2.6} & \textbf{60.0 ± 3.7} & \uline{67.0 ± 2.02} \\
\bottomrule
\end{tabular}
}
\caption{Mean and standard deviation of test accuracies across 5 seeds on selected long-sequence UEA time-series classification datasets as per~\citep{rusch2025oscillatory}.}
\label{tab:time_series}
\end{table*}

\subsection{Long-Range Arena}

%Our model offers a more expressive alternative to diagonal systems while keeping the cost significantly lower than that of SD-SSM, which used unstructured transition matrices.
%Due to unforeseen issues with the compute cluster, we do not yet have results on the \emph{Pathfinder} datasets. 
%
% \end{minipage}%
% \hfill
% \begin{minipage}{0.56\linewidth}
% \centering
% \small
% \begin{tabularx}{1\linewidth}{rcccc!{\vrule width 1pt}ccc}
% \toprule
%  & \multicolumn{2}{c}{Time-Invariant}  & \multicolumn{3}{c}{Time-Variant} \\
% \cmidrule(r){2-3} \cmidrule(r){4-6} 
% Dataset          %&      Mega
% &         S4       & LRU & Mamba & S7 & \name \\ \cmidrule(r){1-1} \cmidrule(r){2-2} \cmidrule(r){3-3} \cmidrule(r){4-4} \cmidrule(r){5-5} \cmidrule(r){6-6}

% ListOps  %&     63.14      
% &   59.6    &   60.2   &   38.0 &   \textbf{63.8}   &   61.0  \\ 

% Text  %&     90.43     
% &    86.8   &   \textbf{89.4}   &  83.0  &   87.2  &   88.1   \\ 

% Image    %&     90.44   
% &    88.6     &   \textbf{89.0}   &  69.8 &   61.1  &   70.4  \\
% Retrieval    %&     90.44   
% &    \textbf{90.9}     &   89.9   &  72.1 &   91.8  &   90.0  \\
% Pathfinder    %&     90.44   
% &    \textbf{94.2}    &   95.1  &  69.3 &  65.5  &  62.6  \\
% \cmidrule(r){1-1} \cmidrule(r){2-2} \cmidrule(r){3-3} \cmidrule(r){4-4} \cmidrule(r){5-5} \cmidrule(r){6-6}
% Average %&      81.33    
% &     84.0     & \textbf{84.7}  & 66.4  &  73.9 & 74.4 \\
% \bottomrule       
% \end{tabularx}
% \captionof{table}{Long-range arena results.}
% \label{tab:lra}
% \end{minipage}%

The long-range arena~\citep{tay2020long} dataset (LRA) covers mathematical expressions, freeform text, and images, all represented in sequences up to length 16k. 
We evaluate our model on a subset of LRA, with sequences up to length 4k.
The baseline results are taken from~\cite{soydan_2024_simplified}.
\normalsize
We consistently use 4 layers with embedding dimension 128 and state dimension 128 for all tasks except 
%
%
% Unfortunately, the parameter count and number of layers reported in the literature for S4, LRU, Mamba, and S7 vary across the \emph{different tasks} in long-range arena. 
% We use 4 layers with hidden dimension 128 (for both states and embeddings) and a dictionary of 4 transition matrices.
%ink
\noindent
\begin{minipage}{0.41\linewidth}
\emph{Retrieval}, which used state dimension 64. For all tasks, the transition matrix dictionary size is set to $K=6$.
Time-invariant SSMs perform significantly better on average than the shown collection of time-variant ones.
Among the time-variant SSMs (Mamba, S7, and \name), ours performs best on average on this set of tasks.
Together with the time-series results, this serves as evidence that the PD parametrization can be effective on realistic long-sequence tasks.
%Our model offers a more expressive alternative to diagonal systems while keeping the cost significantly lower than that of SD-SSM, which used unstructured transition matrices.
%Due to unforeseen issues with the compute cluster, we do not yet have results on the \emph{Pathfinder} datasets. 
%
% \end{minipage}%
% \hfill
% \begin{minipage}{0.56\linewidth}
% \centering
% \small
% \begin{tabularx}{1\linewidth}{rcccc!{\vrule width 1pt}ccc}
% \toprule
%  & \multicolumn{2}{c}{Time-Invariant}  & \multicolumn{3}{c}{Time-Variant} \\
% \cmidrule(r){2-3} \cmidrule(r){4-6} 
% Dataset          %&      Mega
% &         S4       & LRU & Mamba & S7 & \name \\ \cmidrule(r){1-1} \cmidrule(r){2-2} \cmidrule(r){3-3} \cmidrule(r){4-4} \cmidrule(r){5-5} \cmidrule(r){6-6}

% ListOps  %&     63.14      
% &   59.6    &   60.2   &   38.0 &   \textbf{63.8}   &   61.0  \\ 

% Text  %&     90.43     
% &    86.8   &   \textbf{89.4}   &  83.0  &   87.2  &   88.1   \\ 

% Image    %&     90.44   
% &    88.6     &   \textbf{89.0}   &  69.8 &   61.1  &   70.4  \\
% Retrieval    %&     90.44   
% &    \textbf{90.9}     &   89.9   &  72.1 &   91.8  &   90.0  \\
% Pathfinder    %&     90.44   
% &    \textbf{94.2}    &   95.1  &  69.3 &  65.5  &  62.6  \\
% \cmidrule(r){1-1} \cmidrule(r){2-2} \cmidrule(r){3-3} \cmidrule(r){4-4} \cmidrule(r){5-5} \cmidrule(r){6-6}
% Average %&      81.33    
% &     84.0     & \textbf{84.7}  & 66.4  &  73.9 & 74.4 \\
% \bottomrule       
% \end{tabularx}
% \captionof{table}{Long-range arena results.}
% \label{tab:lra}
% \end{minipage}%
\end{minipage}%
\hfill
\begin{minipage}{0.56\linewidth}
\centering
\small
\begin{tabularx}{1\linewidth}{rcc!{\vrule width 1pt}ccc}
\toprule
 & \multicolumn{2}{c}{Time-Invariant}  & \multicolumn{3}{c}{Time-Variant} \\
\cmidrule(r){2-3} \cmidrule(r){4-6} 
Dataset          
& S4 & LRU & Mamba & S7 & \name \\ 
\cmidrule(r){1-1} \cmidrule(r){2-2} \cmidrule(r){3-3} \cmidrule(r){4-4} \cmidrule(r){5-5} \cmidrule(r){6-6}

ListOps     
& 59.6 & 60.2 & 38.0 & \textbf{63.8} & \uline{61.0}  \\ 

Text     
& 86.8 & {89.4} & 83.0 & \uline{87.2} & \textbf{88.1}   \\ 

Image    
& 88.6 & 89.0 & \uline{69.8} & 61.1 & \textbf{70.4}  \\

Retrieval    
& 90.9 & 89.9 & 72.1 & \textbf{91.8} & \uline{90.0}  \\

Pathfinder    
& {94.2} & 95.1 & \textbf{69.3} & \uline{65.5} & 62.6  \\

\cmidrule(r){1-1} \cmidrule(r){2-2} \cmidrule(r){3-3} \cmidrule(r){4-4} \cmidrule(r){5-5} \cmidrule(r){6-6}
Average     
& 84.0 & {84.7} & 66.4 & \uline{73.9} & \textbf{74.4} \\
\bottomrule       
\end{tabularx}
\captionof{table}{LRA results, average over 3 seeds.}
\label{tab:lra}
\end{minipage}%

% \subsection{Runtime Measurements}

% \begin{figure}[ht]
%     \centering
%     \includegraphics[width=1.0\linewidth]{figures/runtime.png}
%     \caption{With a state size of 512, our model shows favorable scaling properties compared to models utilizing unstructured transition matrices.}
%     \label{fig:your-label}
% \end{figure}

% \subsection{Ablation on Stochastic Training and Dense Inference}

% \noindent
% \begin{minipage}{0.41\linewidth}

% \end{minipage}%
% \hfill
% \begin{minipage}{0.56\linewidth}
% %\normalsize
% % Right subfigure (your high-res plot)
% \centering
% \includegraphics[width=\textwidth]{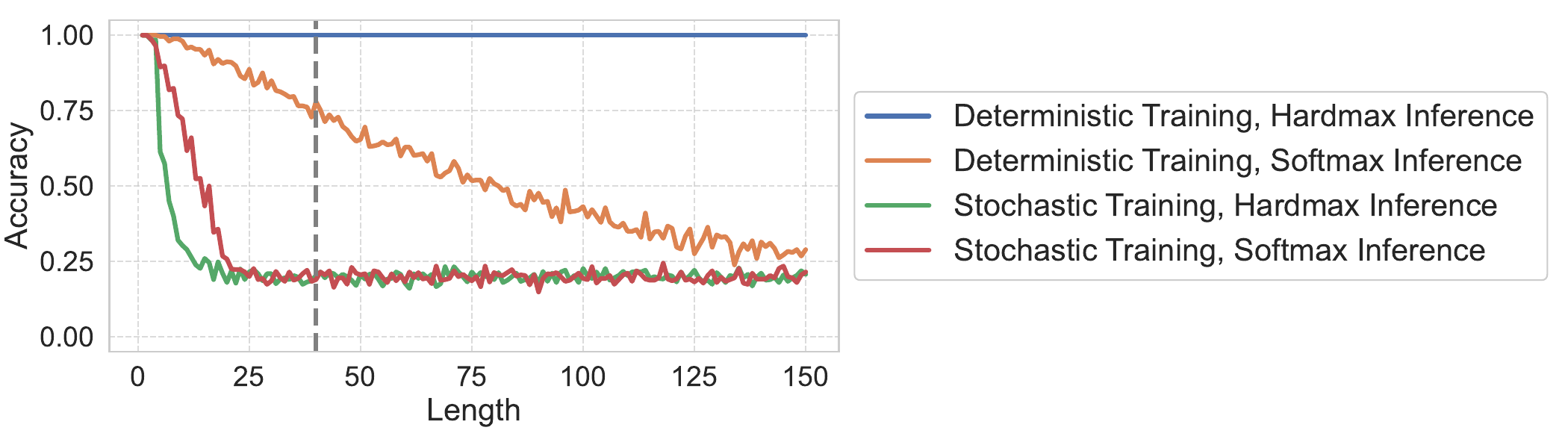}
% \captionof{figure}{Accuracy on the \emph{Arithmetic} FSA.}
% \label{fig:gumbel}
% \end{minipage}%

\subsection{State Tracking in Natural Language}\label{sec:experiments_state_tracking_in_natural_language}
%\textcolor{red}{Let's directly relate this task to bAbI and clearly explain why it's more complex.}

%
%
%We assume that an agent is located somewhere in the state space, and the agent moves by responding to meaningful English sentences. Every transition in the state space can correspond to several different English sentences.
%
\noindent
\begin{minipage}{0.41\linewidth}
%\normalsize
% Right subfigure (your high-res plot)
\centering
\includegraphics[width=\textwidth]{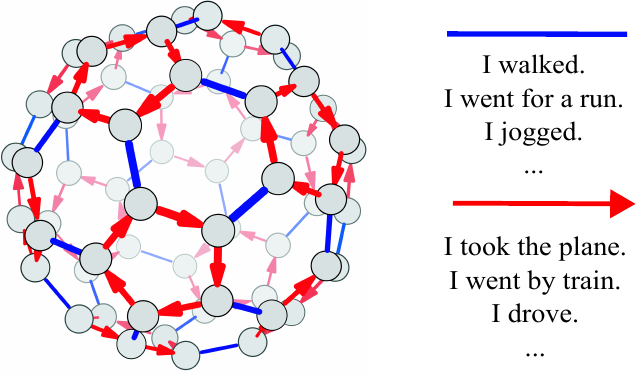}
\captionof{figure}{Cayley diagram of the $A_5$ group~\citep{carter_visual_2009}. We encode state transitions into sets of English sentences.}
\label{fig:A5_NL}
\end{minipage}%
\hfill
\begin{minipage}{0.56\linewidth}
  Finally, we introduce a novel task which we call \emph{state tracking in natural language}, a more complex version of the FSA state tracking task with the crucial difference that the inputs are encoded in natural language. As a result, a state transition is only triggered after a length-varying sequence of tokens. A  real-world instance of such state tracking in natural language is geographic location tracking given text information, where the model has to work with descriptions such as \textit{I took the bus}, \textit{I walked}, and \textit{I went for a run}. In this benchmark, we assume that an agent transitions through a fixed state space according to a sequence of meaningful English sentences. Each transition in the underlying automaton is redundantly encoded through multiple sentences of different lengths. For instance, $A_5$ is understood as enc- 
\end{minipage}%
%
%This considered task is related to bAbI~\cite{babi} with four key differences. The transitions follow the behavior of complex FSAs, the model never gets explicitly told which state it is transitioning to, the dataset is generated on the fly, and we evaluate length generalization. That is, we introduce one of the hardest state tracking tasks to the literature.
%
%It seems to be one of the more difficult state tracking tasks out there.
%

% \begin{figure}[h]
%     \centering
%     \begin{subfigure}[b]{0.48\textwidth}
%         \centering
%         % Right subfigure (your high-res plot)
%         \includegraphics[width=\textwidth]{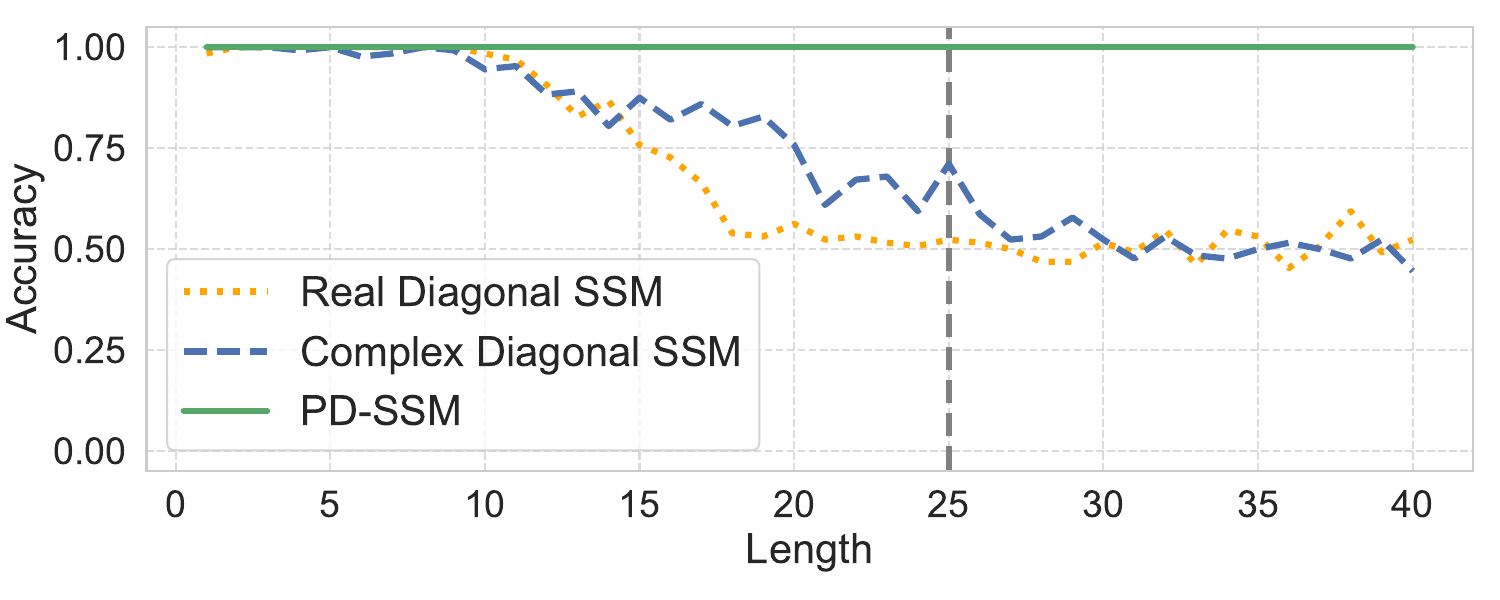}
%         \caption{\emph{Parity}, a 2 state solvable commutative automaton}
%         \label{fig:left_subfigure}
%     \end{subfigure}
%     \hfill
%     \begin{subfigure}[b]{0.48\textwidth}
%         \centering
%         % Right subfigure (your high-res plot)
%         \includegraphics[width=\textwidth]{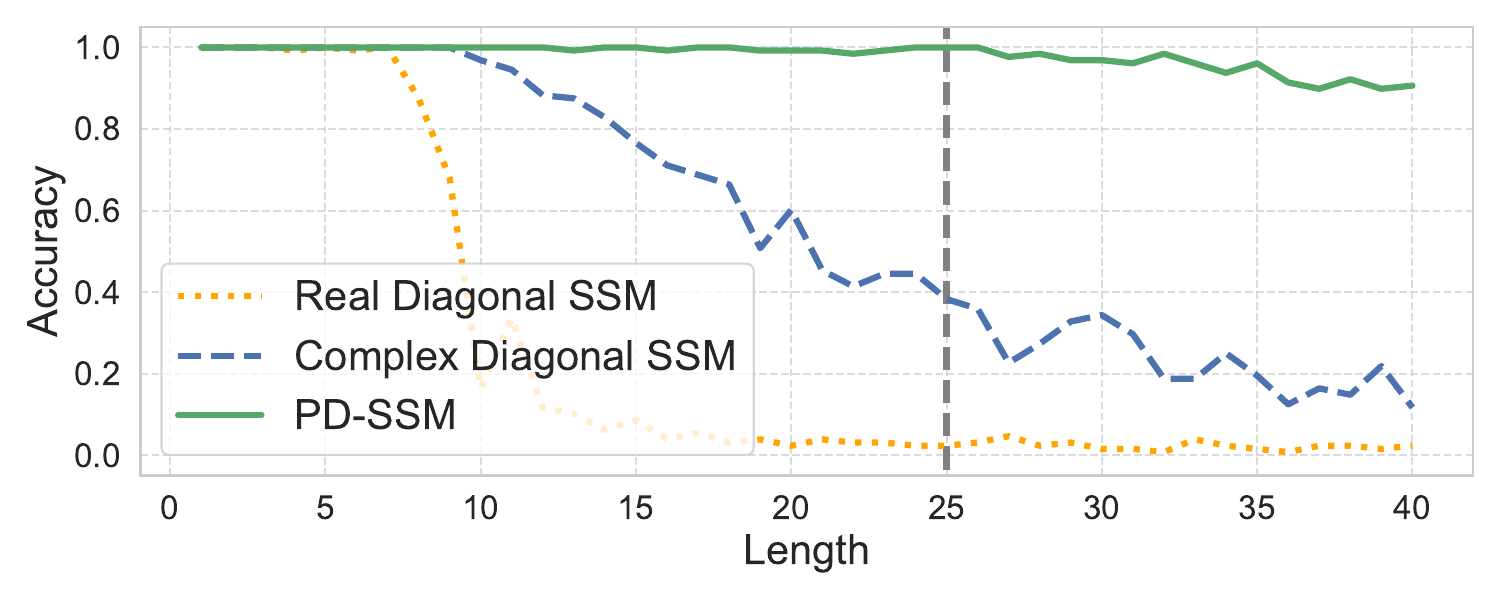}
%         \caption{$A_5$, a non-solvable automaton with 60 states}
%         \label{fig:right_subfigure}
%     \end{subfigure}
%     \caption{Even when the transitions are obscured by English sentences, the proposed model still learns to accurately track the states of the non-solvable $A_5$ automaton using only a single layer and a linear readout. The maximum training sequence length is 25, corresponding to 25 English sentences.}
%     \label{fig:main_figure}
% \end{figure}
%
\vspace{-0.5em}
oding a transportation network where an agent can either move along a ring of motorized transportation (via \textit{bus, plane, train, car}) or move (i.e., \textit{walk, run, jog, hike}) between such rings. \textcolor{red}{Red} transitions correspond to movement using motorized transport, whereas \textcolor{blue}{blue} transitions are taken without motorized transport, see Figure~\ref{fig:A5_NL}. Figure~\ref{fig:nl_state_tracking} shows the performance of diagonal SSMs as well as \name{} for state tracking in natural language. To benefit from large-scale pretraining and to demonstrate the modularity of our approach, we freeze a pretrained Qwen 2.5-1.5B model and replace its final layer with a single trainable SSM layer. This analysis centers around the applicability of the sparse parametrization in a larger-scale setting, and also serves as a test of the utility of the $PD$ parametrization as opposed to purely diagonal models. Concretely, we vary the matrix structure such that it is either real-valued diagonal ($A(u_t)=|D(u_t)|$), complex-valued diagonal ($A(u_t)=D(u_t)$), or structured sparse ($A(u_t)=P(u_t)D(u_t)$). We train the model for 100,000 steps with batches of size 256. Each batch consists of sequences of English sentences, with the number of sentences randomly sampled up to 25, each sentence triggering a state transition in the underlying automaton. The automata are Parity and $A_5$ with two generators as in Figure~\ref{fig:A5_NL}. In Figure~\ref{fig:nl_state_tracking} we report the best validation performance out of 3 random seeds of each model with an equivalent hyperparameter grid search.
Even though complex diagonal SSMs can represent parity~\citep{sarrof2024expressivecapacitystatespace}, the models did not learn to do so in the experiments. On this task, \name{} converges within 10,000 steps.

\begin{figure}[h]
    \centering
    \begin{subfigure}[b]{0.49\textwidth}
        \centering
        % Right subfigure (your high-res plot)
        \includegraphics[width=\textwidth]{figures/plot_Parity.pdf}
        \caption{Accuracy on the natural language version of Parity}
    \end{subfigure}
    \hfill
    \begin{subfigure}[b]{0.49\textwidth}
        \centering
        % Right subfigure (your high-res plot)
        \includegraphics[width=\textwidth]{figures/plot_A5.pdf}
        \caption{Accuracy on the natural language version of $A_5$}
    \end{subfigure}
    \caption{Applying one layer of our PD-SSM at the output of a frozen Qwen-2.5 1.5B allows the model to learn to follow the state transitions of automata when the inputs are redundantly encoded in English sentences. The vertical dashed line indicates the maximum training sequence length.
    }
    \label{fig:nl_state_tracking}
\end{figure}
%
%Apart from better final performance, \name{} also converges much quicker than its competitors: on parity, \name{} require 10k steps of training whereas complex diagonal SSMs do not converge, even after 100k steps (see Appendix~E). 

%\input{sections/related_work}
\section{Conclusion}
We introduced an expressive and efficient structured sparse parametrization of transition matrices for time-varying SSMs manifesting in the \name{} architecture. Theoretically, \name{} can emulate arbitrary finite-state automata while reaching the lower bound on the network depth and state size required to do so in the worst case. It is significantly more efficient than unstructured SSMs and reaches a new state-of-the-art in synthetic state-tracking tasks. 
Among the investigated time-varying SSMs, \name on average performs best on tasks from the LRA benchmark requiring the processing of long sequences (up to 4k), and additionally exhibits high performance on multivariate time-series classification with long sequences (up to 17k), demonstrating performance within standard error of the state-of-the-art SSM.
%Experiments additionally confirm that the model is effective on tasks requiring the processing of long sequences (up to 4k), defining the state-of-the-art among time-variant SSMs.
Finally, we verified its effectiveness in a hybrid Attention-SSM architecture on a novel state-tracking in natural language task, showing that it can track non-solvable automaton states even when the inputs are redundantly encoded in variable-length English sentences.

\paragraph{Limitations and Future Work} Although we significantly improve upon the scalability of SD-SSM while retaining its favorable expressivity, \name{} still incurs overhead when generating the $P(u_t)$ matrices.
Our future work will investigate how our method can benefit from more efficient one-hot column matrix generation strategies, and how it can be utilized in large-scale pretraining. 
%
%Additionally, while our Transformer-SSM hybrid model can learn to track states in natural language with varying sentence lengths, 

\section{Broader Impact}
The paper introduces a novel neural network architecture with increased expressivity while retaining efficiency. More expressive and efficient models can potentially have unpredictable societal impacts in the future, but we do not foresee any immediate and direct negative impact of this work.

\section*{Acknowledgement} 

This work is supported by the Swiss National Science foundation (SNF), grant 10002666.

\bibliography{bibliography}  % without .bib extension

\newpage

\appendix

% Reset numbering like before
\setcounter{figure}{0}
\renewcommand{\thefigure}{S\arabic{figure}}
\setcounter{table}{0}
\renewcommand{\thetable}{S\arabic{table}}

% \begin{itemize}
%     \item Nicolas identified the P structure as efficient and monoidal (Property 1 \& 2), but wanted to sample dense matrices and average the parallel scan over samples to speed-up general chained matrix-matrix multiplication. Aleksandar considered non-commutative operators (such as circular correlation) as efficient non-diagonal alternatives for SSMs. In group meeting, a discussion made us realize that just using the P matrices without any sampling \& averaging is even cheaper and already solves the commutativity issue.
%     \item Nicolas pointed out that the P matrices can be trained through straight-through estimators. Aleksandar conducted analysis and demonstrated that without stochasticity better training dynamics are achieved.
%     \item Aleksandar came up with the complex diagonal matrix generator (and the softmax recombination, which we borrowed from SD-SSM).
%     \item Nicolas \& Aleksandar derived system stability  (Proposition 1) together on the whiteboard.
%     \item Aleksandar derived Proposition 2 on universal FSA-emulation (already in SD-SSM, but we restated).
%     \item Nicolas derived Proposition 3 on optimality of PD parametrization and Proposition 4 on validity of unique encodings assumption.
%     \item Aleksandar implemented all experiments.
%     \item Nicolas suggested replacing the final layer of a frozen pre-trained Transformer with one trainable PD-SSM layer for fine-tuning toward state tracking in natural language.
%     \item Aleksandar engaged in the rebuttal and did most of the write-up.
% \end{itemize}

\section{Background}\label{app:background}

The goal of this section is to provide knowledge of abstract algebra and circuit complexity theory at a level sufficient to understand the motivation behind our paper. We omit certain details for reasons of clarity, and we often focus on providing an intuitive understanding by considering visual and textual explanations and examples for important concepts. This approach is highly inspired by~\cite{carter_visual_2009}. As the core of our work is the design and analysis of a novel neural network architecture, one does not need to have a deep understanding of these topics to understand our paper. As such, the entire Appendix A can be skipped without significantly affecting the understanding of our PD method. For a deeper treatment of the topics, one may consult the following resources:~\citep{straubing_book,carter_visual_2009,arora_barak_2009}.

We start by motivating the connection between abstract algebra and FSAs in~\ref{app:algebra-fsm}. We then explain the central algebraic concepts in~\ref{app:groups}, starting with the definition of a group and finishing with step-by-step reasoning about why a certain group is solvable. In~\ref{app:circuit-complexity}, we introduce the main relevant concepts and classes from circuit complexity theory. We finish in~\ref{app:ssm-expressivity} by examining the most important results on the expressivity of SSMs that combine the two presented frameworks.

\subsection{Algebra and Finite-State Automata}\label{app:algebra-fsm}

Algebra is, broadly speaking, the study of \emph{sets} and \emph{binary operations} on elements of sets. The set elements can be of various nature. An intuitive example of an algebraic structure is the set of integers under addition, often denoted as $\mathbb{Z}^+$. In more generality, one may consider sets of functions with the binary operation being function composition. This is in fact highly related to the study of finite-state automata (FSAs), where we consider \emph{sets of transition functions corresponding to the input elements}, and we study the effect of \emph{composing transition functions}. This described set is defined for any automaton $\mathcal{A}=(Q,\Sigma,\delta)$ as
\begin{align*}
    \mathcal{T}(\mathcal{A)} = \{ \delta(\cdot,\sigma_{1,\dots,T}):T\in\mathbb{N}, \forall i \in [T], \sigma_i \in \Sigma\}
\end{align*}
where $\delta(\cdot,\sigma_{1,\dots,T}):Q\rightarrow Q$ is the transition function obtained by composing the transition functions of each input in the sequence, namely, $\delta(\cdot,\sigma_{1,\dots,T})=\delta(\delta(\dots\delta(\cdot,\sigma_1),\dots\sigma_{T-1}),\sigma_T)$.

This set contains all of the transition functions that can be expressed by the given automaton.
The limitations of SSMs on FSA emulation stem from the discrepancy between $\mathcal{T}(\mathcal{A)}$ and the transition functions that SSMs with different transition matrix structures can express.
A motivating illustrative example is given in Figure~\ref{fig:algebra_bg}, in which we consider the $A_5$ automaton and a time-variant SSM with diagonal transition matrices and $\forall t:b(u_t)=0$. Please note that this figure only serves to illuminate the connection between FSAs with algebraic structure and SSMs, with precise bounds being presented later.
\begin{figure}[ht]
    \centering
    \includegraphics[width=1.0\linewidth]{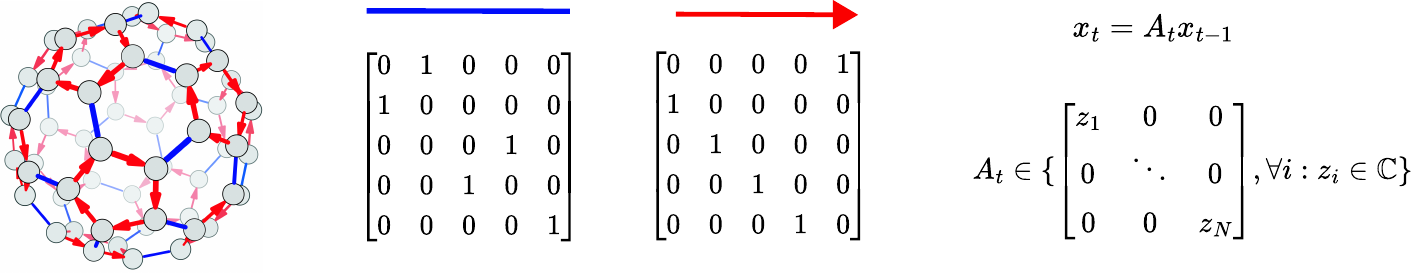}
    \caption{\emph{Left:} A visualization of the $A_5$ automaton as we used it in our experiments~\citep{carter_visual_2009}. \emph{Center:} The two transitions of $A_5$ equivalently represented in matrix form. $\mathcal{T}(\mathcal{A})$ is isomorphic to the set of all possible unique iterated products of the two matrices, of which there are a total of 60. \emph{Right:} A simplified diagonal SSM that omits the $B(u_t)u_t$ term. As the transition matrices are restricted to be diagonal, and since the two transition matrices represented in the center are not simultaneously diagonalizable, there seems to exist a discrepancy between $\mathcal{T}(\mathcal{A})$ and the SSM.}
    \label{fig:algebra_bg}
\end{figure}

The reasoning in the figure above is only valid for a single SSM layer with no $B(u_t)u_t$ term. In order to reason about several layers of the full model, recent work~\cite{merrill_illusion_2024, sarrof2024expressivecapacitystatespace} have utilized the frameworks of \emph{Abstract Algebra} (A.2) and \emph{Circuit Complexity Theory} (A.3).

\subsection{Algebraic Groups and their Properties}\label{app:groups}

We will now define algebraic objects that are of interest to us. We start with \emph{groups}.
%
% \begin{defn}[Algebraic Group]
%    A group consists of a set $A$ and an associative binary operation $\bullet$ acting on elements of $A$. Additionally, $A$ must contain a \emph{neutral element} $e\in A$, such that $\forall a\in A, e\bullet a=a\bullet b = a$, and each element of $A$ must have an inverse in $A$, meaning that $\forall a\in A, \exists b\in A:a\bullet b=b \bullet a=e$.
% \end{defn}
\begin{defn}[Algebraic Group]
An \emph{algebraic group} is a set $G$ equipped with a binary operation $\cdot$ that is
\begin{enumerate}[noitemsep, topsep=0pt]
  \item \textbf{Associative:} \((a\cdot b)\cdot c = a\cdot(b\cdot c)\) for all \(a, b, c \in G\),
  \item Has a \textbf{neutral element} \(e \in G\) such that \(a\cdot e = e \cdot a = a\) for all \(a \in G\),
  \item And where every element has an \textbf{inverse}: for each \(a \in G\), there exists \(a^{-1} \in G\) such that \(aa^{-1} = a^{-1}a = e\).
\end{enumerate}
\end{defn}

\noindent
\begin{minipage}{0.80\linewidth}

Algebraic groups can be graphically represented using \emph{Cayley diagrams}. Starting from the neutral element, a Cayley diagram visualizes the structure of the group by applying the group operation in conjunction with a group element to the current element. This can be easily understood by considering the group of integers under addition modulo 4 ($\mathbb{Z}_4$) in the figure on the right. We draw the Cayley diagram by first drawing the neutral element, $0$. Starting from $0$, all other elements are generated by iteratively applying $+1$ on the current element. For
\end{minipage}%
\hfill
\begin{minipage}{0.15\linewidth}
\centering
\includegraphics[width=\textwidth]{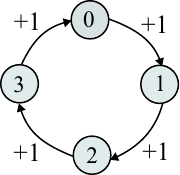}
\end{minipage}%

some groups, in order to generate all elements of the  set, we need to consider more than one generating element. An example is the $A_5$ group. While Figure~\ref{fig:algebra_bg} refers to it as an \emph{automaton}, the figure is equivalently the Cayley diagram of $A_5$ with the two selected generating permutations. 

A central property of groups is the \emph{commutativity of the binary operation}. The binary operation of a group is commutative only if $\forall a,b \in G:a\cdot b=b\cdot a$.

\begin{figure}[ht]
    \centering
    \includegraphics[width=0.45\linewidth]{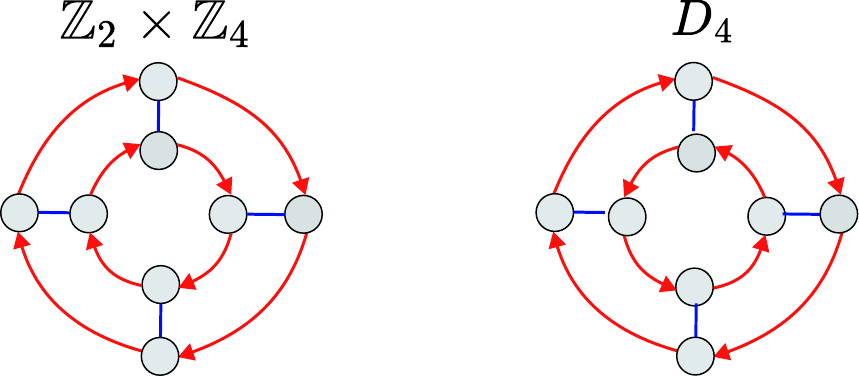}
    \caption{Cayley diagrams of the group $\mathbb{Z}_2\times \mathbb{Z}_4$ and the dihedral group $D_4$~\citep{carter_visual_2009}. $\mathbb{Z}_2 \times \mathbb{Z}_4$ is a commutative group, while $D_4$ is not. Both are solvable, with solvability of $\mathbb{Z}_2 \times \mathbb{Z}_4$ shown in Example~\ref{ex:solvable}.} 
\end{figure}

\paragraph{Solvability} Another central property of groups is \emph{solvability}. We will first present it informally and will then precisely define it. Loosely speaking, a group is \emph{solvable} if it can be constructed from co-
\noindent
\begin{minipage}{0.38\linewidth}
%\normalsize
% Right subfigure (your high-res plot)
\centering
\includegraphics[width=\textwidth]{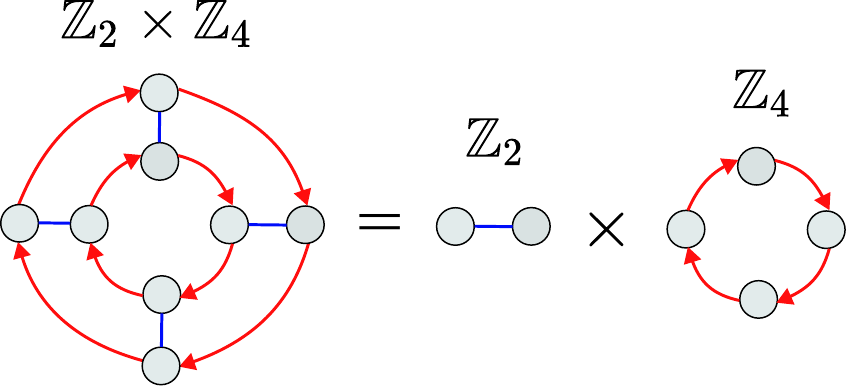}
\captionof{figure}{Composition of $\mathbb{Z}_2 \times \mathbb{Z}_4$.}
\label{fig:C2_C4_decomposition}
\end{minipage}%
\hfill
\begin{minipage}{0.58\linewidth}
 mmutative groups. The details of the construction are particularly easy to understand on the example of the $\mathbb{Z}_2\times \mathbb{Z}_4$ group, shown in Figure~\ref{fig:C2_C4_decomposition}. The $\mathbb{Z}_2\times \mathbb{Z}_4$ group is solvable, as it corresponds to the direct product of two commutative groups. While the $D_4$ group shown previously is not commutative, it is solvable as it can be constructed from commutative groups in a similar manner as $\mathbb{Z}_2 \times \mathbb{Z}_4$. However, $A_5$ cannot be constructed using commutative groups~\cite{carter_visual_2009}. To speak about solvability in more definite terms, we need to define several concepts. The re-
\end{minipage}
%
% \begin{figure}[h]
%     \centering
%     \includegraphics[width=0.5\linewidth]{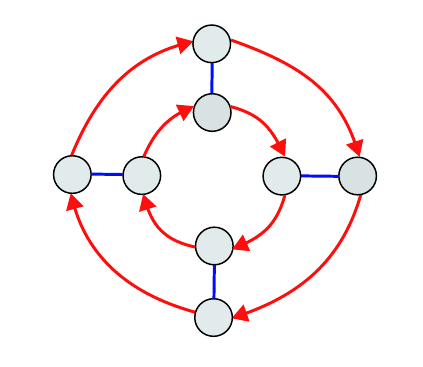}
%     \caption{The $C_2\times C_4$ automaton is solvable, as it corresponds to the direct product of two commutative groups. While the $D_4$ automaton shown previously is not commutative, it is solvable as it can be constructed from commutative groups in a similar manner as $C_2 \times C_4$. However, $A_5$ cannot be constructed using commutative groups~\cite{carter_visual_2009}.}
%     \label{fig:C2_C4_decomposition}
% \end{figure}
%
mainder of this subsection defines and explains solvability in concrete terms.
Firstly, we will define subgroups, which are in essence closed subsets of a group.
\begin{defn}[Subgroup]
Let \(G\) be a group . A subset \(H \subseteq G\) is called a \emph{subgroup} if \(H\) is itself a group under the operation inherited from \(G\). That is, \(H\) is a subgroup if:
\begin{enumerate}[noitemsep, topsep=0pt]
  \item \(e \in H\), where \(e\) is the identity element of \(G\),
  \item For all \(a, b \in H\), the product $a \cdot b$ is in $H$,
  \item For all \(a \in H\), the inverse $a^{-1}$ is in $H$.
\end{enumerate}
\end{defn}

Of central interest are \emph{normal subgroups}, which is a particularly well-behaved family of subgroup. We will use normal subgroups to partition a group into a sequence of subgroups. The solvability of a group can be determined by examining the structure of a sequence of normal subgroups.
\begin{defn}[Normal Subgroup]\label{def:normal_subgroup}
A \emph{normal subgroup} \(N\) of a group \(G\), written \(N\triangleleft G\), is a subgroup invariant under conjugation by elements of \(G\), i.e., for each \(n\in N\), it holds that \(gng^{-1} = n\) for all \(g \in G\).
\end{defn}
In particular, the sequence of group partitions we want to examine is the group's \emph{normal series}.
\begin{defn}[Normal Series]
    A \emph{normal series} of a group \(G\) is a finite sequence of subgroups
\[
\{e\} = G_0 \triangleleft G_1 \triangleleft \cdots \triangleleft G_n = G,
\]
such that each subgroup \(G_i\) is normal in the next one, i.e., \(G_i \triangleleft G_{i+1}\) for all \(i\), and the series terminates in the trivial group $\{e\}$.
\end{defn}

Apart from subgroups, we need to define \emph{cosets}, as they are a central concept when analyzing how groups decompose. A coset is a subset of the group formed by a subgroup, but in contrast to a subgroup, it is not necessarily closed under the group operation.
\begin{defn}[Coset]
Let \(G\) be a group and \(H\) a subgroup of \(G\). For any element \(g \in G\), the \emph{left coset} of \(H\) in \(G\) with respect to \(g\) is the set
\[
gH := \{gh : h \in H\}.
\]
Similarly, the \emph{right coset} is
\[
Hg := \{hg : h \in H\}.
\]

Cosets partition the group \(G\) into disjoint subsets of equal size. 
\end{defn}
Consider, for example, the partition of the integers into even and odd. As the following example shows, the even numbers are a normal subgroup, the odd numbers are a coset, and together they form all integers.

\begin{example}
    The integers under addition form a group which we write as \(\mathbb{Z^+}\). 
    The even integers, written as \(2\mathbb{Z^+}\), form a subgroup of \(\mathbb{Z^+}\), as the sum of two even numbers is again even and the neutral element, 0, is even. 
    In particular, this is a normal subgroup, an easily verifiable immediate consequence of the commutativity of addition.
    
    The odd integers are a coset of \(\mathbb{Z^+}\) constructed from \(2\mathbb{Z}^+\) by adding 1 to each even number. That is, the odd integers form the set \(1+2\mathbb{Z}^+\). This is not a subgroup, as it does not contain the neutral element.
    The sets are disjoint, are of equal size by a 1-to-1 mapping of the elements, and together they form \(\mathbb{Z}\).
\end{example}

It is precisely the decomposition of a group by use of a normal subgroup that we are interested in. This procedure creates a new group structure called a \emph{factor group}, which describes the relation between a group and its normal subgroup.

\begin{defn}[Factor Group]\label{def:factor_group}
Let $N$ be a normal subgroup of a group $G$. A \emph{factor group} (or quotient group) \(G/N\) is the set of left cosets of \(N\) in \(G\), concretely $(G / N=\{gN:g\in G\}$, equipped with a natural group operation.
\end{defn}

Let us again consider the even and odd integers. We have seen that this is the full set of cosets of \(\mathbb{Z^+}\). If we abstract away the individual numbers and only consider the concepts of \emph{even} and \emph{odd}, we can see that the two concepts form a group structure, with the natural operation defined as per the sketch below.

\begin{example}\label{ex:2}
    Consider the set \( \{\text{Even},\text{Odd}\}\) with the binary operation defined as per Figure~\ref{fig:even_odd}. This figure is in fact the Cayley diagram of~\(\mathbb{Z}/2\mathbb{Z}\).
    
    \begin{figure}[ht]
    \centering
    \includegraphics[width=0.5\linewidth]{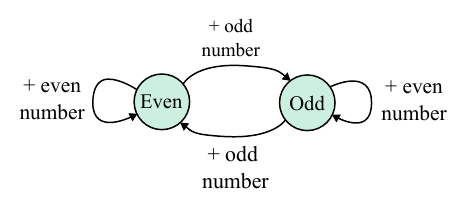}
    \caption{The \emph{Even} and \emph{Odd} cosets of \(\mathbb{Z}\) themselves form a group under addition, in particular the factor group \(\mathbb{Z}/2\mathbb{Z}\). The even numbers act as the neutral element, addition is associative, and the integers are closed under addition.} %The group structure is conceptually equivalent to the \emph{Parity} finite-state automaton.} 
    \label{fig:even_odd}
\end{figure}

\end{example}

A solvable group is one in which we can create a sequence of such factor groups with each factor group being commutative. The sequence should also end with the trivial group containing only the neutral element.
\begin{defn}[Solvable Group]
A finite group \(G\) is \emph{solvable} if there exists a finite sequence of subgroups
\[
\{e\} = G_0 \triangleleft G_1 \triangleleft \cdots \triangleleft G_n = G
\]
such that each \(G_i\) is a normal subgroup of \(G_{i+1}\) and each factor group \(G_{i+1}/G_i\) is commutative.
Equivalently, \(G\) has a normal series whose successive quotients are commutative groups.
\end{defn}

We will understand these concepts better if we use them, for example, to argue about why $\mathbb{Z}_2 \times \mathbb{Z}_4$ is a solvable group.

\begin{example}\label{ex:solvable}

Consider the group \(G = \mathbb{Z}_2 \times \mathbb{Z}_4\). We can understand the group by considering its Cayley diagram visualized in Figure~\ref{fig:C24}:

\begin{figure}[h!]
    \centering
    \includegraphics[width=0.3\linewidth]{}
    \caption{The $\mathbb{Z}_2 \times \mathbb{Z}_4$ group Cayley diagram.} %The group structure is conceptually equivalent to the \emph{Parity} finite-state automaton.} 
    \label{fig:C24}
\end{figure}

We can identify the elements \(G = \mathbb{Z}_2 \times \mathbb{Z}_4\) using two integers \([a,b]\). We define the group operation as \([a,b]+[c,d]=[a+c \text{ mod } 2, b+d\text{ mod } 4]\). If we choose the upper-most element to be the neutral element \([0,0]\), and generate the rest of the group using \(+[0,1]\) and \(+[1,0]\), we obtain the Cayley diagram shown below.

\begin{figure}[h!]
    \centering
    \includegraphics[width=0.4\linewidth]{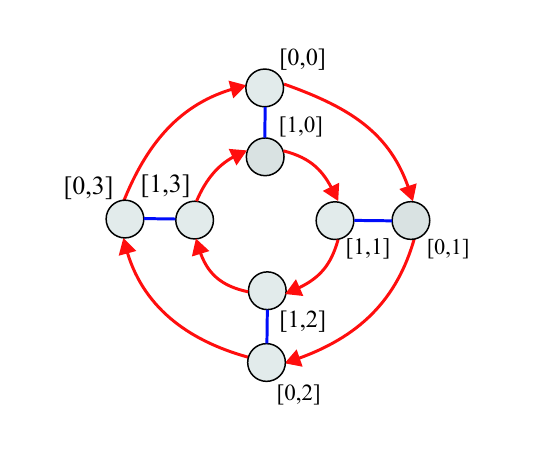}
    \caption{The $\mathbb{Z}_2 \times \mathbb{Z}_4$ group Cayley diagram, labeled by integer tuples.} %The group structure is conceptually equivalent to the \emph{Parity} finite-state automaton.} 
    \label{fig:C24_labeled}
\end{figure}

We can first notice that the outer ring in the diagram forms a subgroup. Adding any two elements on this ring produces an element on that same ring, and the neutral element \([0,0]\) is part of it. This group is isomorphic to the integers under addition modulo 4, written as \(\mathbb{Z}_4\).
The outer ring is in fact a normal subgroup of \(\mathbb{Z}_2 \times \mathbb{Z}_4\). When the group operation is commutative, as it is in this case, conjugation is a trivial operation and each subgroup is normal. 
Concretely, if we use commutativity in the definition of a normal subgroup (Definition~\ref{def:normal_subgroup}), we see that \(gng^{-1}=gg^{-1}n=n\) trivially holds. 
%If one draws the analogy to an agent traversing a finite-state machine, this means that if the agent is on the outer ring, switching to the inner ring using \(+[0,1]\) and then switching back using \(-[0,1]\) will always bring the agent back to the same state.

The outer ring forms a normal subgroup isomorphic to $\mathbb{Z}_4$, and we know that we can use a normal subgroup to construct a factor group as per Definition~\ref{def:factor_group}. We have seen this in Example~\ref{ex:2}, when we created a factor group dividing all integers using the even ones. 
In this case of partitioning \(G = \mathbb{Z}_2 \times \mathbb{Z}_4\) using \(\mathbb{Z}_4\), we obtain the same factor group as we did in Example 2. 
To this end, we will generate the coset \([1,0]+[0,\mathbb{Z}_4]\), which contains all elements of the inner ring in Figure~\ref{fig:C24_labeled}. It is not a subgroup, as it does not contain the neutral element. 
We can easily verify that the two cosets of \(\mathbb{Z}_2 \times \mathbb{Z}_4\) follow the behavior shown in Figure~\ref{fig:rings}.

\begin{figure}[ht]
    \centering
    \includegraphics[width=0.65\linewidth]{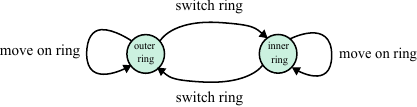}
    \caption{The factor group obtained by dividing $\mathbb{Z}_2 \times \mathbb{Z}_4$ by the outer ring normal subgroup.} %The group structure is conceptually equivalent to the \emph{Parity} finite-state automaton.} 
    \label{fig:rings}
\end{figure}

Abstracting away the details, this is conceptually the same figure as the one we have seen previously when dividing \(\mathbb{Z}\) by \(2\mathbb{Z}\).
The factor group is therefore \(\mathbb{Z}/2\mathbb{Z}\). We have now constructed the first part of the normal series, \(\mathbb{Z}_4 \triangleleft \mathbb{Z}_2 \times \mathbb{Z}_4\), and we know that the factor group \(\mathbb{Z}_2\times \mathbb{Z}_4/\mathbb{Z}_4 =\mathbb{Z}/2\mathbb{Z}\) is commutative because its group operation is isomorphic to integer addition modulo 2. 

We can now perform a similar exercise to derive that \(\mathbb{Z}_2 \triangleleft \mathbb{Z}_4\) and that \(\mathbb{Z}_4/\mathbb{Z}_2\) is a commutative group. 
However, we spare ourselves the effort by first noting that \(\{e\}\triangleleft \mathbb{Z}_4\) and that \(\mathbb{Z}_4/\{e\}\) is commutative. Firstly, the trivial group is a normal subgroup of any group. Secondly, \(\mathbb{Z}_4/\{e\}\) is commutative because \(\mathbb{Z}_4\) is commutative and for each group \(G\), \(G/\{e\}=G\). Both of these properties can be verified using the tools we introduced above.
With this, we complete the chain of normal subgroups with commutative factor groups, showing the solvability of \(\mathbb{Z}_2 \times \mathbb{Z}_4\) by 
\[
\{e\} \triangleleft \mathbb{Z}_4 \triangleleft \mathbb{Z}_2 \times \mathbb{Z}_4
\]

The entire exercise could of course have been skipped by directly noting that \(\mathbb{Z}_2 \times \mathbb{Z}_4\) is commutative. In this case, trivially stating \(\{e\} \triangleleft \mathbb{Z}_2 \times \mathbb{Z}_4\) suffices.
\end{example}

We are now equipped with a basic understanding of what a solvable group is. To better understand the implications of this concept for state-tracking with SSMs, we need to introduce a different, yet perhaps surprisingly related framework~\citep{straubing_book}, that of \emph{Boolean circuit complexity}.

\subsection{Circuit Complexity Theory}\label{app:circuit-complexity}

 The theory of circuit complexity aims to provide an abstract understanding of highly parallel algorithms. One might imagine that there exist problems for which it is difficult, if not impossible, to design one efficient algorithm which operates on arbitrary-length inputs, yet in which all input lengths admit a bespoke, highly efficient and parallel algorithm\footnote{As per~\citep{arora_barak_2009}, introductory remark to Chapter 6, Karp and Lipton propose a variant of this idea in 1982.}. Such considerations are especially important in cryptography, as an efficient algorithm for the factorization of n-bit integers would be a significant development for any large $n$. Boolean circuit complexity is the study of such parallel algorithms, or rather \emph{circuit families}. Let us first define a Boolean circuit.

\begin{defn}[Boolean Circuit]
   A \emph{Boolean circuit} is a directed acyclic graph where internal nodes are logic gates (typically AND, OR, and NOT), leaves are input variables or constants ($0$ or $1$), and there is a single designated output node. The circuit computes a Boolean function by propagating values from the inputs through the gates to the output.
\end{defn}

A Boolean circuit family is simply the collection of problem-specific Boolean circuits, with one circuit being defined per input length.

\begin{defn}[Boolean Circuit Family]
   A {Boolean circuit family} is a sequence of Boolean circuits $\{C_n\}_{n \in \mathbb{N}}$, where each circuit $C_n$ has $n$ input variables and produces a single output bit.
\end{defn}

We will now present the two classes of circuits directly relevant to our work.

\begin{defn}[$TC^i$ Circuit Complexity Classes]
   The class $\mathsf{TC}^i$ consists of languages decidable by families of Boolean circuits of polynomial size $O(poly(n))$ and depth $O(\log^i n)$, with unbounded fan-in AND, OR, NOT, and majority (threshold) gates.
\end{defn}

$TC^0$ consists of shallow (constant-depth) but arbitrary fan-in Boolean circuits of standard elements (AND, OR, NOT) augmented with a majority gate. Emulating solvable automata is proven to be within $TC^0$~\citep{sarrof2024expressivecapacitystatespace}. 

\begin{defn}[$NC^1$ Circuit Complexity Class]
   The class $\mathsf{NC}^1$ consists of languages decidable by families of Boolean circuits of polynomial size $O(poly(n))$ and depth $O(\log n)$, with AND, OR, and NOT gates of fan-in 2.
\end{defn}

$NC^1$ allows for logarithmic depth, but restricts the fan-in of the gates to 2. Emulating solvable automata is proven to be an $NC^1$-complete problem~\cite{barrington_1989}, meaning that all $NC^1$ problems can be reduced to it.

Many relations between circuit complexity classes remain unknown. It is, for example, not known whether $TC^0\neq NC^1$.
It is in fact not even known whether $TC^0$ is equal to $P$, the class of problems decidable in polynomial time~\citep{arora_barak_2009}. 
As a consequence, certain restrictions have been placed on the generation of Boolean circuits in order to allow for a more realistic and fine-grained study of their properties. The central restriction is that of \emph{uniform generation}, or \emph{uniformity}.

\begin{defn}[Uniform Boolean Circuits]
   A family of Boolean circuits $\{C_n\}_{n \in \mathbb{N}}$ is \emph{uniform} if there exists a deterministic Turing machine that, given $1^n$, outputs a description of $C_n$ in time bounded by some resource (e.g., logarithmic space or logarithmic time). Of particular interest to us is \emph{log-space uniformity}, in which the Turing machine operates using $O(\log n)$ space for input length \(n\).
\end{defn}

\begin{figure}[ht]
    \centering
    \includegraphics[width=0.82\linewidth]{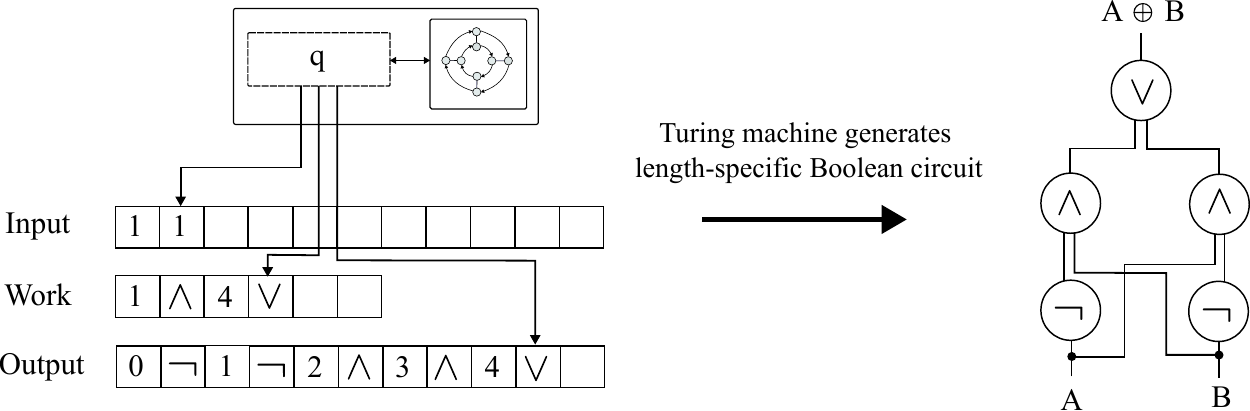}
    \caption{A visualization of a uniformly generated circuit family, with the specific example being the XOR circuit for inputs of length 2. On the left we see a Turing machine whose input is $11$, signifying that the circuit should be generated for inputs of length 2. Either the size of the work tape or the number of computational steps is upper bounded as a function of the input length. The output tape contains the description of the circuit visualized on the right. This is in fact a \emph{log-space uniform} circuit family, symbolized by the short work tape. The XOR circuit is highly related to the parity automaton, as it outputs $1$ if and only if the number of ones in the input is odd.}
    \label{fig:uniform_XOR}
\end{figure}

\subsection{SSM Expressivity Results}\label{app:ssm-expressivity}

We finally arrive at the results that this section was building up to, bounds on the expressivity of SSMs within the framework of algebraic groups and Boolean circuits. 
We do not prove any new theorems here nor restate the proofs, opting instead to refer to work which proves the claims. 
We will use the notion \emph{group world problem}, defined as follows~\cite{merrill_illusion_2024}:
\begin{defn}[Group Word Problem]
    Let $(M,\cdot)$ be a finite group. The word problem of $M$ is defined as evaluating the product of arbitrary sequences of elements of $M$. That is, given a sequence $ m_0m_1\dotsm_k$, solving the word problem involves returning $ m\in M $ such that $m_0\cdot m_1 \cdots m_k=m$.
\end{defn}

A classic result then states:
\begin{theorem}[\cite{barrington_1989}, Theorem 5]\label{the:NC1_nonsolv}
    The word problem for any nonsolvable group is complete for $NC^1$ under $AC^0$ reductions.
\end{theorem}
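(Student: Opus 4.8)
The plan is to establish \emph{completeness}, which decomposes into a membership claim and a hardness claim. For membership, I would argue that the word problem of any finite group $G$ lies in $NC^1$: given a string of generators $g_1 g_2 \cdots g_n$, associativity lets us evaluate their product by a balanced binary tree of group multiplications of depth $\lceil \log_2 n\rceil$, where each multiplication is a lookup in the constant-size Cayley table and is thus computable by a fan-in-$2$ circuit of constant depth; a final constant-depth comparison against the identity decides acceptance. This yields a log-depth, polynomial-size, fan-in-$2$ circuit, so the word problem is in $NC^1$ irrespective of solvability.

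The substance lies in the hardness direction: every language in $NC^1$ must $AC^0$-reduce to the word problem of the fixed nonsolvable group $G$. Here I would follow Barrington's branching-program argument. First I would invoke Spira's equivalence of $NC^1$ with balanced Boolean formulas of depth $O(\log n)$ over $\{\wedge,\neg\}$. Then I would introduce the notion of a \emph{$G$-program}: a sequence of instructions, each selecting one of two group elements according to a designated input bit, whose value is the ordered product of the selected elements. I would say a program \emph{$\alpha$-computes} a Boolean function $f$, for a fixed $\alpha\neq e$, if it outputs $\alpha$ when $f=1$ and $e$ when $f=0$. The goal is to show, by induction on formula depth, that every such formula is $\alpha$-computed by a $G$-program of polynomial length.

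The inductive engine is the commutator gadget. Suppose $P_1$ $\sigma$-computes $f_1$ and $P_2$ $\tau$-computes $f_2$; then concatenating $P_1 P_2 P_1^{-1} P_2^{-1}$, where $P_i^{-1}$ reverses $P_i$ and inverts its group elements, yields the commutator $[\sigma,\tau]=\sigma\tau\sigma^{-1}\tau^{-1}$ exactly when $f_1=f_2=1$ and the identity $e$ otherwise, so it $[\sigma,\tau]$-computes $f_1\wedge f_2$. Negation is handled by relabeling, i.e.\ multiplication by a fixed element. Since $\{\wedge,\neg\}$ is complete, this closes the induction; each $\wedge$ quadruples the program length, so a depth-$O(\log n)$ formula produces a program of length $4^{O(\log n)}=\mathrm{poly}(n)$. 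Concatenating the selected generators gives a word over $G$ equal to a fixed non-identity element iff the formula accepts, and laying out this word from the formula is computable in $AC^0$, giving the reduction.

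The main obstacle I anticipate is algebraic rather than combinatorial: the gadget must be seeded with elements whose commutator is conjugate to the representatives it feeds into later gadgets, so that the induction stays inside a usable conjugacy class of nonidentity elements. For the concrete group $S_5$ this is Barrington's observation that one can pick two $5$-cycles whose commutator is again a $5$-cycle, with enough conjugating elements to relabel outputs. For an \emph{arbitrary} nonsolvable $G$ I would instead use that nonsolvability forces the derived series to terminate in a nontrivial perfect subgroup $P=[P,P]$, so that every element of $P$ is a bounded product of commutators; this lets me assemble the $\wedge$ gadget from a constant number of commutator blocks and verify closure under conjugation inside $P$. Making this bounded-product construction precise while keeping the program length polynomial is the delicate step, and it is exactly where nonsolvability, rather than mere non-abelianness, is indispensable.
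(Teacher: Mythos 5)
The paper does not prove this statement: it is quoted verbatim as Theorem 5 of \cite{barrington_1989}, and Appendix A.4 explicitly declines to restate any proofs, so there is no in-paper argument to compare against. Judged against Barrington's actual proof, your reconstruction is essentially faithful and correct. The membership half (balanced multiplication tree over a constant-size Cayley table, giving fan-in-2 depth $O(\log n)$) is standard and right. The hardness half correctly identifies the two load-bearing ideas: Spira's balancing of $NC^1$ into $O(\log n)$-depth formulas, and the commutator gadget $P_1P_2P_1^{-1}P_2^{-1}$ that $[\sigma,\tau]$-computes $f_1\wedge f_2$, with the reduction itself being a mere projection and hence in $AC^0$. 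Your closing paragraph also puts the finger on exactly the right place where nonsolvability (as opposed to non-abelianness) enters: the derived series terminates in a nontrivial perfect subgroup $H=[H,H]$, and finiteness of $H$ gives a uniform bound $k$ on the number of commutators needed to express any element. The one point I would tighten: rather than tracking conjugacy classes of outputs, strengthen the induction hypothesis to ``for every $\alpha\in H\setminus\{e\}$, the subformula is $\alpha$-computable in length $L(d)$.'' This simultaneously handles negation (to $\beta$-compute $\neg f$, take a program that $\beta^{-1}$-computes $f$ and append the constant $\beta$) and the multi-commutator AND (write $\alpha=\prod_{i=1}^{k}[\sigma_i,\tau_i]$ with nonidentity factors and concatenate $k$ gadgets, each drawing on the hypothesis at targets $\sigma_i,\tau_i$), and it turns your ``quadruples'' into a factor of $4k$ per gate, i.e.\ $(4k)^{O(\log n)}=\mathrm{poly}(n)$, which is still polynomial. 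With that adjustment your sketch is a complete outline of Barrington's argument; the only remaining fussiness is the uniformity bookkeeping for constructing the balanced formula and laying out the program, which Barrington handles but which is routine.
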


The separation of word problems over solvable and non-solvable groups rests on the widely-held conjecture that $TC^0\neq NC^1$~\cite{arora_barak_2009}, combined with the result that word problems of solvable groups belong to $TC^0$:

\begin{theorem}[\cite{barrington_nc1_1992}, Theorem 8]
    The word problem for any solvable group is in $TC^0$.
\end{theorem}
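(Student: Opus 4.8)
The plan is to prove membership in $TC^0$ by induction on the derived length $d$ of the finite solvable group $G$, peeling off one abelian layer at a time and reducing the word problem over $G$ to a word problem over its commutator subgroup plus a single layer of modular counting. The one indispensable ingredient I would isolate first as a lemma is that \emph{modular counting} — and more generally the simultaneous computation of all prefix sums modulo a fixed integer $m$ — lies in $TC^0$: threshold (majority) gates compute the exact number of active inputs in constant depth, and reducing the resulting $O(\log n)$-bit count modulo the constant $m$ is an $AC^0$ postprocessing step, so $MOD_m$ and its prefix version are in $TC^0$. This is exactly where the power of threshold gates is used, and it is also what pins the answer to $TC^0$ (indeed even $ACC^0 \subseteq TC^0$).

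For the base case, $G$ is abelian, so $G \cong \mathbb{Z}_{n_1}\times\cdots\times\mathbb{Z}_{n_r}$. Evaluating $g_1\cdots g_k$ then amounts, coordinatewise, to summing fixed integer increments modulo the constant modulus $n_i$, where the increment contributed by $g_j$ in coordinate $i$ is a constant-size lookup on the letter $g_j$. By the counting lemma this is in $TC^0$, and the same subcircuits deliver every prefix image at once.

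For the inductive step, given $G$ of derived length $d\ge 2$, I set $N := G' = [G,G]$, so that $N$ has derived length $d-1$ and $G/N$ is abelian; write $\phi : G \to G/N$ for the quotient map and fix once and for all a transversal $c \mapsto r(c) \in G$ of $N$ in $G$ with $r(N)=e$. Given the input word $g_1\cdots g_k$, I would (i) compute all prefix images $\bar P_j := \phi(g_1\cdots g_j)\in G/N$ by the abelian base case applied to $G/N$ (prefix modular counting, in $TC^0$); (ii) form $r_j := r(\bar P_j)$ and the conjugated letters $h_j := r_{j-1}\,g_j\,r_j^{-1}$, each of which lands in $N$ because $\phi(h_j)=e$ and each of which is a constant-size ($AC^0$) function of the triple $(\bar P_{j-1}, g_j, \bar P_j)$. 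The telescoping identity $\prod_{j=1}^k h_j = g_1\cdots g_k\, r_k^{-1}\in N$ then reduces the original product to a word problem over $N$, solved in $TC^0$ by the induction hypothesis, followed by one more constant-size multiplication by $r_k$ to recover $g_1\cdots g_k = \big(\prod_j h_j\big) r_k$. Since $TC^0$ is closed under composition and the recursion terminates after $d = O(1)$ levels, each contributing constant depth and polynomial size, the whole family stays in $TC^0$; uniformity holds because every transversal lookup, conjugation table, and modular-counting subcircuit depends only on the fixed finite group $G$.

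I expect the main obstacle to be the counting lemma together with its prefix form: one must verify that threshold gates yield not merely the final quotient image but \emph{every} prefix image $\bar P_j$ simultaneously in constant depth and polynomial size, and that the transversal/conjugation bookkeeping passed between layers is genuinely constant-size, so that the $d$-fold composition does not inflate the depth beyond the constant budget. The purely algebraic facts — that $G'$ has strictly smaller derived length and that $G/G'$ is abelian — are standard and carry no circuit-complexity content.
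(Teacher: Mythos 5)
The paper does not actually prove this theorem: it is imported verbatim from the cited reference, and the appendix states explicitly that the expressivity results in Section A.4 are referenced rather than re-proved. So there is no in-paper argument to compare yours against line by line; what can be said is that your blind proof is correct and is essentially the standard derivation of this fact. Your two ingredients both check out: prefix modular counting is in $TC^0$ (indeed in $ACC^0$, since the only non-$AC^0$ gates you need are $MOD_m$ counters, so your argument actually proves the sharper containment $ACC^0 \subseteq TC^0$ that the original reference establishes via programs over solvable monoids and a Krohn--Rhodes-flavoured decomposition); and the inductive step is sound: with $r_0 = r(N) = e$ the product $\prod_j h_j$ with $h_j = r_{j-1} g_j r_j^{-1}$ telescopes to $g_1\cdots g_k\, r_k^{-1}$, each $h_j$ lies in $N=[G,G]$ because $\phi(h_j) = \bar P_{j-1}\,\phi(g_j)\,\bar P_j^{-1} = e$, and each $h_j$ is a constant-size lookup in $(\bar P_{j-1}, g_j)$ since $\bar P_j$ is determined by these. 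The composition of $d$ such layers, with $d$ the derived length of the fixed finite group $G$, keeps depth constant and size polynomial, and uniformity is immediate because all lookup tables depend only on $G$. The main difference from the literature's route is that you induct directly on the derived series of the group rather than passing through the monoid/program framework; your version is more elementary and self-contained, while the monoid framework buys the full Barrington--Th\'erien dichotomy (solvable monoids in $ACC^0$ versus $NC^1$-completeness for non-solvable groups) that the paper's Theorem 5 also relies on. One small point worth making explicit if you write this up: the induction hypothesis must be stated for the word problem with alphabet equal to all of $N$ (not just a generating set), since every element of $N$ can arise as some $h_j$; this matches the paper's definition of the group word problem, so no actual gap results.
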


A recent work analyses the expressivity of diagonal SSMs by demonstrating that the individual computations in the SSM can be emulated by $\text{logspace-uniform } TC^0$ circuits, under the assumption that the numerical representation capacity is logarithmic in the sequence length.

\begin{theorem}[\cite{merrill_illusion_2024}, Theorem 4.4]\label{the:merill}
    Every fixed depth log-precision diagonal SSM can be emulated by a logspace-uniform $TC^0$ circuit.
\end{theorem}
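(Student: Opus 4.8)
The statement is quoted verbatim from \cite{merrill_illusion_2024}, so the plan is to reconstruct the argument underlying their Theorem~4.4 rather than to derive a new result. The starting point I would use is the observation that, because every transition matrix $\Lambda_t$ is diagonal, the recurrence $x_t = \Lambda_t x_{t-1} + b_t$ decouples across coordinates, so for each coordinate $i$ one obtains the explicit unrolling
\begin{equation*}
x_t^{(i)} = \Big(\prod_{r=1}^{t}\lambda_r^{(i)}\Big) x_0^{(i)} + \sum_{s=1}^{t}\Big(\prod_{r=s+1}^{t}\lambda_r^{(i)}\Big) b_s^{(i)},
\end{equation*}
where $\lambda_r^{(i)}$ denotes the $i$-th diagonal entry of $\Lambda_r$. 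This reduces the entire recurrence to iterated \emph{products} of the diagonal entries and an iterated \emph{sum} of the resulting terms, both taken over at most $n$ operands of $O(\log n)$-bit precision.

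The core of the argument is then to invoke the known circuit-complexity primitives. First I would cite the result that iterated multiplication of $n$ numbers --- and hence every prefix product $\prod_{r=s+1}^{t}\lambda_r^{(i)}$ --- lies in (\mbox{DLOGTIME}-, therefore logspace-) uniform $TC^0$, together with the standard fact that iterated addition of $n$ log-precision numbers is in $TC^0$. Elementwise products and the affine maps $B(u_t)u_t$, $C(u_t)x_t$, $D(u_t)u_t$, as well as the matrix--vector products generating $\Lambda_t$ and $b_t$, are single multiplications and additions and thus trivially $TC^0$. The activation $\psi$ and the sigmoid/GeLU nonlinearities are applied coordinatewise to log-precision arguments, so each can be realized in $TC^0$, either by a majority-gate approximation to the required precision or, since only polynomially many distinct $O(\log n)$-bit inputs exist, by a logspace-describable lookup table. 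Composing these pieces shows that a single diagonal layer, computing all $x_t^{(i)}$ in parallel, is a logspace-uniform $TC^0$ circuit of polynomial size.

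To lift this from one layer to the full model I would use closure of $TC^0$ under composition of a constant number of $TC^0$ subcircuits: stacking $L=O(1)$ layers adds only a constant to the depth and keeps the size polynomial, and the wiring between layers is index-computable in $O(\log n)$ space, so logspace-uniformity is preserved. I expect the main obstacle to be the two nontrivial primitives --- establishing that iterated multiplication (equivalently powering and division) is in logspace-uniform $TC^0$ at the stated precision, which rests on the substantial Hesse--Allender--Barrington construction --- together with the careful bookkeeping needed to guarantee that the coordinatewise nonlinearities and the complex-valued arithmetic stay within the $O(\log n)$-bit budget without inflating depth or breaking uniformity. It is precisely the diagonal structure that collapses the recurrence into iterated scalar products; for a non-diagonal $A_t$ the analogous unrolling would require an iterated matrix product, which is not known to be in $TC^0$, and this is exactly the gap that motivates the paper's more expressive $\mathbb{H}^{N\times N}$ parametrization.
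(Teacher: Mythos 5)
The paper does not prove this statement: it is imported verbatim from \cite{merrill_illusion_2024} (Theorem~4.4), and Appendix~A.4 explicitly states that the expressivity results there are cited rather than re-proved. Your reconstruction nonetheless matches the argument of the cited source --- coordinatewise decoupling of the diagonal recurrence into iterated scalar products and sums, the Hesse--Allender--Barrington result placing iterated multiplication in uniform $TC^0$, lookup tables for local log-precision operations, and closure under constant-depth composition --- so there is nothing in the paper to contrast it with, and no gap to report beyond the precision bookkeeping you already flag yourself.
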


This is the main upper bound we refer to in the paper, and is what we mean when we say that \emph{non-solvable automata cannot be emulated by diagonal SSMs}, which is implied by combining the strict inclusion $\text{logspace-uniform } TC^0\subset TC^0$ with the conjectured $TC^0 \cap NC^1\neq \emptyset$, and utilizing Theorem~\ref{the:merill}. This bound rests on an unproven conjecture, but it is supported by significant experimental evidence~\cite{merrill_illusion_2024,cirone_theoretical_2024,grazzi2024unlocking,sarrof2024expressivecapacitystatespace,terzic2025sdssm}\footnote{Using tools from \emph{rough path theory},~\citep{cirone_theoretical_2024} prove the restrictiveness of diagonal time-variant SSMs as compared to dense ones without relying on unproven circuit complexity conjectures.}. 

\section{Architectural Details}

% \subsection{Forward Pass}
% The forward pass of an SSM adopting a PD parametrization can be performed efficiently according to Section~\ref{sec:background_pscan}, Figure~\ref{fig:model-sketch}, and Property~\ref{property:computational_efficiency}.

\subsection{Full Architecture}

This work mainly describes a new method for generating the SSM transition matrices. The method is itself embedded into a larger architecture, which mostly follows standard practice in neural network design as pioneered by the Transformer~\citep{vaswani_attention_2017}. That is, we embed the $PD$ parametrization into neural network layers which consist of nonlinear feed-forward networks, vector normalization, and residual connections, calling the entire model~\name{}. The connection pattern of \name{} closely follows that of pre- or post-norm Transformers, and can be seen in~\citep{pre_post_norm}. We show two sketches of the full PD-SSM model in Figure~\ref{fig:full_model_sketch}.

\paragraph{Readout} The readout from the state, $\psi(x_t)$, is different from standard practice, which typically implements it as $\psi(x_t)=Re\{x_t\}$~\citep{orvieto_resurrecting_2023,gu_efficiently_2022,gu_mamba_2023, smith_simplified_2023}. Instead of taking the real part of $x_t$, we apply a transformation on the concatenation of the real and imaginary part of $x_t$, denoted as $Re\{x_t\}||Im\{x_t\}$. Depending on the benchmark, the readout is either a GELU-activated nonlinear network:
\begin{align*}
    \psi(x_t)=W^{Read,O}\sigma_{Gelu}(W^{Read,I}(Re\{x_t\}||Im\{x_t\})+b^{Read,I})+b^{Read,O}
\end{align*}
Or, if we refer to it as a linear readout, we concretely mean the transformation:
\begin{align*}
    \psi(x_t)=W^{Read}(Re\{x_t\}||Im\{x_t\})+b^{Read}
\end{align*}
For the long-range arena tasks, we employ the nonlinear readout and a deeper architecture. For the state tracking tasks, we apply the linear readout and a single-layer architecture.
See Appendix~\ref{app:results}.

\begin{figure}[ht]
    \centering
    \begin{subfigure}[b]{0.45\textwidth}
        \centering
        % Left subfigure (replace with your other image or plot)
        \includegraphics[width=\textwidth]{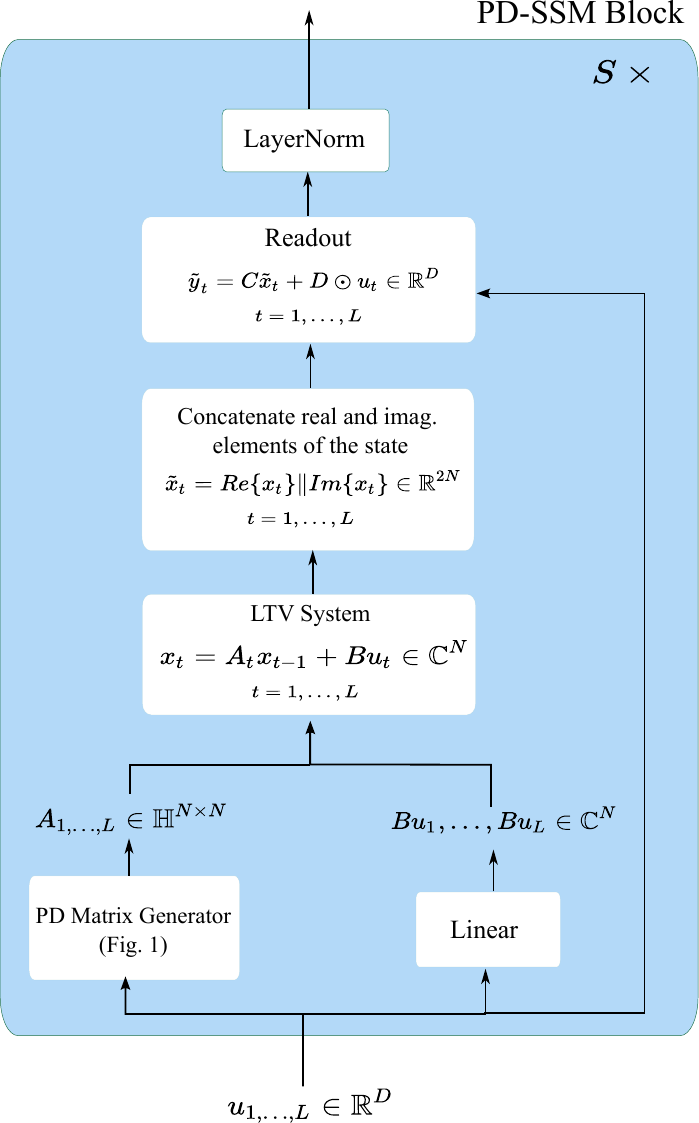}
        \caption{Full PD-SSM model with a linear readout, as used in the state-tracking experiments.}
        \label{fig:lin_pdssm}
    \end{subfigure}
    \hfill
    \begin{subfigure}[b]{0.45\textwidth}
        \centering
        % Right subfigure (your high-res plot)
        \includegraphics[width=\textwidth]{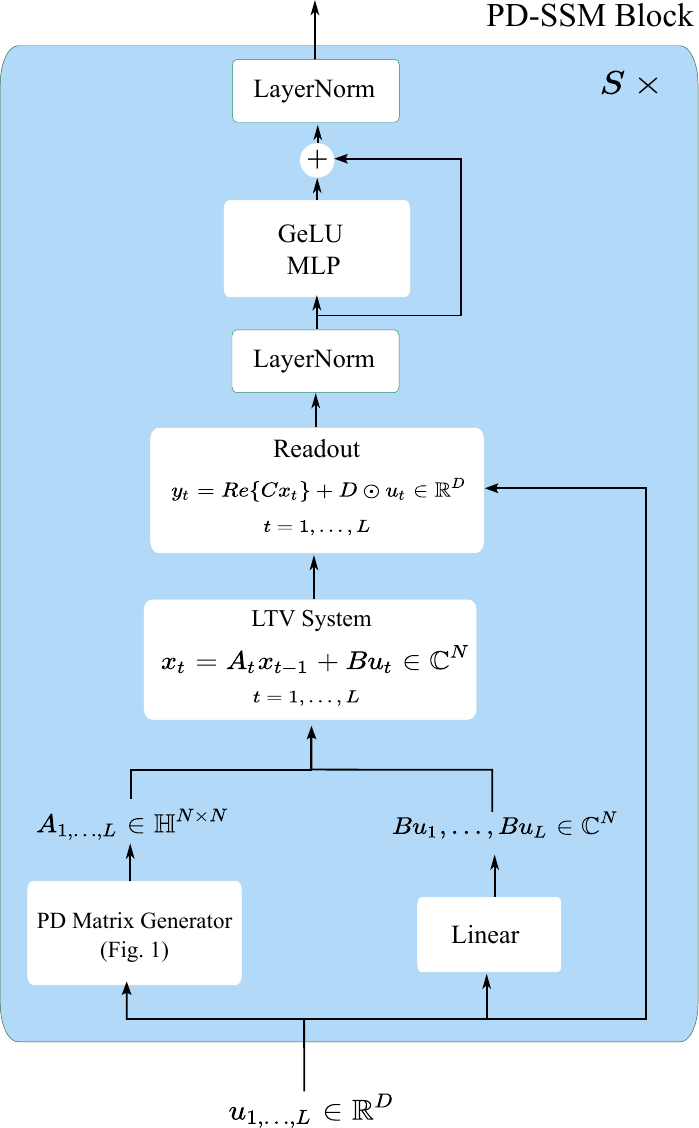}
        \caption{Full PD-SSM model with a nonlinear readout, as used in the LRA and time-series experiments.}
        \label{fig:nonlin_pdssm}
    \end{subfigure}
    \caption{Full PD-SSM models we used in our state-tracking and LRA experiments. In the time-series classification experiments, we used a model equivalent to the one shown in subfigure~\ref{fig:lin_pdssm}, with the exception that $\tilde{x}_t=Re\{x_t\}\in\mathbb{R}^N$.}
    \label{fig:full_model_sketch}
\end{figure}

\subsection{Efficiently Computing the Parallel Scan Gradients}

%In this part, we will reuse notation from~\cite{werbos_bptt_1980} to write the derivative of a loss $L$ with respect to $x_k$ as $\frac{\partial^+ L}{\partial x_k}$, satisfying the below equation:

The total derivative of a loss $L$ with respect to $x_k$, written as $\frac{d L}{d x_k}$, satisfies the equation below:
% \begin{align*}
%     \frac{\partial L(x_k, x_{k+1}(x_k))}{\partial x_k} &=  \frac{\partial L(x_k, x_{k+1})}{\partial x_k} + \frac{\partial L(x_{k+1})}{\partial x_{k+1}}\frac{\partial x_{k+1}}{\partial x_k} = \frac{\partial L(x_k, x_{k+1})}{\partial x_k} + \frac{\partial L(x_{k+1})}{\partial x_{k+1}}A_{k+1}
% \end{align*}
\begin{align*}
    \frac{d L}{d x_k}=\frac{\partial L}{\partial x_k}+\frac{d L}{\partial x_{k+1}}\frac{\partial x_{k+1}}{\partial x_k}
\end{align*}

In our notation, \(\frac{d L}{d x_k}\) explains the total effect of $x_k$ on $L$ via all of the intermediate computations, whereas \(\frac{\partial L}{\partial x_k}\) denotes the direct effect of $x_k$ on $L$, ignoring the influence of $x_k$ on the rest of the sequence. 
The expression above is equivalent to:
\begin{align*}
    \frac{d L}{d x_k}=\frac{\partial L}{\partial x_k}+\frac{d L}{d x_{k+1}}A_{k+1}
\end{align*}

We can see that computing the derivative of $L$ with respect to the states $x_k$ is a \emph{linear recurrence with sparse transition matrices $A_{k+1}$}.
%
%from Algorithm~\ref{alg:pd_product}.
%
Computing the gradient thus reduces to a reverse parallel scan with matrix multiplication complexity of $O(N)$.
Within the framework of the parallel scan introduced in Section~2.4, gradient computation can then be parallelized using the following associative operation:

% \begin{align*}
%     S_i &=(A_{i+1}, \frac{dA_{i+1}}{d\theta}x_i) \\
%     S_0 &=(\mathbb{I}_n,\frac{dA_1}{d\theta}x_0)
% \end{align*}

\begin{align*}
    (A_i,\frac{d L}{d x_{i-1}})\bullet (A_{i+1},\frac{d L}{d x_{i}})=(A_iA_{i+1},\frac{d L}{d x_{i}}A_i+\frac{d L}{d x_{i-1}})
\end{align*}

\section{Proofs}\label{app:proofs}

We first restate and prove the two properties of the space of column one-hot matrices, $\mathbb H^{N \times N}$.

\begin{manualproper}{1}[Algebraic Structure of One-hot Column Matrices]
    $\mathbb H^{N \times N}$ is a monoid under matrix multiplication, i.e., it is closed under associative matrix multiplication and contains the identity.
\end{manualproper}
\begin{proof}
    It is easy to verify that $\mathbb H^{N \times N}$ contains the identity, as it is a column one-hot matrix. It is also trivial to verify that the product of two column one-hot matrices remains column one-hot. Given $A,B \in \mathbb H^{N \times N}$, the columns of $B$ \emph{select and scale} the columns of $A$. More precisely, the entry in row $i$, column $j$ of the product matrix $C = AB$ is given by:
        \[
        C_{ij} = \sum_{k=1}^N A_{ik} B_{kj}
        \]
        
        Since for each fixed $j$, the matrix $B$ has exactly one non-zero entry in column $j$, say at row $k = r$, all other terms in the sum are zero. That is,
        \[
        B_{kj} = 0 \quad \text{for all } k \neq r, \quad \text{and } B_{rj} \neq 0
        \]
        
        So the sum reduces to a single term:
        \[
        C_{ij} = A_{i r} \cdot B_{rj}
        \]
        
        Now, since matrix $A$ also has exactly one non-zero entry in each column, the $r$-th column of $A$ has only one non-zero value, say at row $i = s$. Therefore,
        \[
        A_{i r} = 0 \quad \text{for all } i \neq s, \quad \text{and } A_{s r} \neq 0
        \]
        
        This implies that for each column $j$ of the product $C$, there is exactly one row $i$ such that $C_{ij} \neq 0$, and all other entries in that column are zero.
\end{proof}

\begin{manualproper}{2}[Computational Efficiency of Matrix Multiplication in $\mathbb H^{N \times N}$]
    Let $A,B\in\mathbb{H}^{N\times N}$. Then $C=AB \in \mathbb{H}^{N\times N}$ can be computed in $\Theta(N)$ arithmetic operations.
\end{manualproper}
\begin{proof}
    As per the previous proof, in order to compute $C=AB$ for $A,B \in \mathbb H^{N \times N}$, we only need to compute one element in each column of $C$. Figure 3 in the main text outlines the details of the computation.
\end{proof}

The following proof is an application of a chain of inequalities that shows that under certain assumptions, which are fulfilled by the PD-parametrization, the SSM is BIBO-stable.

\begin{manualprop}{1}[System Stability under PD-Parametrization]
    Let $\varepsilon \in (0,1]$ and consider the state transition $x_k = A_k x_{k-1}+b_k$ with $A_k \in \mathbb H^{N \times N} : \|A_k\|_\infty \leq 1- \varepsilon$. Let further $\|x_{0}\|_2 \leq B$ and $\|b_k\|_2 \leq B$ for $B<\infty$. Then it holds that
    \begin{equation}
        \|x_k\|_2 \leq \sqrt{N}B / \varepsilon \quad \forall k.
    \end{equation}
\end{manualprop}

\begin{proof}

    Expanding the recursion of the state transition results in 
    \begin{equation*}
        x_k = (\prod_{j=k}^1A_j) x_0 + \sum_{t=1}^{k} (\prod_{j=k}^{t+1} A_{j} )b_t,
    \end{equation*}
    where we define $\prod_{j=k}^{i}A_j = \mathbb{I}$ whenever $i>k$, with $\mathbb{I}$ denoting the identity matrix.
    
    We bound the norm of the state as follows:
    \begin{align}
        \|x_k\|_2 &= \| (\prod_{j=k}^1A_j) x_0 + \sum_{t=1}^{k} (\prod_{j=k}^{t+1} A_{j} )b_t \|_2 \\
        &\leq \| (\prod_{j=k}^1A_j) x_0 \|_2 + \| \sum_{t=1}^{k} (\prod_{j=k}^{t+1} A_{j} )b_t \|_2 \\
        &\leq \| \prod_{j=k}^1A_j  \| \|x_0 \|_2 + \sum_{t=1}^k \|  \prod_{j=k}^{t+1}A_j\| \|b_t \|_2  \\
        &\leq B (\| \prod_{j=k}^1A_j  \|  + \sum_{t=1}^k \|  \prod_{j=k}^{t+1}A_j\| ) \label{eq:proof_system_stability_with_pd_parametrization_triangle_inequality}
    \end{align}
    where the first inequality is the result of the triangle inequality, the second inequality results from the definition of the matrix spectral norm in both terms (specifically, denoting the spectral norm of a square matrix $A$ as $\| A\|$, it holds that for a vector $b$, $\|Ab\|_2 \leq \| A \| \| b \|_2$) and the triangle inequality in the second term, and the final inequality is due to the assumptions on the norms of $x_0$ and $b_t$.

    To bound the spectral norms of the two products of the form $\textstyle\prod_{j=k}^i A_j$, we will use the fact that for general complex-valued matrices $A$, it holds that $\| A \| \leq \|A \|_F$. We will additionally leverage the algebraic structure of PD matrices, i.e., the fact that $\textstyle\prod_{j=k}^iA_j, k\geq i$ has a column one-hot structure, which enables us to use the trivial bound $\| \textstyle\prod_{j=k}^iA_j \|_F \leq \sqrt{N} \| \textstyle\prod_{j=k}^iA_j\|_\infty$. We will combine these results to derive the following upper bound:
    \begin{equation*}
        \| \prod_{j=k}^i A_j \| \leq \| \prod_{j=k}^i A_j \|_F \leq \sqrt{N}\| \prod_{j=k}^i A_j \|_\infty \leq \sqrt{N}(1-\epsilon)^{k-i+1}
    \end{equation*}

    We will prove this by induction, assuming that the follwing formula holds for a fixed $i$ and arbitrary $k\geq i$:
    \begin{equation}
        \| \prod_{j=k}^i A_j \|_\infty \leq (1-\epsilon)^{k-i+1} \label{eq:spectral_norm_product_inequality}
    \end{equation}

    The base case $k=i$ holds as a direct consequence of the assumption's proposition:
    \begin{equation*}
        \| \prod_{j=i}^i A_j \|_\infty = \| A_i \|_\infty \leq  (1-\epsilon) 
    \end{equation*}

    Suppose now that Equation~\ref{eq:spectral_norm_product_inequality} holds for some $k>i$. Then, for $k+1$,
    \begin{align*}
        \| \prod_{j=k+1}^i A_j \|_\infty &= \| A_{k+1} \prod_{j=k}^i A_j  \|_\infty = \|P_{k+1} D_{k+1} \prod_{j=k}^i A_j \|_\infty
    \end{align*}
    where we decomposed $A_{k+1}$ into the column one-hot matrix $P_{k+1}$ and the complex diagonal matrix $D_{k+1}$.
    By the proposition's assumption, $\|D_{k+1} \|_\infty \leq 1-\epsilon$. By the induction's assumption, $ \| \textstyle\prod_{j=k}^i A_j\|_\infty \leq (1-\epsilon)^{k-i+1}$. Since $D_{k+1}$ is diagonal, $\| D_{k+1} \textstyle\prod_{j=k}^i A_j \|_\infty \leq (1-\epsilon)^{k-j+2} $. Finally, $P_{k+1}$ only rearranges the entries of $D_{k+1} \textstyle\prod_{j=k}^i A_j$, so that, as desired,
    \begin{align*}
         \| \prod_{j=k+1}^i A_j \|_\infty = \| A_{k+1} \prod_{j=k}^i A_j  \|_\infty = \|P_{k+1} D_{k+1} \prod_{j=k}^i A_j \|_\infty  \leq (1-\epsilon)^{(k+1)-j +1}
    \end{align*}

    We obtain the final result by plugging these bounds into Equation~\eqref{eq:proof_system_stability_with_pd_parametrization_triangle_inequality}, i.e., we have, with $k\geq 1$:
    \begin{align*}
        \|x_k\|_2 &\leq  B (\| \prod_{j=k}^1A_j  \|  + \sum_{t=1}^k \|  \prod_{j=k}^{t+1}A_j\| )  \\
        &\leq B(\sqrt{N}\|\prod_{j=k}^1A_j \|_\infty +\sum_{t=1}^k \sqrt{N} \|\prod_{j=k}^{t+1}A_j \|_\infty) \\
        &\leq B\sqrt{N} \left[ (1-\epsilon)^k  + \sum_{t=1}^k (1-\epsilon)^{k-t} \right] \\
        &= B\sqrt{N} \left[ (1-\epsilon)^k(1+ \sum_{t=1}^k(\frac{1}{1-\epsilon})^{t}) \right] \\
        &= B\sqrt{N} \left[ (1-\epsilon)^k(1+\frac{(\frac{1}{1-\epsilon})^k - 1}{\epsilon} \right] = B\sqrt{N} \left[ (1-\epsilon)^k(1-\frac{1}{\epsilon}) + \frac{1}{\epsilon}\right] \leq \frac{B\sqrt{N}}{\epsilon}
    \end{align*}

    % \begin{align*}
    %     \|x_k\|_2 \leq \sqrt{N}(1-\varepsilon)^k\cdot B + \sum_{t=0}^{k-1} \sqrt{N}(1-\varepsilon)^{k-t-1}\cdot B & = \sqrt{N}B \left((1-\varepsilon)^k + (1-\varepsilon)^{k-1}\frac{1-(\tfrac{1}{1-\varepsilon})^k}{1-\tfrac{1}{1-\varepsilon}}\right)\\
    %     & = \sqrt{N}B \left((1-\varepsilon)^k + (1-\varepsilon)^{k}\frac{1-(\tfrac{1}{1-\varepsilon})^k}{-\varepsilon}\right)\\
    %     & = \sqrt{N}B \left((1-\varepsilon)^k(1-\frac{1}{\varepsilon}) + \frac{1}{\varepsilon}\right) \leq \sqrt{N}B / \varepsilon.
    % \end{align*}
\end{proof}

The following lemma will be of use for proving the optimality of $PD$ parametrization.

\begin{lem}\label{prop:reduction_complex_SSM_to_real_SSM}
    Any single-layer complex SSM with state size $d$ can be reduced to a single-layer real SSM with state size $2d$.
\end{lem}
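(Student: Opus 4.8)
The plan is to use the standard \emph{realification} of complex linear algebra, which embeds $\mathbb{C}^d$ into $\mathbb{R}^{2d}$ so that complex matrix-vector products become real ones. Concretely, I would define the $\mathbb{R}$-linear bijection $\iota:\mathbb{C}^d\to\mathbb{R}^{2d}$ by $\iota(z)=\bigl(\mathrm{Re}(z)^\top,\mathrm{Im}(z)^\top\bigr)^\top$, and for any complex matrix $A=A_R+iA_I$ the block map $\rho(A)=\begin{pmatrix}A_R & -A_I\\ A_I & A_R\end{pmatrix}$. The single identity driving the whole argument is $\iota(Az)=\rho(A)\iota(z)$, together with $\mathbb{R}$-linearity of $\iota$ and the fact that $\rho$ is a ring homomorphism (in particular $\rho(I)=I_{2d}$). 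These follow from the one-line expansion $Az=(A_Rz_R-A_Iz_I)+i(A_Iz_R+A_Rz_I)$, so I would state them and not belabor the verification.

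With these maps fixed, I would define a real SSM of state size $2d$ whose matrices are the realifications of the complex ones. Set the state $\tilde x_t=\iota(x_t)$ with initial condition $\tilde x_0=\iota(x_0)$, transition $\tilde A(u_t)=\rho(A(u_t))$, and input map $\tilde B(u_t)=\bigl(\mathrm{Re}\,B(u_t)^\top,\mathrm{Im}\,B(u_t)^\top\bigr)^\top$, which is real because $u_t$ is real, so that $\iota(B(u_t)u_t)=\tilde B(u_t)u_t$. A straightforward induction then gives $\tilde x_t=\iota(x_t)$ for all $t$: the base case is the initial condition, and the inductive step is $\tilde x_t=\rho(A(u_t))\iota(x_{t-1})+\iota(B(u_t)u_t)=\iota\bigl(A(u_t)x_{t-1}+B(u_t)u_t\bigr)=\iota(x_t)$, using the key identity and linearity of $\iota$.

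For the readout I would keep the full complex output realified rather than projecting early, which lets the construction handle an arbitrary $\psi$. Define $\tilde C(u_t)=\rho(C(u_t))\in\mathbb{R}^{2D\times 2d}$, $\tilde D(u_t)=\bigl(\mathrm{Re}\,D(u_t)^\top,\mathrm{Im}\,D(u_t)^\top\bigr)^\top$, and $\tilde\psi=\psi\circ\iota^{-1}:\mathbb{R}^{2D}\to\mathbb{R}^D$; then $\tilde y_t=\tilde C(u_t)\tilde x_t+\tilde D(u_t)u_t=\iota(y_t)$ and the final output is $\tilde\psi(\tilde y_t)=\psi(y_t)=o_t$, matching the complex SSM exactly. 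The argument is essentially routine once the maps are in place; the only step demanding care is the readout, since $\psi:\mathbb{C}^D\to\mathbb{R}^D$ need not be linear or equal to $\mathrm{Re}$. Carrying both the real and imaginary parts of $y_t$ through the output and absorbing $\psi$ together with $\iota^{-1}$ into the new $\tilde\psi$ avoids any assumption on $\psi$, and is the one genuine modeling choice of the proof.
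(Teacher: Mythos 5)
Your proposal is correct and uses exactly the same realification construction as the paper: the block matrix $\begin{pmatrix}A_R & -A_I\\ A_I & A_R\end{pmatrix}$ acting on stacked real and imaginary parts. The paper's proof only treats the state recurrence, whereas you additionally carry the realification through the output map and readout $\psi$; this is a harmless (and slightly more complete) elaboration of the same idea.
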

\begin{proof}
    Let $x_t, b_t \in \mathbb C^d$ and $A_t \in \mathbb C^{d \times d}$ with $x_{t+1} = A_t x_t + b_t$. Write $x^r_t = \mathcal R (x_t)$, $x^i_t = \mathcal I(x_t)$, $b^r_t = \mathcal R (b_t)$, $b^i_t = \mathcal I (b_t)$, $A^r_t = \mathcal R (A_t)$, and $A^i_t = \mathcal I (A_t)$. Then
    \begin{align*}
        x^r_{t+1} + i x^i_{t+1} & = x_{t+1} = A_t x_t + b_t = (A^r_t + i A^i_t) (x^r_{t} + i x^i_{t}) + b^r_t + i b^i_t\\ & = A_t^r x_t^r - A_t^i x_t^i + b_t^r + i(A_t^i x_t^r + A_t^r x_t^i + b_t^i).
    \end{align*}
    In block matrix form, the complex SSM can be represented as a real SSM through
    \begin{equation*}
        \begin{pmatrix}
            x_{t+1}^r\\
            x_{t+1}^i
        \end{pmatrix}
        = \begin{bmatrix}
            A_t^r & - A_t^i\\
            A_t^i & + A_t^r
        \end{bmatrix} \begin{pmatrix}
            x_{t}^r\\
            x_{t}^i
        \end{pmatrix}
        + 
        \begin{pmatrix}
            b_t^r\\
            b_t^i
        \end{pmatrix}.
    \end{equation*}
\end{proof}

The following proposition states that for each $N\in\mathbb{N}$, one can construct an FSA which requires a state dimensionality of at least $N-1$ to emulate using an SSM, under the assumption that each automaton state has a unique vector encoding. By combining this with Proposition 2, which states that all $N$-state automata can be emulated using a $PD$-SSM with one layer and state size $N$, the optimality of the $PD$ decomposition for emulating FSAs is established.

\begin{manualprop}{3}[Optimality of PD Parametrization]
    For any $N$ there exists a finite-state automaton with $N$ states that cannot be emulated by any single-layer SSM with state size less than $N-1$ under unique state encodings.
\end{manualprop}
\begin{proof}
    Interpreting complex state size as $2d$ and real state size as $d$ for $x \in \mathbb R^d$ and $x \in \mathbb C^d$, respectively, we can invoke Lemma~\ref{prop:reduction_complex_SSM_to_real_SSM} and stick to real SSMs in the remainder of this proof. Let $N \in \mathbb N$. For $N \leq 2$ the statement is trivial, so suppose $N \geq 3$. Consider the FSA $\mathbb A$ with states $s_1, \ldots, s_N$ and input vocabulary equal to the set of states mapping to state transitions 
    \begin{equation*}
        s_x \mapsto f_x \text{ where }f_x(s_i) = 
        \begin{cases}
            s_{i+1 \!\!\!\! \mod N} & i=x\\
            s_i & i \not = x.\\
        \end{cases}
    \end{equation*}
    Now, assume by contradiction a unique (one-to-one) encoding $s_i \cong v_i \in \mathbb R^d$ with $d \leq N-2$. Then there exist two states $s_x,s_y$, w.l.o.g. set to $s_{N-1}, s_N$, such that 
    \begin{equation*}
        \mathbb V := \mathrm{span}\{v_1, \ldots, v_N\} = \mathrm{span}\{v_1, \ldots, v_{N-1}\} \text{ and } v_{N-1} = \sum_{i=1}^{N-2} \alpha_i v_i \text{ as well as } v_{N} = \sum_{i=1}^{N-2} \beta_i v_i.
    \end{equation*}
    Now, a single-layer SSM that emulates $\mathbb A$ must satisfy
    \begin{equation*}
        A_c v_i + b_c = 
        \begin{cases}
            v_i & i \not=c\\
            v_{i+1 \!\!\!\!\!\mod N} & i = c.
        \end{cases}
    \end{equation*}
    An immediate consequence is that  $\textstyle \sum_{i=1}^{N-2} \alpha_i \not = 1 \not = \sum_{i=1}^{N-2} \beta_i$, since otherwise either
    \begin{equation*}
        v_N = \textstyle A_{N-1} v_{N-1} + b_{N-1} = \sum_{i=1}^{N-2} \alpha_i (A_{N-1}v_i + b_{N-1}) + b_{N-1} (1-\sum_{i=1}^{N-2} \alpha_i) = v_{N-1} + 0
    \end{equation*}
    or
    \begin{equation*}
        v_1 = \textstyle A_{N} v_{N} + b_{N} = \sum_{i=1}^{N-2} \beta_i (A_{N}v_i + b_{N}) + b_{N} (1-\sum_{i=1}^{N-2} \beta_i) = v_{N} + 0,
    \end{equation*}
    violating the uniqueness assumption. But then
    \begin{equation*}
        \mathbb V = \mathrm{span}\{v_1 - v_{N-1}, \ldots, v_{N-2} - v_{N-1}\},
    \end{equation*}
    since $\textstyle\sum_{i=1}^{N-2} \alpha_i (v_i - v_{N-1}) = v_{N-1}(1-\textstyle\sum_{i=1}^{N-2} \alpha_i) \not=0$. We can now leverage this spanning property by noting that $A_{N} (v_i - v_{N-1}) = (v_i - v_{N-1})\ \forall i \in \{1, \ldots, N-2\}$ and thus $A_N \vert_{\mathbb V} = I \vert_{\mathbb V}$. But if $A_N$ acts as identity on $\mathbb V \ni v_N$, then $v_1 = A_N v_1 + b_N = v_1 + b_N$ and hence $b_N = 0$. But then we reach the contradiction violating uniqueness of state encodings
    \begin{equation*}
        v_{1} = A_N v_N + b_N = A_N \vert_\mathbb V v_N + b_N = I\vert_\mathbb V v_N + b_N = v_N.
    \end{equation*}
\end{proof}

The following proposition is meant to highlight the importance of assuming a restricted readout, such as done in Proposition~\ref{prop:optimality_pd_parametrization} with unique state encodings. Indeed, by using lookup tables that are exponentially large in the maximum sequence length, even a scalar SSM can emulate any finite state automaton. In practice, however, such lookup tables are entirely impractical.

\begin{manualprop}{4}[Arbitrary Precision and Readout]
    Consider $x_{t+1} = x_t + b_t$ where $b_t = u_t \cdot k_t$ with $u_t \in \mathbb Q$ input encodings and $k_t = \sqrt{p_t} \in \mathbb R /\mathbb Q$ time encodings, where $p_t$ is the $t$-th prime. Then, any FSA can be encoded into this scalar-valued SSM under an appropriate lookup table as readout.
\end{manualprop}
\begin{proof}
    According to \cite{jaffe2007linearly}, $k_t$ is linearly independent over $\mathbb Q$, i.e., for $u_i \in \mathbb Q$ it holds that $u_1 k_1 + \cdots + u_n k_n = 0 \iff u_1 = \cdots = u_n = 0.$ Therefore, no two differing input sequences $u_1, \ldots, u_n$ and $v_1, \ldots, v_m$ will result in the same state representation $\textstyle x_t = x_0 + \sum_i u_i k_i$. Now, a simple identification of every FSA state with all the state representations $x_t$ of associated input sequences that lead to said FSA state shows universal expressivity under arbitrary readout. 
\end{proof}

\section{Experimental Setup}

\subsection{Hyperparameter Selection}

\paragraph{FSA Emulation}

On FSA emulation, Table~\ref{tab:fsa-table}, we did not perform a hyperparameter grid search. We re-used the fixed hyperparameters which were used to evaluate all of the baseline methods, as per~\cite{walker_2025_structured}. In contrast to the baseline methods which were evaluated using various state sizes including 128, we only evaluated our model using only state size $128$. We used Adam with the default parameters (0.9, 0.999).

On experiments with non-solvable groups in Table~\ref{tab:nonsolvable}, we ran a learning rate grid search in $\{1e{-4}, 5e{-4}, 1e{-3}, 5e{-3}, 1e{-2}\}$, and report the best validation accuracy over any of the 3 random seeds and hyperparameter configurations. The state size was 128. We used Adam with the default parameters (0.9, 0.999).

\paragraph{Time-Series Classification}

We performed the following grid search, following~\cite{rusch2025oscillatory}: Learning rate in $\{1e{-5}, 1e{-4}, 1e{-3}\}$, number of layers in $\{2, 4, 6\}$, state dimension in $\{16, 64, 128\}$ and embedding dimension in $\{16, 64, 128\}$. We used Adam with the default parameters (0.9, 0.999).

\paragraph{Long-Range Arena}

We performed a grid search defined by all combinations of the following values: Learning rate in $\{1e{-5}, 5e{-5}, 1e{-4}, 5e{-4}, 1e{-3}\}$, weight decay in $\{0.001, 0.01, 0.1, 0.05\}$, and dropout in $\{0.0, 0.1, 0.3\}$. On each task, we report the average accuracy over three random seeds. The embedding dimension and state size were both fixed to 128, with the exception of the Retrieval dataset in which we reduced it to 64. We used Adam with the default parameters (0.9, 0.999). The hyperparameters of the models we compare with were obtained via an extensive Bayesian hyperparameter search, as outlined in~\cite{soydan_2024_simplified}.

\paragraph{Natural Language State Tracking}

The setup of this task is equivalent to symbolic state tracking, only we train for 100,000 steps or until convergence on a validation set with length 40. We only ran a grid search over learning rates in $\{1e-4, 5e-4, 1e-3\}$. State size was fixed to $128$, and the number of dictionary matrices (K) was $6$.

% On the long-range arena, we use the hyperparameters in Table~\ref{tab:lra_hyperparams}. 

% \begin{table}[h]
%     \centering
%         \caption{LRA hyperparameters. Depth is the number of layers, $N$ the state size, $D$ the embedding dimension, $WD$ the weight decay, $p_{\text{drop}}$ the dropout rate, and $B$ the batch size. }
%     \label{tab:lra_hyperparams}
%     \begin{tabular}{|c|c|c|c|c|c|c|c|c|}
%         \hline
%         Task & Depth & $N$ & $D$ & $LR$ & $WD$ & $p_{\text{drop}}$ & $B$ & Epochs \\ \hline
%         ListOps & 4 & 128 & 128 & 0.0001 & 0.1 & 0.0 & 4  & 40 \\ \hline
%         Image & 4 & 128 & 128 & 0.0001 & 0.05 & 0.1 & 10 & 50 \\ \hline
%         Text & 4 & 128 & 128 & 0.0001 & 0.001 & 0.3 & 4 & 40 \\ \hline
%         Retrieval & 4 & 64 & 64 & 0.001 & 0.05 & 0.0 & 16 & 40 \\ \hline
%     \end{tabular}
% \end{table}

\subsection{Training Times}

We report the training time for representative benchmarks using \texttt{NVIDIA A100 40GB} GPUS.

\paragraph{Natural Language State Tracking} This benchmark uses an inefficient PyTorch implementation of the PD-SSM, resulting in extended training times. The experiment can be executed significantly more efficiently using the provided JAX implementation.

On the \emph{Parity} automaton:
\begin{itemize}
    \item PD : 7.2 hours
    \item Complex diagonal: 70.3 hours
    \item Real diagonal: 68.6 hours
\end{itemize}

On the $A_5$ automaton:
\begin{itemize}
    \item PD : 82.8 hours
    \item Complex diagonal: 68.4 hours
    \item Real diagonal: 63.6 hours
\end{itemize}

\paragraph{Long-Range Arena}

\begin{itemize}
    \item CIFAR: 2.5 hours
    \item ListOps: 9.3 hours
    \item IMDB: 6.6 hours
    \item AAN: 34 hours
\end{itemize}

\paragraph{UEA Time-Series} All results can be obtained within several hours on the specified GPU.

\section{Additional Results}\label{app:results}

\subsection{Runtime Measurements}

\paragraph{Measurement Details} The runtime measurements were obtained using optimized and compiled JAX code\footnote{\hyperlink{https://docs.jax.dev/en/latest/quickstart.html}{https://docs.jax.dev/en/latest/quickstart.html}}. 
We did not implement custom system-aware methods, which can certainly make a difference in practice~\citep{gu_mamba_2023, dao2023flashattention2}.
We assume that our runtime measurement setup is reflective of most researchers' typical use cases, and should provide an equal testbed for all the methods.
All of the runtime measurements were obtained on an \texttt{NVIDIA A100 80GB} GPU. The code is implemented in \texttt{JAX}, version 0.4.24. The parallel scan relies on the \texttt{jax.lax.associative\_scan} primitive. The computation is parallelized across the batch dimension and sequence length using \texttt{jax.vmap}, and is finally just-in-time compiled.
At the time of writing, the parallel scan primitive is a work-in-progress in PyTorch. In our experience, the JAX version was significantly more efficient.

\paragraph{State Representation Size Matched Comparison} We extend the comparison of our PD model with two representative methods, a time-variant SSM with unstructured but column-normalized transition matrices (SD-SSM)~\cite{terzic2025sdssm} \footnote{Column normalization is a crucial step to ensure system stability when using unstructured transition matrices, and has been implemented by dividing the matrix elements by the $L^p$ (quasi-)norm of the corresponding column~\citep{fan_advancing_2024,terzic2025sdssm}. As a consequence of the Gershgorin disc theorem and basic inequalities, a matrix with $L^p$-normalized columns is guaranteed to have spectral radius less than 1 whenever $p<1$.}, and a time-variant SSM with diagonal transition matrices generated using our MLP generation $A(u_t)=D(u_t)$.
Below, we extend the results from Figure~\ref{fig:runtime_dimension} by considering sequence lengths $256$ and $512$. Concretely, the dimension of SD-SSM is again set to twice that of the other models to account for the discrepancy between the real and complex-valued state representations. The results are shown in Figure~\ref{fig:state_half_runtime}.

\begin{figure}[ht]
    \centering
    \begin{subfigure}[b]{0.48\textwidth}
        \centering
        \includegraphics[width=\textwidth]{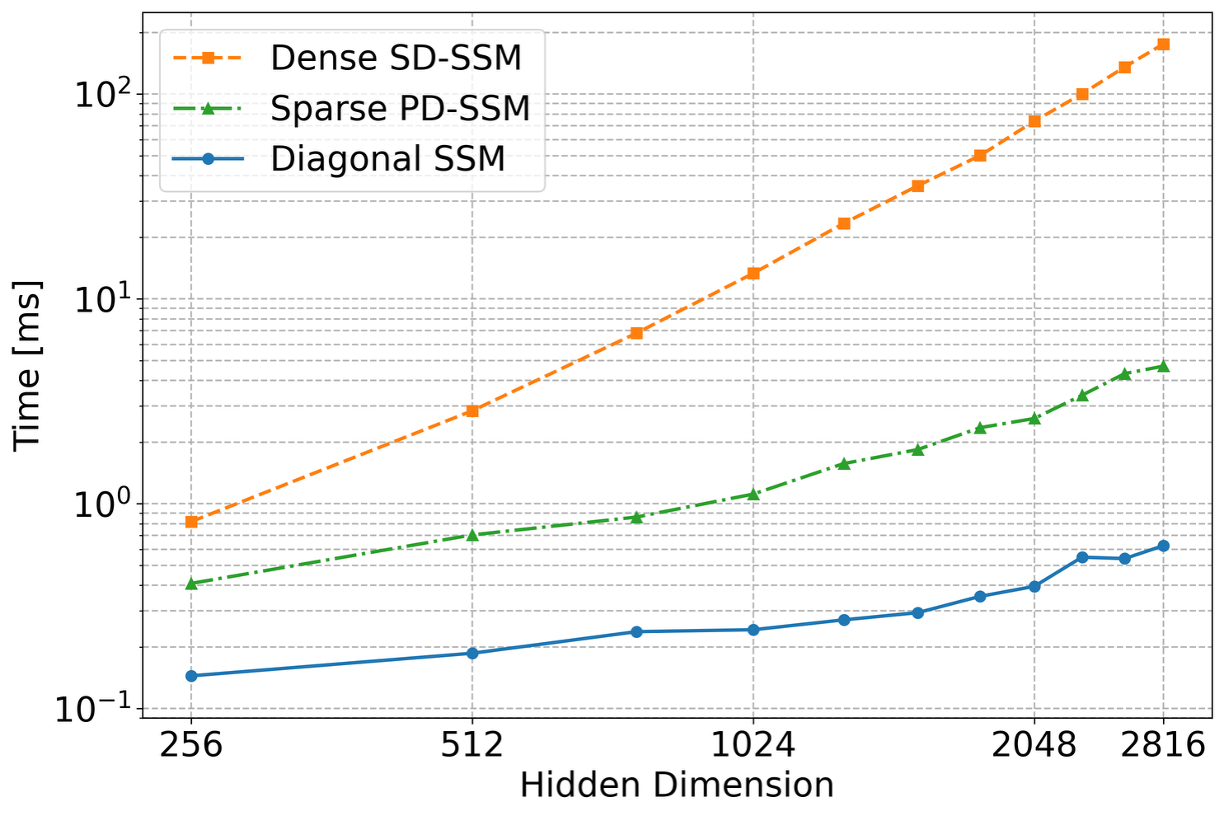}
        \caption{Sequence Length 256}
        \label{fig:state_half_256}
    \end{subfigure}
    \hfill
    \begin{subfigure}[b]{0.48\textwidth}
        \centering
        \includegraphics[width=\textwidth]{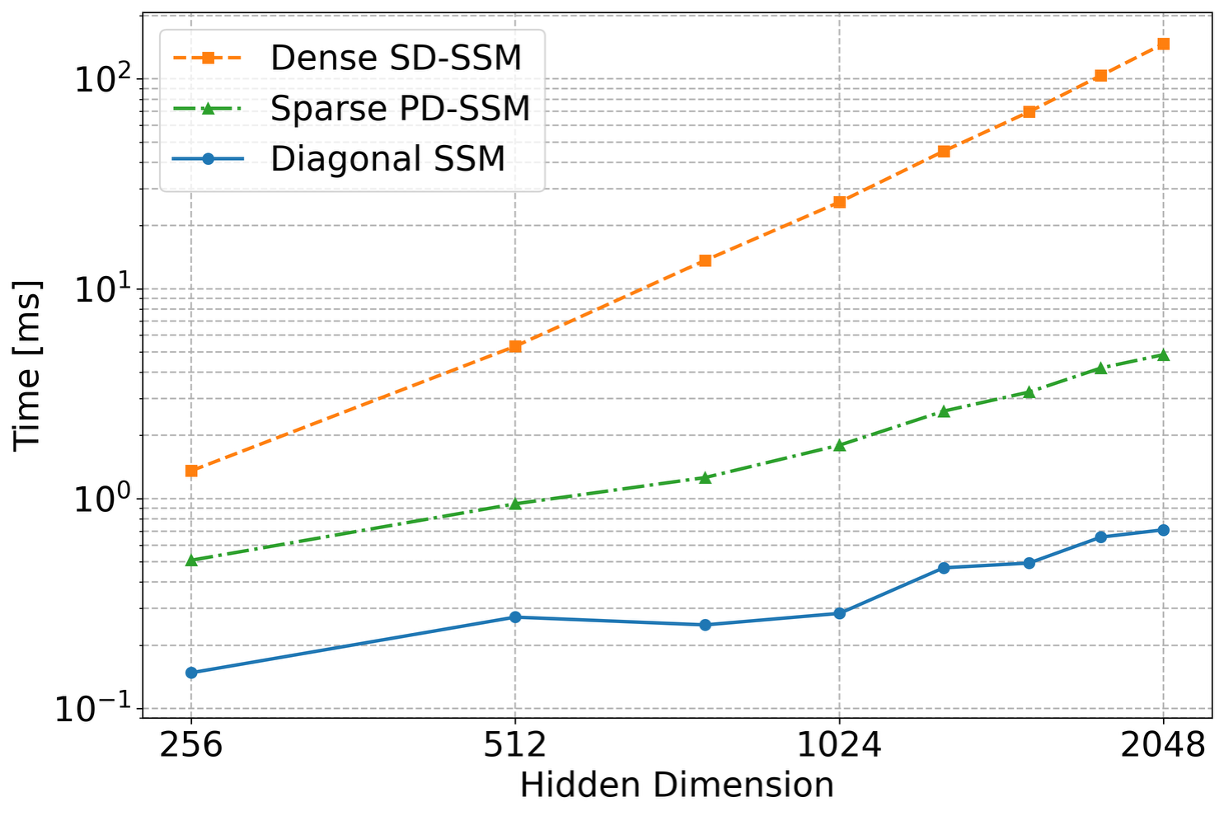}
        \caption{Sequence Length 512}
        \label{fig:state_half_512}
    \end{subfigure}
    \caption{Runtimes with state dimension of SD-SSM set to twice that of other models.}
    \label{fig:state_half_runtime}
\end{figure}

From Figure~\ref{fig:runtime_dimension}, at sequence length 64 and embedding dimension 5632, the SD-SSM exhibits an around $71\times$ slowdown compared to PD-SSM. The PD-SSM in turn is around $7\times$ slower than the diagonal model.

At hidden dimension $D=2048$, the relative speed-up of PD-SSM over SD-SSM is $18.8 \times$ for $L=64$,  $28.3 \times$ for $L=256$, and $30.3 \times$ for $L=512$, an increasing trend in $L$.

\paragraph{State Size Matched Comparison} Previously, the unstructured SSM had double the state dimensionality of the other two models to account for the discrepancy between the real and complex-valued state representations. We now set the state dimensionalities of all models to be equal and again evaluate the scaling as a function of embedding size under sequence lengths $64$ and $256$. The results are shown in Figure~\ref{fig:state_match_runtime}.

\begin{figure}[t]
    \centering
    \begin{subfigure}[b]{0.48\textwidth}
        \centering
        \includegraphics[width=\textwidth]{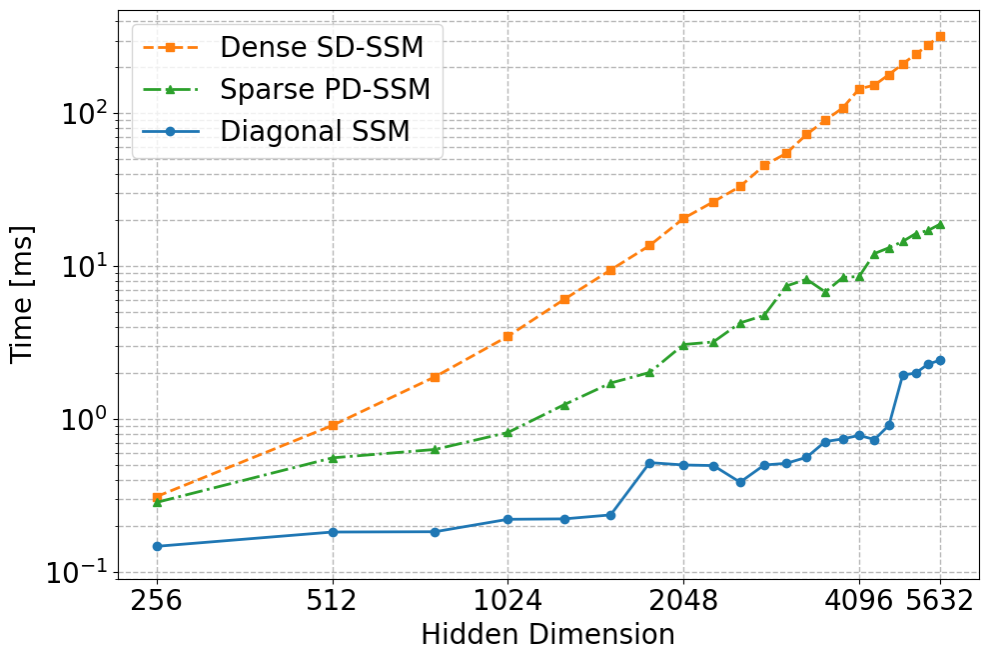}
        \caption{Sequence Length 64}
        \label{fig:state_match_64}
    \end{subfigure}
    \hfill
    \begin{subfigure}[b]{0.48\textwidth}
        \centering
        \includegraphics[width=\textwidth]{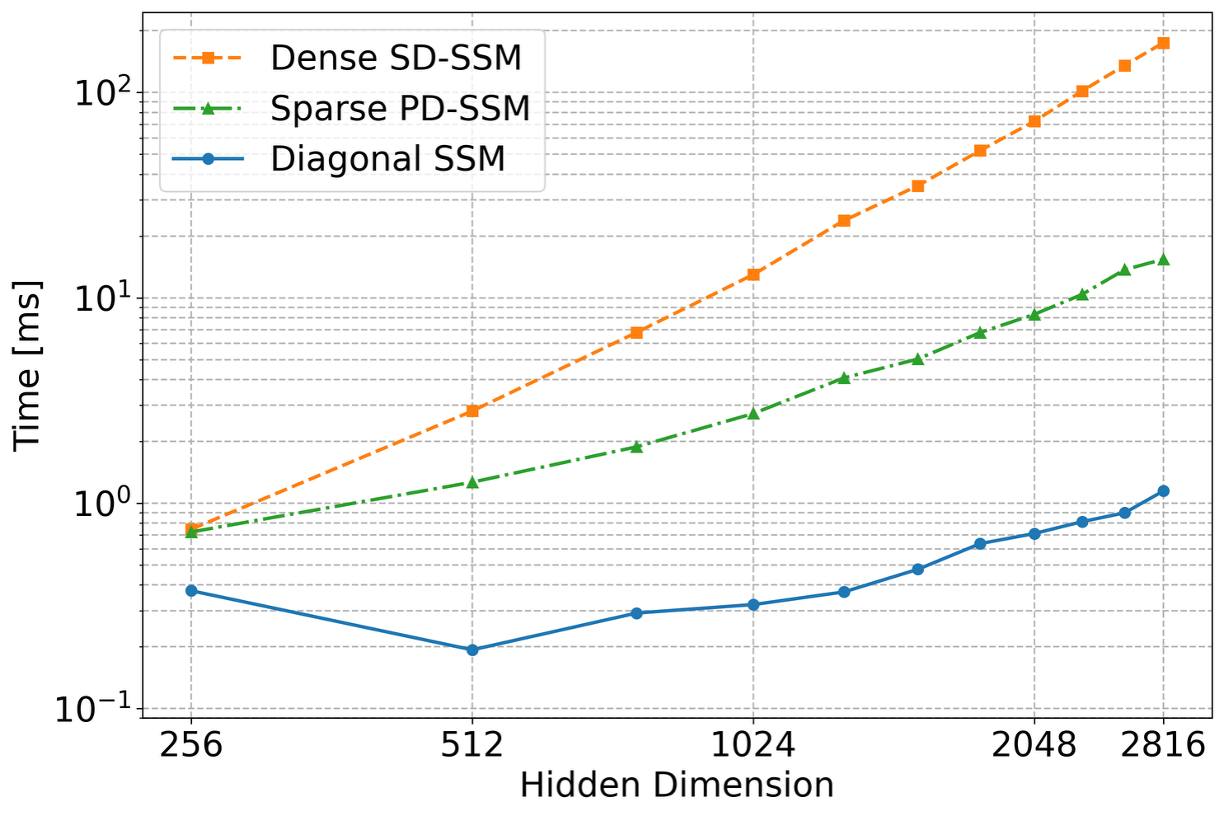}
        \caption{Sequence Length 256}
        \label{fig:state_match_256}
    \end{subfigure}
    \caption{Runtime measurements with equal state size.}
    \label{fig:state_match_runtime}
\end{figure}

At sequence length 64 and embedding dimension 5632, the SD-SSM exhibits a $16.6\times$ slowdown compared to PD-SSM. The PD-SSM is in turn $7.6\times$ slower than the diagonal model, a similar ratio as in the state matched setup above but now with double the state size.

At hidden dimension $D=2048$, the relative speed-up of PD-SSM over SD-SSM is $6.8 \times$ for $L=64$,  $8.8 \times$ for $L=256$, and $9.7 \times$ for $L=512$ (not shown). Again, an increasing trend with $L$.
%The relative scaling is not significantly influenced by the sequence length.

\paragraph{Parameter Matched Comparison} Given state size $N$, embedding dimension $D$ and $K$ transition matrices in the dictionary where applicable, the models have the following total parameter cost: $N(2D +KN) + KD$ for SD-SSM, $N(6D + 2N + KN + 4) + KD$ for PD-SSM, and $N(6D + 2N + 4)$ for the diagonal SSM.
Let us denote the state size of SD-SSM as $N_r$ and that of PD-SSM as $N_c$. Suppose further that $D=N_r$ and $K=6$, as used in our measurements. To equalize the parameter cost of PD-SSM with SD-SSM, $N_c$ must be set as:
\[
N_c=\left\lfloor \frac{-(6N_r+4)+\sqrt{(6N_r+4)^2+192N_r^2}}{12} \right\rfloor
\]
With this, given for example $N_r=D=2048$, the corresponding $N_c$ is $1552$.
The corresponding relationship for the state dimensionality of the diagonal model $N_d$ given $D=N_r$ and $K=6$ is:
\[
N_d= \left\lfloor \frac{-(6N_r+4)+\sqrt{(6N_r+4)^2+64N_r^2+48N_r}}{4} \right\rfloor
\]
which for $N_r=2048$ exactly sets $N_d=2048$.
This adapted scaling results in the Figure~\ref{fig:param_match_runtime}.

\begin{figure}[t]
    \centering
    \begin{subfigure}[b]{0.48\textwidth}
        \centering
        \includegraphics[width=\textwidth]{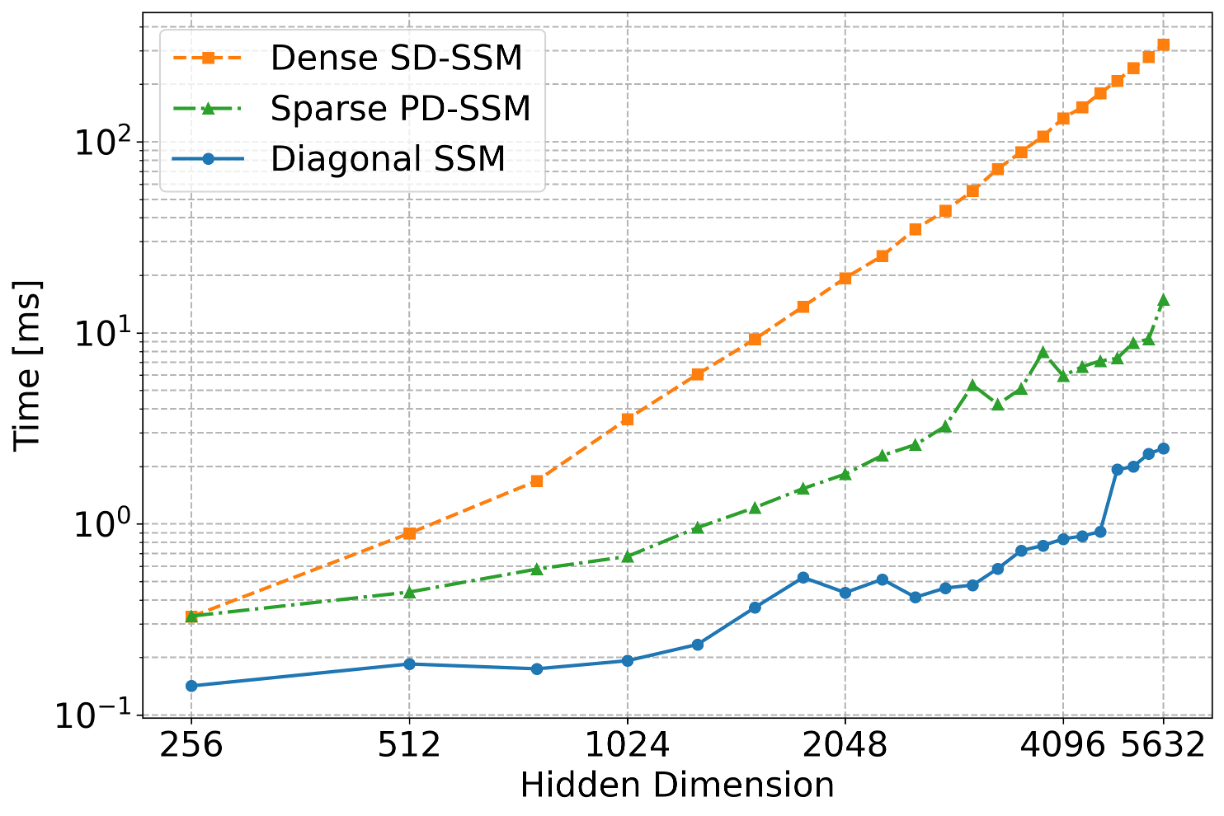}
        \caption{Sequence Length 64}
        \label{fig:param_match_64}
    \end{subfigure}
    \hfill
    \begin{subfigure}[b]{0.48\textwidth}
        \centering
        \includegraphics[width=\textwidth]{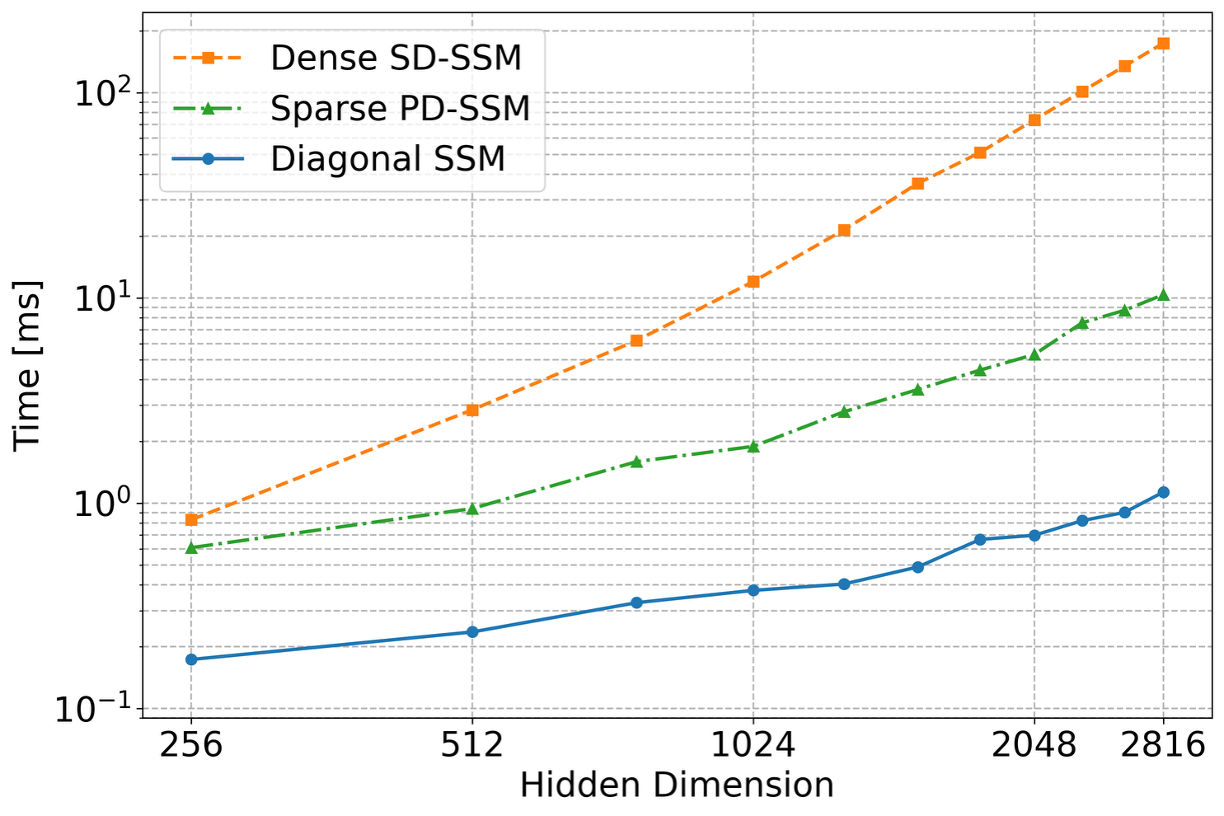}
        \caption{Sequence Length 256}
        \label{fig:param_match_256}
    \end{subfigure}
    \caption{Runtime measurements in a parameter-matched setting.}
    \label{fig:param_match_runtime}
\end{figure}

At sequence length 64 and embedding dimension 5632, the SD-SSM exhibits a $21.7\times$ slowdown compared to PD-SSM. The PD-SSM is now around $6.0\times$ slower than the diagonal model.

At hidden dimension $D=2048$, the relative speed-up of PD-SSM over SD-SSM is $10.6 \times$ for $L=64$,  $13.9 \times$ for $L=256$, and $15.5 \times$ for $L=512$ (not shown). Once again, an increasing trend with $L$.

\paragraph{Effect of $P$ Generation Under Equal State Size} In this measurement, we compare the runtime of the PD-SSM and the diagonal SSM with a version of PD-SSM that receives pre-computed random sparse $P$ matrices as input.
In this scenario, we set the state size of all models to be equal. 
The results are reported in Figure~\ref{fig:p_overhead}.
\begin{figure}[t]
    \centering
    \begin{subfigure}[b]{0.48\textwidth}
        \centering
        \includegraphics[width=\textwidth]{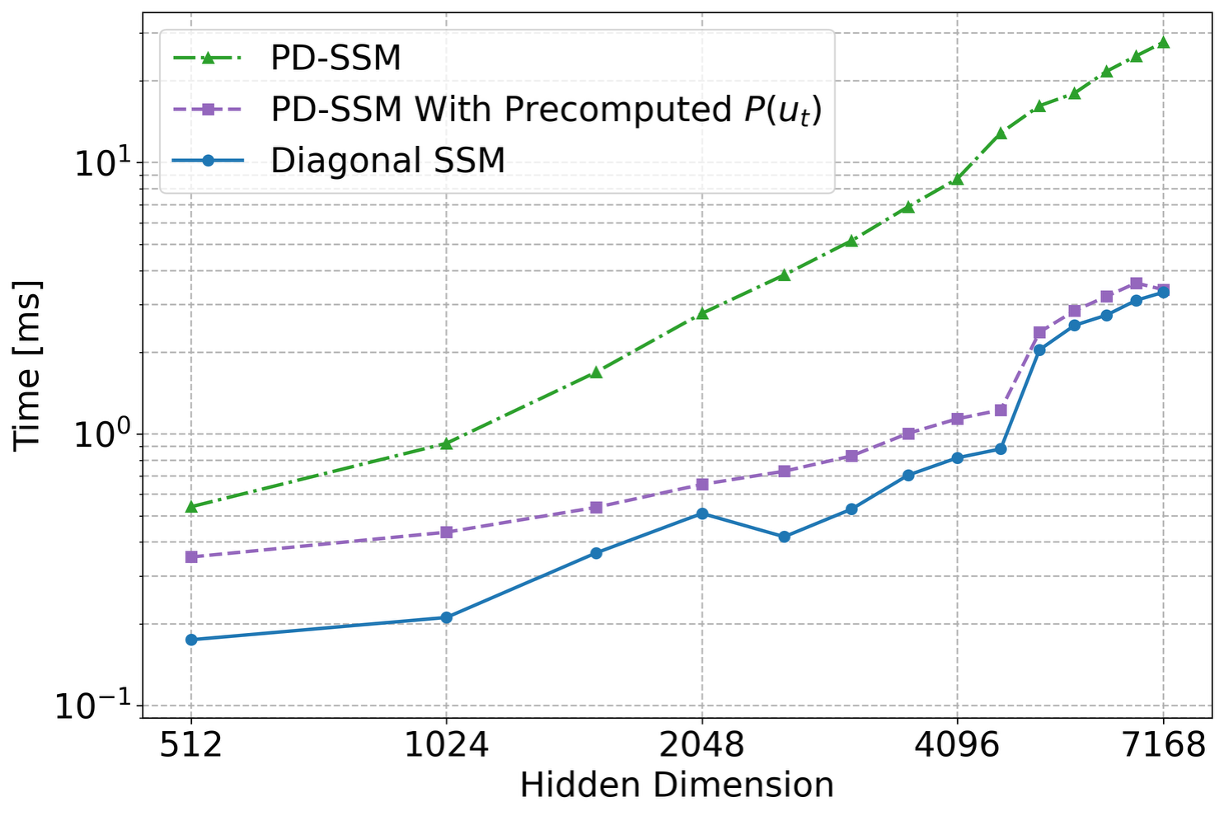}
        \caption{Runtime comparison with length $L=64$}
        \label{fig:p_log_64}
    \end{subfigure}
    \hfill
    \begin{subfigure}[b]{0.48\textwidth}
        \centering
        \includegraphics[width=\textwidth]{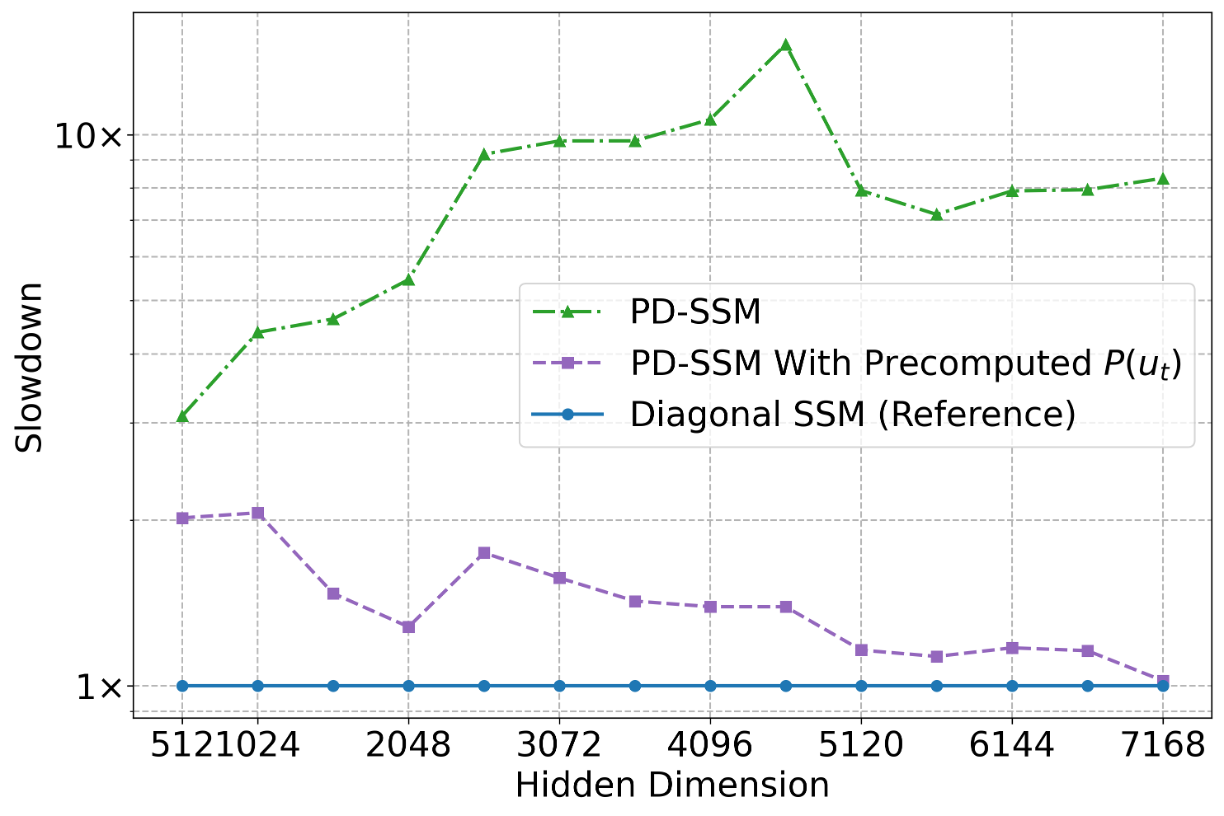}
        \caption{Slowdown w.r.t. the diagonal baseline, $L=64$.}
        \label{fig:pd_rand_64}
    \end{subfigure}
    \caption{Measuring the overhead of generating $P$ matrices.}
    \label{fig:p_overhead}
\end{figure}
As predicted in Table~\ref{tab:ssm_fsa_tracking_expressivity}, the $PD$ parallel scan incurs a constant, even diminishing overhead compared to the diagonal scan. The overhead stems primarily from the generation of the $P(u_t)$ matrices.

% \begin{table}[h!]
% \centering
% \caption{Runtimes [s]  with state size $N$= 1024, batch size $B$= 1}
% \label{tab:P_runtime_overhead}
% \begin{adjustbox}{max width=\textwidth}
% \begin{tabular}{@{}lrrrrrrrrr@{}}
% \toprule
% \textbf{Sequence Length} & \textbf{128} & \textbf{640} & \textbf{1152} & \textbf{1664} & \textbf{2176} & \textbf{2688} & \textbf{3200} & \textbf{3712} & \textbf{4224} \\
% \midrule
% Unstructured SSM  & 0.00625 & 0.02997 & 0.05402 & 0.07811 & 0.10255 & 0.12690 & 0.15140 & 0.17588 & 0.19948 \\
% PD-SSM (full) & 0.00136 & 0.00492 & 0.00846 & 0.01194 & 0.01810 & 0.02232 & 0.02644 & 0.03065 & 0.03469 \\
% PD-SSM (Random $P(u_t)$)  & 0.00045 & 0.00084 & 0.00097 & 0.00146 & 0.00185 & 0.00212 & 0.00241 & 0.00275 & 0.00258 \\
% Diagonal SSM & 0.00027 & 0.00041 & 0.00054 & 0.00060 & 0.00090 & 0.00086 & 0.00113 & 0.00159 & 0.00170 \\

% \bottomrule
% \end{tabular}
% \end{adjustbox}
% \end{table}

\subsection{Varying Mamba Depth on \texorpdfstring{$(A_5,2)$}{(A5,2)}}

Mamba~\citep{gu_mamba_2023} uses non-negative real-valued transition matrices, which are in theory highly restrictive even for emulating solvable automata~\citep{grazzi2024unlocking}\footnote{Such models must converge to a steady state upon application of a constant input. This goes contrary to FSA emulation. For example, \emph{Parity} never converges upon repeated application of a 1.}. However, we can observe that several layers of the model can in fact learn to emulate the non-solvable automaton on in-domain lengths, as shown in Figure~\ref{fig:mamba_multilayer}. The model however does not exhibit any substantial length generalization.

\begin{figure}[ht]
    \centering
    \includegraphics[width=0.7\linewidth]{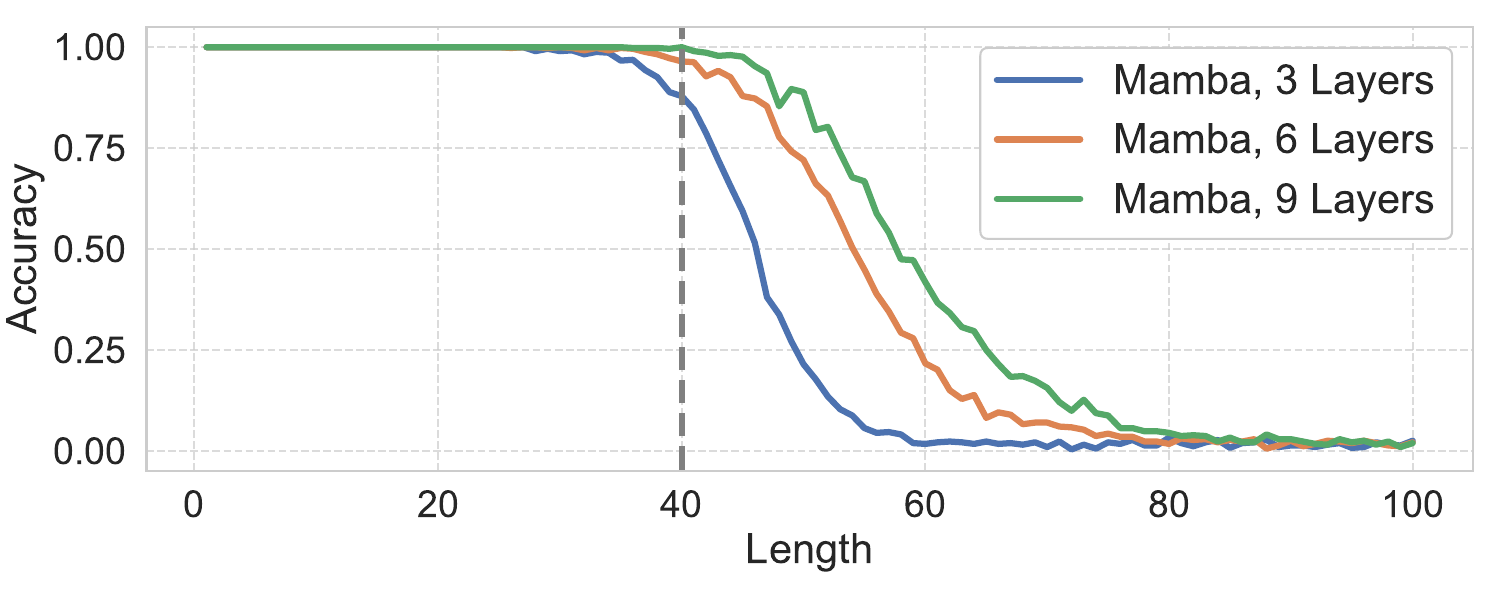}
    \caption{Mamba with varying number of layers trained to emulate the non-solvable $(A_5,2)$ automaton.}
    \label{fig:mamba_multilayer}
\end{figure}

%
% \begin{table*}[h]
% \centering
% \resizebox{1.0\textwidth}{!}{
% \begin{tabular}{lcccc}
% \toprule
% Task            &         Mamba        &     $\mathbb{C}$ Diagonal & SD-SSM  &  PD \\ 
% \cmidrule(r){1-1} \cmidrule(r){2-2} \cmidrule(r){3-3} \cmidrule(r){4-4} \cmidrule(r){5-5} 
% %\multicolumn{5}{l}{\cite{deletang_neural_2023}} \\
% Parity                  & 50.3   & 99.3 &  100  & ---  \\  
% Arithmetic              & 20.1   & 22.1 &  99.9 & ---  \\ 
% $C_2 \times C_{30}$     & ---   & 100   & 100   & ---  \\
% $D_{30}$                & ---   & 61.0  & 100   & ---  \\
% $A_5$                   & ---   & 8.3   & 100   & ---  
% \\ \bottomrule       
% \end{tabular}
% }
% \caption{Maximum/average length generalization accuracy (\%) on FSA emulation tasks over three random seeds using single-layer models.}
% \label{tab:main_table}

\subsection{PD-SSM Ablations on \emph{Arithmetic}}

We ablate several design choices on the \emph{Arithmetic} task from~\cite{deletang_neural_2023}. 
\paragraph{Stochastic $P$ Generation and Inference With Dense $P$ Matrices:} The first ablation is centered on the generation of $P$ matrices, in particular the following \emph{deterministic} equations:
\begin{align*}
    P &= \text{hardmax}(M)\in\{0,1\}^N \text{ where } \text{hardmax(x)}_{i} := \mathds{1}_{i = \arg\max_j x_j} \\
    \frac{\partial P}{\partial M} &= \frac{\partial \text{hardmax} (M)}{\partial M} \approx \frac{\partial \text{softmax}(M)}{\partial M} 
\end{align*}
The dependence on $u_t$ as well as the indices are omitted for notational simplicity.
Typically, straight-through gradient approximators, an example of which is our approximation of the hardmax via a softmax, are used in conjunction with categorical sampling schemes~\citep{paulus_rao_2021, gumbel_softmax, bengio_stochastic_2013}. 
In particular, a widely used \emph{stochastic} scheme takes the following form, called \emph{Gumbel max}:
\begin{align*}
    P&=\text{hardmax}(M+G)\in\{0,1\}^N\\
    \frac{\partial P}{\partial M} &= \frac{\partial \text{hardmax} (M+G)}{\partial M} \approx \frac{\partial \text{softmax}(M+G)}{\partial M} \\
    G_i &\sim \mathrm{Gumbel}(0, 1), i\in[N]
\end{align*}
Given the prominence of this scheme, we test its effectiveness on learning to emulate the \emph{Arithmetic} automaton.
Simultaneously, we test the effectiveness of using dense $P$ matrices during inference. 
While the $P$ matrices must be sparse in order to ensure high efficiency during parallel training, at inference, the structure of the $P$ matrix can be relaxed\footnote{The memory footprint of dense $N\times N$ matrices is $\Theta(N^2 L)$ during training but only $\Theta(N^2)$ during inference.} .
We test the effectiveness of employing the following $P$ matrix generation during inference, $P_{:,j}=\text{softmax}(M_{:,j})\in\mathbb{R}^N$.
This is optionally combined with stochasticity as shown above.
The best obtained results with all four combinations of deterministic/stochastic $P$ generation and hardmax/softmax $P$ activation are shown in Figure~\ref{fig:gumbel_softmax_ablation}.
\begin{figure}[ht]
    \centering
    \includegraphics[width=1.0\linewidth]{}
    \caption{$P$ parametrization ablations on the \emph{Arithmetic} FSA with 1 layer of PD-SSM with a linear readout. The dashed vertical line indicates the maximum training length.}
    \label{fig:gumbel_softmax_ablation}
\end{figure}

We can see that introducing stochasticity and relaxing sparsity both have detrimental effects on accuracy. 

\paragraph{Nonlinear Readout Experiments:} The second ablation is centered on the readout function $\psi(x_t)$. As previously stated, for our state-tracking experiments, the readout function is defined as
\begin{align*}
    \psi(x_t)=W^{Read}(Re\{x_t\}||Im\{x_t\})+b^{Read}
\end{align*}
with $||$ denoting vector concatenation.
Typically, however, neural networks interleave sequence processing layers with nonlinearly activated feed-forward networks~\citep{vaswani_attention_2017}, as opposed to the affine transformation that we applied.
We test the effect of a nonlinear readout, defined as 
\begin{align*}
    \psi(x_t)=W^{Read,O}\sigma_{Gelu}(W^{Read,I}(Re\{x_t\}||Im\{x_t\})+b^{Read,I})+b^{Read,O}
\end{align*}

We repeat all of the experiments that we report in Figure~\ref{fig:gumbel_softmax_ablation}, but now with the nonlinear readout. We see in Figure~\ref{fig:mlp_gumbel_softmax_ablation} that the results are significantly worse than with a linear readout. No method performs well even on in-domain lengths. 

\begin{figure}[ht]
    \centering
    \includegraphics[width=1.0\linewidth]{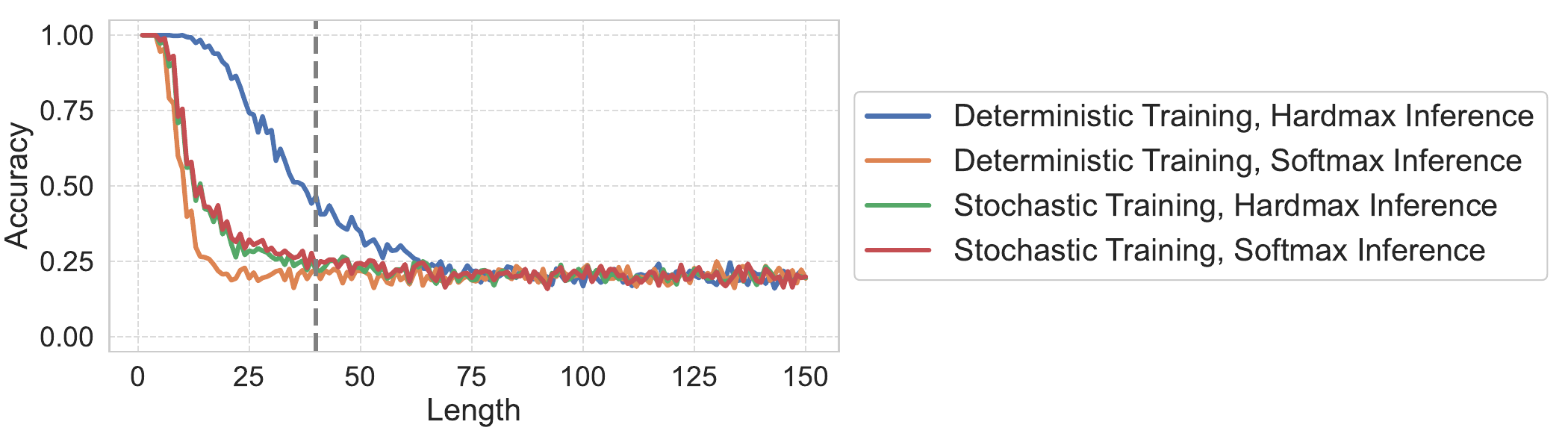}
    \caption{$P$ parametrization ablations on the \emph{Arithmetic} FSA with 1 layer of PD-SSM with a nonlinear readout. The dashed vertical line indicates the maximum training length.}
    \label{fig:mlp_gumbel_softmax_ablation}
\end{figure}

\section{Related Work}

% Model design, high-level
\paragraph{General Overview of SSMs} LTI SSMs have been proposed as a more scalable alternative to Transformers that is also more performant on long sequences~\cite{gu_hippo_2020,gu_combining_2021,gu_efficiently_2022,gu_parameterization_2022,gupta_diagonal_2022,smith_simplified_2023,orvieto_resurrecting_2023}.  
A subsequent analysis showed promising language modeling performance with hybrid architectures based on gated SSMs and Attention~\cite{ma2022mega, fu_hungry_2023}.
Time-variant hybrid systems were then shown to perform significantly better on practical tasks with the Mamba SSM which utilizes diagonal transition matrices~\cite{gu_mamba_2023}.
Consequently, a line of work has demonstrated that hybrid models utilizing diagonal time-variant SSMs and attention exhibit better generalization to longer sequences, better general language modeling and reasoning performance~\citep{ren_2024_samba, de_griffin_2024, lenz_2025_jamba, wu2025transxssm, waleffe_2024_empirical}.

% SSM formal theoretical
\paragraph{Overview of Formal Results and SSM Matrix Structures} Concurrently, formal analyses of time-varying SSMs have shown limitations of the diagonal transition matrix structure~\cite{merrill_illusion_2024,cirone_theoretical_2024}, spurring a line of work investigating the utility of block-diagonal and unstructured matrices~\cite{fan_advancing_2024,terzic2025sdssm}, as well as more expressive structured transition matrices, most notably diagonal plus low-rank~\cite{yang2024parallelizing,grazzi2024unlocking,peng_2025_rwkv,siems_2025_deltaproduct}, with the original inspiration stemming from at least as far back as fast-weight programmers~\cite{schlag2021linear, schlag2021learning, schmid_1992} whose computational structure contains products of generalized Householder matrices~\cite{yang2024parallelizing}, which enables the adoption of early algorithmic optimizations of such procedures~\cite{wy_rep_1087}. 
More recently,~\cite{movahedi2025fixedpoint} proposes fixed-point RNNs that can dynamically trade off expressivity with efficiency.

\paragraph{Details of Formal Results} As a lower bound on diagonal SSMs, \cite{sarrof2024expressivecapacitystatespace} show that a stack of complex diagonal SSM layers can emulate any solvable automaton, with the stack depth proportional to the Krohn-Rhodes (KR) complexity of the transformation semigroup~\cite{krohn_algebraic_1965,margolis_2024_decidability}.
As an upper bound on diagonal SSMs,~\cite{merrill_illusion_2024} derives a bound on fixed-depth diagonal selective SSMs with logarithmic precision representation, placing them in the $\text{L-uniform }TC^0$ circuit complexity class. 
This complexity class encompasses solvable automata. 
DPLR matrices can emulate any automaton with depth proportional to KR-complexity and width linear in depth and arbitrary readout~\cite{siems_2025_deltaproduct}.
%
% Formal, experimental

%, with one of the first models under study being the nonlinear RNN. A nonlinear RNN can implement the dynamics of any FSA, with \cite{svete2023efficient} reviewing three different exact mappings of FSA to RNNs due to~\cite{minsky_thesis_1954, dewdney_rnn_1977, indyk_optimal_1995}. 

\paragraph{Experimental Expressivity Analyses} Experimental analysis of sequential neural networks trained on formal language tasks goes back to at least~\citep{smith_zipser_1989, hochreiter_long_1997}. The exact experimental setup we build on is a more recent one, with the set of tasks taken from~\cite{deletang_neural_2023}, and the experimental setup borrowed from~\cite{walker_2025_structured}. All of the non-solvable group word problems were generated using modified sample generation code from~\cite{liu_transformers_2023}. 

% More detail on very relevant work, a repetition of background

\end{document}